\title{Decentralized, Communication- and Coordination-free Learning in Structured Matching Markets}
\author{%
  Chinmay Maheshwari,
   Eric Mazumdar, and
   Shankar Sastry \thanks{C. Maheshwari(chinmay\_maheshwari@berkeley.edu) and S. Sastry (shankar\_sastry@berkeley.edu) are with EECS department at University of California Berkeley. E. Mazumdar (mazumdar@caltech.edu) is with CMS And Economics department at Caltech.}
}
\date{}
\begin{document}

\maketitle

\begin{abstract}
 We study the problem of online learning in competitive settings in the context of two-sided matching markets. In particular, one side of the market, the agents, must learn about their preferences over the other side, the firms, through repeated interaction while competing with other agents for successful matches. We propose a class of decentralized, communication- and coordination-free algorithms that agents can use to reach to their stable match in structured matching markets. In contrast to prior works, the proposed algorithms make decisions based solely on an agent's own history of play and requires no foreknowledge of the firms' preferences. Our algorithms are constructed by splitting up the statistical problem of learning one's preferences, from noisy observations, from the problem of competing for firms. We show that under realistic structural assumptions on the underlying preferences of the agents and firms, the proposed algorithms incur a regret which grows at most logarithmically in the time horizon. Our results show that, in the case of matching markets, competition need not drastically affect the performance of decentralized, communication and coordination free online learning algorithms.
\end{abstract}

\section{Introduction}







Online decision-making under uncertainty is one of the central problems in modern machine learning, reflecting the need for efficient and high performing algorithms for real-time learning in real-world settings. Despite being such a well-researched area, there is a broad lack of understanding of how to deploy online learning algorithms into settings in which they must compete with each other for resources or information. Indeed, while classic problems of online learning deal with trading off the exploration of possible choices and the exploitation of current knowledge (i.e., the exploration-exploitation tradeoff ~\cite{lattimore2020bandit,slivkins2019introduction}), the addition of competition adds a new axis upon which algorithms must operate~\cite{mansour2017competing,aridor2020competing}--- namely that of competing (perhaps unsuccessfully) for highly desired outcomes or settling for less desired (but also less competitive) outcomes. Broadly, speaking, the dominant approach to dealing with competition in machine learning remains to treat opponents as adversarial\cite{cesa2006prediction}, despite a long literature in economics and game theory~\cite{littman1994markov,fudenberg1998theory}
showing how agents who understand the competitive structure of problems can sometimes vastly outperform solutions based upon worst-case modeling. 

In this paper, we address the problem of online learning in competitive settings in the context of \emph{two-sided matching markets}. Two-sided matching markets \emph{match} users on one side of the market to those on the other to facilitate the exchange of goods or services.
In such settings, each user on one side of the market has an inherent preference ordering for the users on the other side of the market. Since each user seeks to find their most desired match, this results in a game in which a natural equilibrium is that of a \emph{stable matching} wherein no two users would prefer switching from their current match to each other given their preferences. In seminal work, \cite{gale1962college} proposed a simple and effective algorithm--- the \emph{Deferred Acceptance (DA) Algorithm}--- that users on one side of the market can implement to find such a solution when every user knows their own preferences. The algorithm has been widely used in examples ranging from kidney exchanges to medical resident matching where preferences can be assigned or reported to a central authority which does the matching. However, recent years have seen the emergence of a new form of \emph{online} matching markets like  online labor markets (e.g. TaskRabbit, Upwork), online dating markets (e.g. Tinder, Match.com), online crowdsourcing platforms (e.g. Amazon mechanical turk) where the users do not know their preferences apriori, and can repeatedly interact with the market to improve their match quality.

Motivated by these applications we consider a generalization of the problem studied in the seminal paper \cite{gale1962college} wherein one side of the market--- the agents--- do not know their own preferences, but are able to interact repeatedly with the market. In particular, we analyze a repeated game in which, at each round, agents can request to match with a user or firm on the other side of the market. If, at a given round, multiple agents request the same firm, the firm--- assumed to be a myopic utility maximizer--- accepts the request of its most preferred agent (who receives a noisy measurement of their utility of the match from which they can learn their preferences) and rejects the others (who receive no information about their preferences). This setup serves has been studied in a line of recent works on online matching markets~\cite{liu2020competing,liu2021bandit,sankararaman2021dominate,basu2021beyond}. 

Successful algorithms for this framework must simultaneously solve a statistical learning problem (that of learning about their own preferences) and a competitive problem (ensuring that agents get their most desired match despite the presence of other self-interested agents in the market). Previous works for addressing this problem propose algorithms that are centralized~\cite{liu2020competing} (whereby agents send their current beliefs over their preferences to a central platform which does the matching), require coordination between agents (i.e., a choreographed set of strategies to minimize rejections)~\cite{sankararaman2021dominate,basu2021beyond}, or require agents to fully observe the market outcomes of other agents \cite{liu2021bandit}. In contrast, the DA algorithm--- which we take to be the full-information benchmark to which we compare algorithms--- is (i) fully decentralized, (ii) coordination-free, and (iii) requires agents to make decisions only based upon their own history of rejections and successful matchings. Designing learning  algorithms that operate under conditions (i)-(iii) ensures scalability and privacy in large-scale systems where it is unrealistic to assume that agents can keep track of all other agents' matchings. Thus in this work we focus on the question:

\begin{quote}
   \textbf{Does there exist decentralized and coordination-free algorithms that are based only on local history of interactions which provably converges to stable matching?} 
\end{quote}


\paragraph{Contributions.}

In this work we design algorithms for learning while matching in a class of structured matching markets known as \(\alpha-\)reducible matching markets. This condition ensures that there exists an unique stable matching and encompasses many realistic preference structures including serial dictatorship and no crossing conditions \cite{Clark}. We show that the proposed algorithms incur a stable regret with respect to the unique stable matching that grows at most logarithmically in the time horizon.  
The particular contributions of this paper are: 
\begin{enumerate}[leftmargin=*]
    \item We present a general framework for the construction of decentralized, communication, and coordination-free algorithms for learning while matching. In particular, we combine index-based stochastic bandit algorithms (in particular the Upper Confidence Bounds algorithm and Thompson Sampling)                     \cite{auer2002using}, \cite{lattimore2020bandit,slivkins2019introduction} for solving the statistical problem of learning an agent's preferences with a path-length adversarial bandit algorithm \cite{bubeck2019improved,wei2018more} for dealing with the competitive problem. The resulting algorithms make are fully decentralized, and communication and coordination-free since they make use of only an agent's history of collisions, matches, and rewards to choose which firm to request at a given time.
    Furthermore the algorithms are ``any-time'' algorithms, in that they do not require knowledge of time horizon and do not require any specific parameters of the bandit instance beyond the sub-gaussian parameter of the noise. 
    \item We show that when the agents' and firms' preferences satisfy the  \(\alpha-\)reducibility condition and \emph{every} agent uses the algorithm, the regret  accumulated by any agent \(a\) against the stable match is $O\left(\frac{C_{a} |\actSet||\firmSet|log(T)}{\Delta^2}\right)$ where \(\actSet\) is the set of agents, \(\firmSet\) is the set of firms, $\Delta$ is the minimum sub-optimality gap of any agent in the market, and \(C_a\) is a constant that depends on the \(\alpha-\)reducible structure of the market.
\end{enumerate}

\paragraph{Organization}
The paper is organized as follows: In Section \ref{sec: RelatedWorks} we discuss and compare the prior literature related to the focus of this paper. In Section \ref{sec: Setting} we introduce the general problem setup, introduce matching markets and discuss the structural assumptions on the preferences of agents and firms. In Section \ref{sec: Algorithm} we present the algorithmic design paradigm along with a specific algorithm, based on Upper Confidence Bound. In Section \ref{ssec: RegretBound} we show that the algorithm incurs $O(\log(T))$ regret along with a brief sketch of the proof. In Section \ref{sec: Numerics} we study the performance of the algorithm in simulation. We conclude the paper in Section \ref{sec: Conclusion} and also provide some future research directions. The proofs of our results  are  relegated to the Appendix. Moreover, we introduce another important variant of algorithm based on  Thompson Sampling with similar results in the Appendix.

\section{Related works}\label{sec: RelatedWorks}

Sequential decision-making under uncertainty has been extensively studied in machine learning under the guise of multi-armed bandit (MAB) problems. In general, MAB problems can be split into two distinct flavors, which differ in the type of feedback agents receive. Crucially, in both problems the key is trading off exploration of actions and exploiting ones current knowledge.

In the first class of MAB problems, the stochastic MAB, playing an action results in an unbiased estimate of the utility of playing that action. Solutions to the problem can be split among two dominant algorithmic paradigms. The first, based on principle of optimism in the face of uncertainty encompasses the well known upper confidence bounds (UCB) algorithm~\cite{lattimore2020bandit,lai1985asymptotically} and its variants, while the second, based on Thompson sampling takes a Bayesian approach~\cite{russoTS,Thompson} Each of these approaches are known to have optimal performance measured in terms of  \emph{regret:} the expected cumulative utility generated from the algorithm's chosen actions compared to the expected utility that could have been generated from always choosing the best possible action (i.e., the best action that one would choose with full information)~\cite{lattimore2020bandit,agrawal2012analysis}. In particular, these algorithms are known to incur \emph{logarithmic} regret, i.e., regret that grows at most logarithmically over time--- which is known to be optimal for this class of problems up to constant factors. In our paper we present an algorithmic framework for learning in matching markets that works with either class of algorithm, and further incurs logarithmic regret \emph{even} while dealing with competition. 

The second class of multi-armed bandit problems, coming from the literature on learning in games, seeks algorithms that can perform against arbitrary feedback sequences \cite{cesa2006prediction}. Solutions to this class of problems, known as adversarial bandit algorithms, are an active research topic. While it is well known that using simple strategies like multiplicative weights can guarantee regret against the best fixed action in hindsight on the order of $\sqrt{T}$ against worst-case adversaries \cite{cesa2006prediction}, designing algorithms that can improve upon this when adversaries are \emph{not} worst case remains an open research problem. In this paper we leverage advances on the development of \emph{path-length} adversarial regret algorithms that address this problem and guarantee regret that directly depends on the amount of variation an adversary presents~\cite{bubeck2019improved, wei2018more}. 

We briefly remark that there exists several lines of research on multi-agent bandits. One of them is on multi-agent bandits with collisions (with applications primarily in the area of spectrum sharing in wireless networks\cite{liu2010distributed,kalathil2014decentralized,rosenski2016multi,lugosi2021multiplayer,MAMAB}). In such models the arms do not have preferences and if more than one agents collide at any arm then no one receives any utility or attains maximum possible loss. However, these models differ from us since we consider that both sides of markets have preference over one another and when there is a collision only one agents gets matched. Another line of research deals with the problem of collaboratively learning an instance of multi-armed bandit \cite{buccapatnam2015information,chakraborty2017coordinated,sankararaman2019social} where agents can communicate. Note that in these settings there is no competition that is more than one agents apply at same arm at same time. 

The particular intersection of MABs and two-sided matching markets that we analyze has seen a flurry of recent works \cite{liu2020competing,liu2021bandit,basu2021beyond,sankararaman2021dominate}. To the best of our knowledge, \cite{das2005two}, presented the first numerical study on effectively using MAB algorithms to learn preferences in matching markets.  However, it was only recently that \cite{liu2020competing} rigorously formulated the bandit learning problem in the matching markets, and generalized the notion of \emph{regret} from the MAB literature to matching markets in terms of \emph{stable regret}--- i.e., the expected cumulative utility benchmarked against the expected cumulative reward that would have been received if everyone in the market requested their match in a certain stable match\footnote{Note that the stable matching need not be unique in general. Thus the stable regret has to be always specified with respect to which stable matching is being used. Typically, in literature two main stable matchings are considered namely \emph{agent optimal stable matching} and \emph{firm optimal stable matching}.}.  Moreover, they proposed a \emph{centralized} UCB-based  algorithm that facilitates the matching between agents and firms given each agents' current beliefs over their preferences and history of play, while ensuring that \(\bigo(|\actSet||\firmSet|\log(T))\) regret for a UCB based algorithm, where $\actSet$ is the set of agents, $\firmSet$ is the set of firms, and $T$ is the time horizon of the problem. 
In follow up work \cite{liu2021bandit} proposed a \emph{decentralized} bandit learning algorithm based on UCB that allows each user to take its decision in a decentralized manner and still ``converge'' to stable matching while incurring \(O(\exp(|\firmSet|^4)\log^2(T))\) regret. More recently \cite{kong2022thompson} proposed a thompson sampling based variant of \cite{liu2021bandit}. However, these algorithms requires the knowledge of outcomes at other firms at every round, leaving algorithms that are based solely on agents' own history of play as an open problem.
Concurrently, \cite{sankararaman2021dominate} proposed an algorithm that works in phases and makes use of communication between agents to coordinate agents' actions. Under this information structure the algorithm achieves \(\bigo\lr{|\firmSet|^2|\actSet|^2\log(T)}\) regret. Moreover their guarantees require that firms have homogeneous preference over the agents (also referred as \emph{serial dictatorship}). Follow-up work, \cite{basu2021beyond} improved the regret for serial dictatorship to \(\bigo\lr{|\firmSet||\actSet|\log(T)}\) by proposing a new algorithm. Additionally, they also showed that if the assumption of serial dictatorship is relaxed to a weaker structural condition then they obtain \(O(poly(|\actSet|,|\firmSet|)\log(T))\) regret. Even though the proposed algorithm in \cite{basu2021beyond} has decentralization it is a phase based algorithm,  the agents act according to a coordinated protocol at some rounds. In this paper we propose a simple, decentralized, communication and coordination free algorithm in which agents make use of their own local information to learn while matching. Unlike previous works \cite{liu2020competing,liu2021bandit,sankararaman2021dominate,basu2021beyond} where the algorithms are constructed using a UCB subroutine, we also show that our algorithmic design paradigm can be also seamlessly extended to Thompson sampling variant. 

We would also like to remark about another line of research at the intersection of multiarmed bandits and matching markets  \cite{jagadeesan2021learning}, \cite{johari2016matching} ,\cite{cen2021regret} which consider the problem of learning preferences from the perspective of a platform.

\section{Setting}\label{sec: Setting}
\label{ssec: Setup}
We define a two-sided market \(\market\) as collection of agents \(\actSet\) and firms \(\firmSet\). In the setting under consideration, we assume that every agent $a \in \actSet$ has \emph{unknown} preferences over firms $f \in \firmSet$ which are captured by utilities $\utilityAgent_a(f) \in \mb{R}$. Moreover, no two firms give the same utility to a given agent, i.e. $\utilityAgent_a(f)\ne \utilityAgent_a(f')$ if 
\(f\neq f'\).
We assume that every agent seeks to be matched to their most preferred firm, and that firms have preferences over all the agents which are also captured by utilities  $\utilityFirm_f(a)$ for each $a$ and $f$ such that no two agents give same utility to firms i.e.
\(\utilityFirm_f(a)\neq \utilityFirm_f(a')\) . Importantly, we assume that firms know their own preference orderings over agents and that there are more firms than agents, i.e.
$|\actSet|\leq |\firmSet|$.
The interaction between agents and firms happens as follows: In each time step $t=1,\ldots,\horizon$ every agent $a\in \actSet$ independently \emph{\queries} a firm $\chosenFirm_a(t)\in\firmSet$. As the agents \query\ independently, it is possible that more than one agent \queries\ the same firm \(f\).  For $f \in \firmSet$, let $\agentsPull_f(t) \defas \{a\in\actSet: \chosenFirm_a(t)=f\}$ denote the set of agents that \query \ firm $f$ at time step $t$. At each time step $t$, we assume that the firm $f$ accepts the \query\ of their most preferred agent in $\agentsPull_f(t)$ denoted by $\chosenAgent_f(t)\defas \argmax_{a \in \agentsPull_f(t)} \utilityFirm_f(a)$, and rejects the \query\ of all other agents. That agent \(\chosenAgent_f(t)\) is said to be the agent who got \emph{matched} with firm \(f\) at time \(t\). 
Moreover every matched agent receives a noisy measurement of their utility, denoted $\reward_{\textbf{a},f}$ such that
\begin{align}\label{eq: RewardModel}
\reward_{\textbf{a},f}= \utilityAgent_{\textbf{a}}(f)+\noise_{\textbf{a},f},
\end{align}
where $\noise_{\textbf{a},f}$ is a zero-mean, one-sub-Gaussian random variable. 
Meanwhile, all the agents that are rejected are said to have \emph{collided} on firm $f$, for which they receive no utility i.e. $\reward_{a,f}(t)=0$. 

We restrict that agents \emph{only} receive the following information at any time step \(t\):
\begin{enumerate}[leftmargin=*]
\item \label{enum: info1} \(\isMatched_{a}(t) = \one\lr{\text{\(a\) is matched to \(f_a(t)\)}}.\) which captures if agent \(a\) gets matched at time \(t\)
\item  \label{enum: info2} if they get matched, the noisy measurement of their utility, $U_{a,f}(t)$.
\end{enumerate}
\begin{remark}
We note that in this setup an agent does not know \emph{anything} about how other agents are performing in the market. Agents do \emph{not} observe who gets successfully matched on firms that they have requested and do not observe who they have collided with. We remark that this is \emph{the same} information structure as that assumed by the DA algorithm and is the key assumption that differentiates our work from prior work on this problem~\cite{liu2020competing,liu2021bandit,basu2021beyond,sankararaman2021dominate}.
\end{remark}

In the following subsection, we recall some important results from matching market literature that are crucial to further exposition. 
\subsection{Preliminaries on matching markets}\label{ssec: PrelimonMatching}
To analyze the matching market defined in the previous section we recall key concepts from the literature on matching markets. 
A matching $\matching: \actSet\ra \firmSet$ is an injective function such that $\matching(a)=f$ denotes that $a$ and firm $f$ are matched. 
We call a matching \emph{unstable} if there is an agent-firm tuple $(a,f)\in\actSet\times \firmSet$ such that $\utilityAgent_a(\matching(a))<\utilityAgent_a(f)$ and $\utilityFirm_f(a)>\utilityFirm_f(\matching^{-1}(f))$. In words, there is a pair $(a,f)$ who both prefer each other over their current match, such pair is called a \emph{blocking pair}. A matching is \emph{stable} if it is not unstable.
It is usually the case that a market may have multiple stable matchings. However, for the purpose of this paper we focus on markets which are \(\alpha-\)reducible, first introduced in \cite{Alcalde} and further analyzed in \cite{Clark}, that ensures there is a unique stable matching. Before formally describing this property we introduce the notion of a submarket and fixed pair.

A sub-market of $\market$ is a market $\market'$ such that $\market'=\actSet' \cup  \firmSet'$  where $\actSet'\subseteq \actSet$, $\firmSet' \subseteq \firmSet$, and $|\actSet'|\le |\firmSet'|$. Meanwhile, a pair $(a,f)\in \actSet \times\firmSet$ is a \emph{fixed pair} of market \(\market\) if $\utilityAgent_a(f)\geq\utilityAgent_a(f')$ for all $f' \in \firmSet$ and $\utilityFirm_f(a) \geq \utilityFirm_f(a')$ for all $a' \in \actSet$.
In words, a fixed pair is any agent-firm pair that prefer each other over any other options in the market.  We now define the notion of \(\alpha-\)reducibility.
\begin{definition}[$\alpha$-reducibility]
A market $\mc{M}=\actSet \cup  \firmSet$ is $\alpha$-reducible if every sub-market of \(\market\) has a fixed pair.\label{def:alpharep}
\end{definition}
{The notion of $\alpha$-reducibility is weaker than the \emph{no crossing condition} and serial dictatorship \cite{Clark}}. These conditions have been introduced in the effort to characterize the existence and uniqueness of a stable matching. In \cite{Clark} the authors show that every sub-market of of $\market$ has a unique stable matching if  $\market$ is $\alpha$-reducible.

The preceding property of \(\alpha-\)reducible markets will be crucial to obtain regret guarantees for the proposed algorithm in this paper. 
 Thus, we assume that $\market$ is $\alpha$-reducible.

\begin{remark}\label{rem: MarketDecomp}
An important property of  \(\alpha-\)reducibility assumption that is central to the subsequent analysis is that it allows us to partition the market into various sub-markets by sequentially eliminating fixed pairs. More formally, lets define 
\(\subActSet_0=\subFirmSet_0= \varnothing\) and \(\market_0 = \market\). 
Now for \(i\geq 1\) lets define inductively
\(\subActSet_i\subseteq\actSet\backslash\{\cup_{j=1}^{i}\subActSet_{j-1}\}, \subFirmSet_i\subseteq \firmSet\backslash \{\cup_{j=1}^{i}\subFirmSet_{j-1}\}\) 
be the set of agents and set of firms that constitute fixed pair in market \(\market_{i-1}\).  That is, for every agent \(a\in\subActSet_i\) there exists a unique
\(f\in\subFirmSet_i\) such that \((a,f)\) is a fixed pair of market \(\market_{i-1}\).  
The iteration evolves as
\(\market_i \defas \{\actSet\backslash\{\cup_{j=0}^{i}\subActSet_{j}\}\}\cup \{\firmSet\backslash \{\cup_{j=0}^{i}\subFirmSet_{j}\}\}\).   
 Let \(\numMarket\) be the total number of such sub-markets \(\{\market_i\}\). 
Moreover such {decomposition} of market is unique.
\end{remark}

For any agent \(a\in \actSet\) we denote by \(\stableArm_a\) its match in the unique stable matching. Furthermore, let \(\superArm_a \defas \lb{f\in\firmSet: \utilityAgent_a(f)> \utilityAgent_a(\stableArm_a)}\) be the set of firms that agent \(a\) prefers over its stable match. We call such firms \emph{super-optimal} firms for \(a\). Similarly, let \(\subArm_a \defas \lb{f\in\firmSet: \utilityAgent_a(f)< \utilityAgent_a(\stableArm_a)}\) be the set of firms which are less preferred than the stable match by agent \(a\). We call such firms \emph{sub-optimal} firms for \(a\). 
Note that we have following lemma which states a crucial property of super-optimal firms for \(\alpha-\)reducible markets.
\begin{lemma}\label{lem: SuperOptimalArm}
For any \(i\in[\numMarket]\) and agent \(a\in\subActSet_i\) the set of super-optimal firms are contained in \(\cup_{j=1}^{i-1}\subFirmSet_j\).
\end{lemma}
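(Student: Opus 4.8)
The plan is to combine the sequential fixed-pair decomposition from Remark~\ref{rem: MarketDecomp} with the very definition of a fixed pair.

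\emph{Step 1: locate $\stableArm_a$ in the decomposition.} Fix $i\in[\numMarket]$ and $a\in\subActSet_i$. By construction of the decomposition there is a (unique) firm $f^\star\in\subFirmSet_i$ with $(a,f^\star)$ a fixed pair of the submarket $\market_{i-1}$. Since the two members of a fixed pair are each other's most-preferred option within $\market_{i-1}$, they must be matched to one another in \emph{every} stable matching of $\market_{i-1}$ (otherwise $(a,f^\star)$ would be a blocking pair). Because $\market$ is $\alpha$-reducible, the iterated fixed-pair peeling reproduces exactly the unique stable matching of $\market$ and restricts correctly to each submarket (this is the content of the cited results of \cite{Clark} underlying Remark~\ref{rem: MarketDecomp}); hence $\stableArm_a=f^\star\in\subFirmSet_i$ and, more importantly for us, $(a,\stableArm_a)$ is a fixed pair of $\market_{i-1}$.

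\emph{Step 2: identify the firms still present in $\market_{i-1}$.} From the recursion in Remark~\ref{rem: MarketDecomp}, the firm side of $\market_{i-1}$ equals $\firmSet\backslash\{\cup_{j=0}^{i-1}\subFirmSet_j\}=\firmSet\backslash\{\cup_{j=1}^{i-1}\subFirmSet_j\}$, using $\subFirmSet_0=\varnothing$. Thus the lemma is equivalent to the statement that $\superArm_a$ contains no firm that still belongs to $\market_{i-1}$.

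\emph{Step 3: rule out super-optimal firms inside $\market_{i-1}$.} Let $f$ be any firm on the firm side of $\market_{i-1}$. Since $(a,\stableArm_a)$ is a fixed pair of $\market_{i-1}$, the definition of a fixed pair gives $\utilityAgent_a(\stableArm_a)\ge \utilityAgent_a(f')$ for all firms $f'$ in $\market_{i-1}$, and in particular $\utilityAgent_a(\stableArm_a)\ge \utilityAgent_a(f)$. But membership $f\in\superArm_a$ would require $\utilityAgent_a(f)>\utilityAgent_a(\stableArm_a)$, a contradiction. Hence $f\notin\superArm_a$. Consequently every super-optimal firm of $a$ must have been removed strictly before round $i$, i.e. $\superArm_a\subseteq\cup_{j=1}^{i-1}\subFirmSet_j$, as claimed.

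I expect the only subtle point to be Step~1: justifying that $\stableArm_a$ --- defined as $a$'s partner in the \emph{global} unique stable matching --- coincides with $a$'s fixed-pair partner inside the submarket $\market_{i-1}$. This rests on the $\alpha$-reducibility structure guaranteeing that the iterative elimination of fixed pairs reconstructs precisely the unique stable matching and behaves consistently on submarkets; once this identification is granted, Steps~2 and~3 are immediate from the definitions.
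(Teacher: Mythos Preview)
The paper states Lemma~\ref{lem: SuperOptimalArm} without proof, treating it as an immediate consequence of the fixed-pair decomposition in Remark~\ref{rem: MarketDecomp}. Your argument supplies precisely the natural justification: once one knows that $(a,\stableArm_a)$ is a fixed pair of $\market_{i-1}$, the defining property of a fixed pair gives $\utilityAgent_a(\stableArm_a)\ge\utilityAgent_a(f')$ for every firm $f'$ remaining in $\market_{i-1}$, so any strictly preferred firm must have been removed at an earlier stage.

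Your self-flagged subtlety in Step~1 is the only point worth a comment. The identification $\stableArm_a=f^\star$ can be made fully precise by the short induction you sketch: if $\matching$ is the unique stable matching of $\market$, then every fixed pair of $\market$ is matched under $\matching$ (else it blocks), and the restriction of $\matching$ to $\market_1$ is stable in $\market_1$ (a blocking pair in the submarket would also block in $\market$). Iterating, the restriction of $\matching$ to $\market_{i-1}$ is stable there, and hence matches each fixed pair of $\market_{i-1}$; in particular $a$ is matched to $f^\star$. This is exactly the mechanism behind the uniqueness result in \cite{Clark} that the paper invokes, so your appeal to it is legitimate. With that filled in, the proof is correct and is the argument the paper implicitly relies on.
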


An immediate conclusion of Lemma \ref{lem: SuperOptimalArm} is that it creates a hierarchy in the market. That is,
an agent \(a\in \subActSet_i\), for some \(i\in[\numMarket]\), is in a sense ``higher ranked''
than a agent \(a'\in \subActSet_j\) for \(j>i\) as the former's stable match can be super-optimal for the latter. This sort of hierarchy naturally manifests itself in the learning process where learning of agent \(a\) creates \emph{externality} for agent \(a'\). 

For ease of reference, all key notations used in paper are presented in a table in the Appendix.

\section{Description of the Algorithm}
\label{sec: Algorithm}
In this section we present a novel algorithm design principle for agents to learn about the preferences while ensuring that they perform competitively against the match that they could have achieved if they knew their preferences and used the DA algorithm. Throughout this section, we assume that every agent \(a\in \actSet\) uses these algorithms in order to decide which firm to choose at time any time \(t\). The proposed  algorithms---by design--- make use of only the feedback information outlined in (\ref{enum: info1})-(\ref{enum: info2}) in Section 1, and have no implicit or explicit communication and coordination strategies like e.g., phase based strategies with coordinated actions~\cite{basu2021beyond} or partial observation of actions of other agents~\cite{liu2021bandit} etc. Thus, the algorithms operate in the same regime as the DA algorithm, but without the assumption that agents know their preferences. Key to our approach, is the blending stochastic bandit (SB) algorithms with an adversarial bandit (AB) algorithms. In the subsequent exposition we will formally describe our approach and show its desirable properties in terms of regret and convergence. 

Before doing so, however, we comment on the difficulties of the problem at hand, and what makes the analysis of these algorithms highly non-trivial. The key challenge in designing algorithms for matching while learning is understanding when to stop requesting \emph{super-optimal} firms  (i.e. firms that they prefer more than their stable match) without any foreknowledge of the market structure. The crux of this problem is having an agent learn that certain firms are unattainable due to competition despite the non-stationarity in the environment stemming from fact that other agents are learning simultaneously and not knowing who they collide with and who is successfully getting matched at each round. Furthermore, due to a lack of communication or coordination, agents cannot learn about which firms are super-optimal without risking many collisions. 

 A sketch of the algorithm is described in words in Algorithm \ref{alg:HighLevelFinal}, and the exact algorithm for the setting in which agents use the UCB algorithm as a subroutine is presented in Algorithm \ref{alg:prunedUCBFinal}.
\begin{algorithm}
Each agent \(a \in \actSet\) at every time \(t\in[T]\):
\begin{enumerate}[leftmargin=*]
    \item Keeps a ordering of firms as per an index-based stochastic bandit subroutine
\item Agent \(a\) goes over the firms as per the ordering one by one 
\item  Using an adversarial bandit subroutine  decides whether to \emph{request} the firm or \newline to \emph{prune it}
\begin{enumerate}[leftmargin=*]
    \item If a firm is requested then agent either gets matched or gets collided
    \item If pruned then then the agent moves to next firm as per the ordering  
\end{enumerate}
\item Updates the stochastic and adversarial bandit subroutine  based on the feedback \newline received
\end{enumerate}
\caption{\textsf{High-level algorithmic description}}
    \label{alg:HighLevelFinal}
\end{algorithm}
 As per Algorithm \ref{alg:prunedUCBFinal}, each agent is equipped with a stochastic bandit (SB) subroutine.  At every time step \(t\in[T]\), the SB subroutine  of every agent \(a\) maintains ordering of firms in decreasing order of preferences according to an index (e.g. UCB). We denote this index of firm \(f\) as maintained by agent \(a\) as \(\UCB_{a,f}(t)\). Next, at that time step, every agent \emph{considers} each firm one by one in decreasing order of \(\UCB_{a,f}(t)\). For any firm \(f\) considered by agent \(a\) at time \(t\), the agent makes a decision to either \emph{request} \(f\) or to \emph{prune}\footnote{Note that by pruning here we do not mean permanent pruning, it is used to describe that a particular firm is not consider at that time step} it (that is, to reject that firm). In particular, agent \(a\) requests firm \(f\) with probability \(\pullProb_{a,f}(t)\). Let \(\pullInstant_{a,f}(t)\sim\text{Bernoulli}(\pullProb_{a,f}(t))\). 
If a firm is pruned (i.e. \(\pullInstant_{a,f}(t)=0\)) then the next best firm from the sorted list is chosen and the process continues until a firm is requested (i.e. \(\pullInstant_{a,f}(t)=1\)). However, if all of the firms are pruned then at that time instant the agent simply requests the firm \(\arg\max_{f}\UCB_{a,f}(t)\). 
Once an agent decides which firm to request, it obtains a noisy utility if it gets successfully matched. This feedback is used by the agent to update its UCB-index. Based on whether an agent \(a\) decides to prune or request a particular firm \(f\), it updates \(\pullProb_{a,f}\) using an AB subroutine. The details about this are stated in Section\footnote{The corresponding algorithmic subroutine \pullModule \ is presented in the Appendix.}  \ref{ssec: AdvBanditAlg}
We note that all firms are not considered by agent \(a\) at every time \(t\). Once an agent decides to request a firm \(f\), it does not consider firms in the set \(\{f'\in\firmSet: \indexSet_{a,f'}(t)<\indexSet_{a,f}(t)\}\). Formally, for any agent-firm tuple \((a,f)\in \actSet\times\firmSet\) let the event that the agent \(a\) {\selects}\ the firm \(f\) at time \(t\), to decide whether to request it or prune it,  be denoted by \(\actQuery_{a,f}(t) = \one\lr{ \pullInstant_{a,f'}(t) = 0, \quad \forall \ f': \indexSet_{a,f}(t) \leq \indexSet_{a,f'}(t)}\). If a firm \(f\) is considered by agent \(a\) then the event when agent \(a\) requests \(f\) is denoted by \(\actChoose_{a,f}(t) = \one\lr{\pullInstant_{a,f}(t)= 1, \actQuery_{a,f}(t) = 1}\).

\begin{algorithm}
\SetAlgoLined
\LinesNumbered
\SetKwInOut{Initialize}{Initialize}
    \Initialize{ $\mean_{a,f}=0,\numMatches_{a,f}=0,\pullProb_{a,f} = 0.5, \auxProb_{a,f} = 0.5, \lossPull_{a,f} = 0,\ \ \forall a\in\actSet, f \in \firmSet  $}
    \For{$t=1, \ldots , \horizon$}{
        \For{$f \in \firmSet$}{
             Set $\UCB_{a,f} = \mean_{a,f}+\sqrt{\frac{2\log(1+(\totalMatch_a+1)\log^2(\totalMatch_a+1))}{\numMatches_{a,f}}}$, where \(\totalMatch_a=\sum_{f\in\firmSet}\numMatches_{a,f}\)
            }
             Set $\argUCB_a$ = \textsf{ArgDescendingSort}($\{\UCB_{a,f}\}_{f\in\firmSet}$) 
            and \(i = 1\)\\
             \While{\(i\leq |\firmSet|\)}
             {
             Set \(f = \argUCB_a^{[i]}\) and
             sample \(\pullInstant_{a,f}\sim \textsf{Bernoulli}(\pullProb_{a ,f})\)\\
            
            \If{\(\pullInstant_{a,f}=0\)}{
            Update \((\auxProb_{a,f},\pullProb_{a,f},\lossPull_{a,f}) \la  \pullModule(\pullInstant_{a,f},\auxProb_{a,f},\pullProb_{a,f},\lossPull_{a,f},\isMatched_{a})\)\\
            }
            \If{\(\pullInstant_{a,f}=1\)}{
            Request firm \(f\) and
            receive $(\reward_{a},\isMatched_{a})$\\
            Update \(\mean_{a,f} \la \isMatched_{a}\frac{\mean_{a,f}\numMatches_{a,f}+\reward_{a}}{\numMatches_{a,f}+1} + (1-\isMatched_{a})\mean_{a,f} \), \ \ $\numMatches_{a,f}\la \numMatches_{a,f}+\isMatched_{a}$,  \\
        Update \((\auxProb_{a,f},\pullProb_{a,f},\lossPull_{a,f}) \la  \pullModule(\pullInstant_{a,f},\auxProb_{a,f},\pullProb_{a,f},\lossPull_{a,f},\isMatched_{a})\)\\
    \textsf{break while};
            }
            
            \(i \la i + 1\)
            }
            \If{\(i=|\firmSet|+1\)}
            {
            Request firm \(\argUCB_a^{[1]}\) and
            receive $(\reward_{a},\isMatched_{a})$\\
            Update \(\mean_{a,f} \la \isMatched_{a}\frac{\mean_{a,f}\numMatches_{a,f}+\reward_{a}}{\numMatches_{a,f}+1} + (1-\isMatched_{a})\mean_{a,f} \),\ \ $\numMatches_{a,f}\la \numMatches_{a,f}+\isMatched_{a}$
            }
   }
   \caption{\textsf{UCB based Decentralized Matching Algorithm (UCB-DMA)}}
    \label{alg:prunedUCBFinal}
\end{algorithm}

 In the Section \ref{ssec: StoBanditAlg} we describe the UCB computation method for the SB subroutine. 
 Finally, in Section \ref{ssec: AdvBanditAlg}, we illustrate how the matchings and collisions are used to update the probability \(\pullProb_{a,f}(t)\) as per an AB subroutine.

\subsection{Stochastic Bandit Subroutine }\label{ssec: StoBanditAlg}
The stochastic bandit subroutine is used to efficiently deal with inherent uncertainty in the payoff obtained upon successful matching.
In this section we develop the theory for the setting in which agents use a UCB based SB subroutine. Similar results for Thompson Sampling are supplied in the Appendix. 

To being, we denote the number of times agent \(a\) gets successfully matched with firm \(f\) till time \(t\) as \(\numMatches_{a,f}(t)\). Similarly, the number of times agent \(a\) gets collided with firm \(f\) till time \(t\) be \(\numCollide_{a,f}(t)\). Given this notation, the UCB \cite{auer2002using} estimate of agent \(a\) for every \(f\) at time \(t\) is given by
\begin{align*}
    \UCB_{a,f}(t) = \mean_{a,f}(t-1) + \sqrt{\frac{2\log(1+\totalMatch_{a}\log^2(\totalMatch_a))}{\numMatches_{a,f}(t)}},
\end{align*}
where \(\totalMatch_a(t) = \sum_{f\in \firmSet}\numMatches_{a,f}(t)\) and \(\mean_{a,f}(t-1)\) is the empirical average of the payoffs received from successfully matching to firm \(f\) until time \(t\). The UCB estimate is composed of two parts: (i) the empirical mean which captures the exploitation aspect; and (ii) exploration bonus that decreases as \(M_{a,f}(t)\) increases. We remark that it does not depend on the number of collisions $\numCollide_{a,f}(t)$.

\subsection{Adversarial Bandit Subroutine}\label{ssec: AdvBanditAlg}
A key component of the proposed methodology is to use an adversarial bandit subroutine to deal with the competitive aspect of the problem. In particular, the AB subroutine updates the request probability \((\pullProb_{a,f})_{f\in\firmSet}\) such that agent stops requesting firm on which the collisions are high (but ensures that it does not miss out on the firm if it is achievable). Intuitively, by construction, the adversarial bandit algorithm learns to prune arms on which collisions would happen frequently, and request firms where it is possible to successfully match very often. We show this by analyzing its regret and showing that high regret is incurred if  the algorithm either prunes too often when successfully matching is possible or requesting a firm that is unachievable due to the frequent presence of higher ranked agents. By bounding the regret of the AB subroutine we immediately get a bound on the number of collisions.

We now describe the update scheme for \(\pullProb_{a,f}(t)\) for any \((a,f)\) at any time \(t\in[T]\).  
In this work we consider an optimistic mirror descent based AB subroutine specialized from~\cite{bubeck2019improved}. Interestingly such AB algorithms have data dependent regret bounds \cite{wei2018more}, \cite{bubeck2019improved} unlike other AB algorithms like Exp3 \cite{lattimore2020bandit,slivkins2019introduction}. Since the competition in the matching market is not actually adversarial such data-dependent regret bounds enables us characterize the competition more effectively in the analysis than just treating competition as adversarial\footnote{We review the required background on optimistic mirror descent based AB algorithms in the Appendix along with a result which captures the characterizes the corresponding data-dependent regret bounds in the setting of matching markets. }. We note that the proof techniques developed here can also be used to analyze an Exp3 based AB subroutine but the regret bounds of such an approach will not be as sharp. 

 For a given agent $a$, our algorithm associates a separate AB subroutine to every firm $f \in \firmSet$. Each AB algorithm has \emph{two arms}  which correspond to the action of requesting the firm \(f\) or pruning it, each of which incurs different losses depending. In particular, if  \(\pullInstant_{a,f}(t)=0\) then it receives a fixed loss of 0; if \(\pullInstant_{a,f}(t)=1\) the loss received is \(+1\) or \(-1\) if it collides or matches respectively. If we denote the loss received by the AB subroutine  associated with \((a,f)\) at time \(t\) by \(\lossPull_{a,f}(t)\), we note that \(\lossPull_{a,f}(t) = \pullInstant_{a,f}(t)\lr{1-2\isMatched_{a}(t)}\). Note that \(\isMatched_a(t)\) is unknown to any agent before requesting any firm as it also depends on the requests made by other agents. 

We note that the request probability \(\pullProb_{a,f}\) is not updated at every time $t$, but only when \(\actQuery_{a,f}(t)=1\) (i.e., if all firms with a higher UCB index have been pruned). As such the adversarial bandit algorithms can be seen as operating on a randomized timescale \(\numSelect_{a,f}(T) = \{t\in[T]: \actQuery_{a,f}(t) = 1 \}
\) which are the time steps on which agent \(a\) considers firm \(f\).  We note that \(\pullProb_{a,f}(t+1)=\pullProb_{a,f}(t)\) if \(t\not\in \tau_{a,f}(T)\).

For the specific AB algorithm we analyze (which is a version of optimistic mirror descent with a log-barrier regularizer first analyzed in~\cite{wei2018more}), the simple setup of the losses leads to a closed form update for the probability of requesting or pruning a firm. In particular, for every \((a,f)\in \actSet\times \firmSet\) and \(t\in \numSelect_{a,f}(T)\), the optimistic mirror descent AB subroutine creates an unbiased estimate of the loss due to pruning and requesting as \(\lossPruneEst_{a,f}(t)\) and \(\lossPullEst_{a,f}(t)\) respectively. In particular, if \(\pullInstant_{a,f}(t)=1\)
\begin{align*}
\lossPruneEst_{a,f}(t) =\frac{1+\lossPull_{a,f}(t-1)}{2}, \quad \lossPullEst_{a,f}(t) = \frac{1-2\isMatched_a(t)-\lossPull_{a,f}(t-1)}{2\pullProb_{a,f}(t)}+\frac{1+\lossPull_{a,f}(t-1)}{2}.
\end{align*}
On the other hand, if \(\pullInstant_{a,f}(t)=0\) then 
\begin{align*}
    \lossPruneEst_{a,f}(t) = \frac{-\lossPull_{a,f}(t-1)}{2(1-\pullProb_{a,f}(t))}+\frac{1+\lossPull_{a,f}(t-1)}{2}, \quad \lossPullEst_{a,f}(t) = \frac{1+\lossPull_{a,f}(t-1)}{2}
\end{align*}

The term \(\frac{1+\lossPull_{a,f}(t-1)}{2}\) is an optimistic prediction of the losses based on the last round of interaction~\cite{bubeck2019improved}. Given these estimators the probability of requesting a firm is updated as: 
\begin{align*}
\pullProb_{a,f}(t+1)=(1-\exploit_{a,f}(t))\auxProb_{a,f}(t)+\exploit_{a,f}(t)\pullInstant_{a,f}(t),
\end{align*}
where: \[x_{a,f}(t)= \left(2+\forgetMeasure(t)-\sqrt{4+\forgetMeasure(t)^2}\right) (2\forgetMeasure(t))^{-1}\]  for $\forgetMeasure(t) = \SS \lr{ \lossPullEst_{a,f}(t) - \lossPruneEst_{a,f}(t)} + \frac{1}{\auxProb_{a,f}(t-1)}-\frac{1}{1-\auxProb_{a,f}(t-1)}$, is the result of a step of mirror descent with the log-barrier regularizer, and  \(\exploit_{a,f}(t) = \frac{\exploitSS(1-\lossPull_{a,f}(t))}{2+\exploitSS(1-\lossPull_{a,f}(t))}\), for \(\lambda>0\), promotes exploration.
The algorithmic description of this process is stated in Algorithm \ref{alg:AdaptivePart}.

\begin{algorithm}
\SetAlgoLined
\LinesNumbered
\SetKwInOut{Input}{Input}
\SetKwInOut{Parameters}{Parameters}
\SetKwInOut{Output}{Output}
    \Input{ $\pullInstant_{a,f},\auxProb_{a,f},\pullProb_{a,f},\lossPull_{a,f},\isMatched_{a}$ }
    \Parameters{ $
    \SS\leq \frac{1}{50}, \exploitSS = 8 \SS$
    }
    \If{\(\pullInstant_{a,f}= 0\)}{
    Set \(\lossPruneEst_{a,f} = \frac{-\lossPull_{a,f}}{2\lr{1-\pullProb_{a,f}}}+\frac{\lossPull_{a,f}+1}{2},\quad \lossPullEst_{a,f} =  \frac{1+\lossPull_{a,f}}{2}\)
\\
    Update \(\lossPull_{a,f} \la 0\)
    }
    \If{\(\pullInstant_{a,f}=1\)}{
    Set \(\lossPruneEst_{a,f} =\frac{1+\lossPull_{a,f}}{2}, \quad \lossPullEst_{a,f} = \frac{1-2\isMatched_a-\lossPull_{a,f}}{2\pullProb_{a,f}}+\frac{1+\lossPull_{a,f}}{2}\) 
    \\
    Update \(\lossPull_{a,f} \la 1-2\isMatched_a\)
    }
    Set \(\forgetMeasure = \SS \lr{ \lossPullEst_{a,f} - \lossPruneEst_{a,f}} + \frac{1}{\auxProb_{a,f}}-\frac{1}{1-\auxProb_{a,f}}\) 
    \\
    Update \(\auxProb_{a,f} \la \frac{2+\forgetMeasure-\sqrt{4+\forgetMeasure^2}}{2\forgetMeasure}\) and 
    set \(\exploit_{a,f} = \frac{\exploitSS(1-\lossPull_{a,f})}{2+\exploitSS(1-\lossPull_{a,f})}\) 
    Update \(\pullProb_{a,f} \la (1-\exploit_{a,f})\auxProb_{a,f}+\exploit_{a,f}\pullInstant_{a,f}\)\\
  \Output{ \(\lossPull_{a,f},\auxProb_{a,f},\pullProb_{a,f}\)}
  \caption{\pullModule}
    \label{alg:AdaptivePart}
\end{algorithm}

\section{Bounds on the regret of proposed algorithm}\label{ssec: RegretBound}

To capture the performance of the algorithm we use the natural notion of \emph{stable regret} as introduced in \cite{liu2020competing}. 
More formally, the stable regret accrued by any agent \(a\in\actSet\) is
\begin{align}\label{eq: RegretDefinition}
\avg[\regret_a(\horizon)] &= \avg\ls{\sum_{t=1}^{T}\utilityAgent_{a,\stableArm_a}-\sum_{t=1}^{T}\utilityAgent_{a,\chosenFirm_a(t)}}\leq\sum_{f\in \subArm_a} \gap_{a}(f) \avg[\numMatches_{a,f}(T)] + \utilityAgent_a(\stableArm_a)\sum_{f\in\mc{F}}\avg[\numCollide_{a,f}(\horizon)],
\end{align}
where \(\gap_{a}(f) = \utilityAgent_{a}(\stableArm_a)-\utilityAgent_{a}(f) \) is the gap between the mean that agent $a$ gets upon successfully matching with its stable match as compared firm \(f\).  If there are no collisions, then this regret definition is same as that used in stochastic bandits literature (\cite{lattimore2020bandit}). In the following theorem, we present the regret of any agent using Algorithm \ref{alg:prunedUCBFinal}:

\begin{theorem}\label{thm: UCBMainPaper} Suppose every agent \(a\in\actSet\) uses Algorithm \ref{alg:prunedUCBFinal}.
Then for any \(i\in[ \numMarket]\) :
\begin{align*}
    \sum_{j=1}^{i}\sum_{a\in\subActSet_j}\avg[\regret_a(T)] = \bigo\lr{C_i|\firmSet|{|\actSet|}{{\log(T)}\lr{1+\frac{1}{\gap^2}} } }
\end{align*}
where \(\gap=\min_{a,f}\gap_{a,f}\) and \(C_i\) is a constant dependent on market \(\market_i\) and \(C_1<C_2<...<C_{\numMarket}\).
\end{theorem}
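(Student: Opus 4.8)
The plan is to prove the statement by induction on $i$, peeling off the submarkets $\market_1,\market_2,\dots$ in the order given by the $\alpha$-reducibility decomposition of Remark~\ref{rem: MarketDecomp}. Since \eqref{eq: RegretDefinition} bounds $\avg[\regret_a(T)]$ by $\sum_{f\in\subArm_a}\gap_{a,f}\avg[\numMatches_{a,f}(T)]+\utilityAgent_a(\stableArm_a)\sum_{f\in\firmSet}\avg[\numCollide_{a,f}(T)]$ and utilities are bounded, it suffices to control, for each agent, its number of sub-optimal matches and its number of collisions; I will therefore carry the induction on the quantity $D_i:=\sum_{j\le i}\sum_{a\in\subActSet_j}\bigl(\sum_{f\in\subArm_a}\avg[\numMatches_{a,f}(T)]+\sum_{f\in\firmSet}\avg[\numCollide_{a,f}(T)]\bigr)$, from which the claimed regret bound is immediate. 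The bookkeeping device that links the levels is the number $B_{i-1}$ of rounds $t\le T$ on which at least one agent of $\cup_{j<i}\subActSet_j$ is \emph{not} matched to its stable partner. A peeling argument bounds $B_{i-1}$ by $D_{i-1}$: on any such round take the smallest level $j<i$ at which some $a'\in\subActSet_j$ is not at $\stableArm_{a'}$; then $a'$ either collided or matched some firm $f'\ne\stableArm_{a'}$, and if $f'\in\superArm_{a'}$ then by Lemma~\ref{lem: SuperOptimalArm} $f'\in\cup_{k<j}\subFirmSet_k$ is owned by a strictly higher-ranked agent who is then itself displaced (contradicting minimality), so after finitely many such steps the round is charged injectively to a collision or a sub-optimal match of an agent in $\cup_{j<i}\subActSet_j$. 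Hence $B_{i-1}\le D_{i-1}=\bigo\bigl(C_{i-1}|\firmSet||\actSet|\log(T)(1+\gap^{-2})\bigr)$ by the induction hypothesis.

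\textbf{Structural facts on ``good'' rounds.}
Fix $a\in\subActSet_i$ and call a round \emph{good} if every agent in $\cup_{j<i}\subActSet_j$ sits at its stable partner; there are at least $T-B_{i-1}$ good rounds. Because $(a,\stableArm_a)$ is a fixed pair of $\market_{i-1}$, firm $\stableArm_a$ prefers $a$ to every remaining agent, i.e.\ to all of $\cup_{j\ge i}\subActSet_j$; therefore on a good round (i) if $a$ requests $\stableArm_a$ it is matched, and (ii) if $a$ requests a super-optimal firm $f\in\superArm_a\subseteq\cup_{j<i}\subFirmSet_j$ then $f$ is being requested and won by the higher-ranked agent who owns it and who $f$ prefers to $a$, so $a$ \emph{collides}. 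Fact (ii) is the key mechanism of the proof: it forces the loss sequence fed to the adversarial-bandit subroutine attached to $(a,f)$, for super-optimal $f$, to equal $+1$ on every good round on which that subroutine is active, and to differ from $+1$ only on a subset of the $B_{i-1}$ bad rounds; so this loss sequence has total variation $\bigo(B_{i-1})$ and the ``prune'' arm is optimal in hindsight with cumulative loss $0$.

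\textbf{From structure to regret.}
Plugging this into the data-dependent (path-length) regret guarantee of the optimistic log-barrier subroutine (the Appendix lemma specializing \cite{bubeck2019improved,wei2018more}) bounds the number of times $a$ requests any super-optimal firm $f$, hence $\avg[\numCollide_{a,f}(T)]$, by $\bigo\bigl(B_{i-1}+\sqrt{B_{i-1}\log T}+\mathrm{polylog}(T)\bigr)=\bigo(B_{i-1}+\log T)$; similarly, since requesting $\stableArm_a$ on a good round yields loss $-1$, the same bound controls the number of rounds on which $a$ prunes $\stableArm_a$. Once all (rarely matched, hence high-index) super-optimal firms have been pruned and $\stableArm_a$ is requested, standard anytime-UCB concentration gives that the empirical mean of $\stableArm_a$ concentrates and that a sub-optimal firm $f\in\subArm_a$ has UCB index exceeding $\UCB_{a,\stableArm_a}$ at most $\bigo(\log(T)/\gap_{a,f}^2)$ times; on every other good round $a$ can reach $f$ only if $\stableArm_a$ (and every higher-index firm) was pruned, of which there are $\bigo(|\firmSet|(B_{i-1}+\log T))$. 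Combining, $\avg[\numMatches_{a,f}(T)]=\bigo\bigl(\log(T)/\gap_{a,f}^2+|\firmSet|(B_{i-1}+\log T)\bigr)$ for $f\in\subArm_a$ and $\avg[\numCollide_{a,f}(T)]=\bigo\bigl(\log(T)/\gap^2+|\firmSet|(B_{i-1}+\log T)\bigr)$ for every $f$. Summing over the $\le|\firmSet|$ firms and over $a\in\subActSet_i$, adding $D_{i-1}$, and using $B_{i-1}\le D_{i-1}$ yields $D_i=\bigo\bigl((C_{i-1}+|\firmSet||\actSet|)\,|\firmSet||\actSet|\log(T)(1+\gap^{-2})\bigr)$, which has the claimed form with a constant $C_i>C_{i-1}$ that absorbs the $|\actSet|,|\firmSet|$-dependent prefactors arising from $\market_i$; invoking \eqref{eq: RegretDefinition} gives the theorem.

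\textbf{Main obstacle.}
The delicate step is the third paragraph: turning the qualitative fact (ii) into a quantitative bound on the adversarial subroutine's behaviour. One must (a) handle the randomized, data-dependent timescale $\numSelect_{a,f}(T)$ on which each subroutine is active; (b) verify that the total-variation/path-length quantity in the \cite{wei2018more,bubeck2019improved}-style regret bound is governed by $B_{i-1}$ and not by the number of active rounds; and (c) break the apparent circularity whereby the UCB subroutine needs $\stableArm_a$ matched often enough for its index to be reliable, while the adversarial subroutine needs $\stableArm_a$ not to be pruned too often — both being pinned down simultaneously by the inductive bound on $B_{i-1}$ together with the clean-event UCB estimates. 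Ensuring the recursion for the constants $C_i$ stays finite and strictly increasing through this argument is the remaining point requiring care.
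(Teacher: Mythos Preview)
Your proposal is correct and uses the same essential ingredients as the paper: induction along the $\alpha$-reducibility hierarchy, the path-length regret bound of the optimistic log-barrier subroutine to control collisions and pruning, and standard UCB concentration for sub-optimal matches. The difference is purely organizational. The paper decomposes the problem into five pieces (\textbf{L1}--\textbf{L5} in Lemma~\ref{lem: MainLemma}), tracking separately the quantities $\sum_t H_{a,\stableArm_a}(t)$ (potential collisions at the stable firm) and $\numMatches_{a,\superArm_a}(T)$ (super-optimal matches), and runs two parallel inductions on the cumulative sums $S_k=\sum_{j\le k}\sum_{a\in\subActSet_j}\avg[\sum_t H_{a,\stableArm_a}(t)]$ and $\tilde S_k=\sum_{j\le k}\sum_{a\in\subActSet_j}\avg[\numMatches_{a,\superArm_a}(T)]$; the link between levels is the counting identity of Lemma~\ref{lem: BoundOtherMatch}. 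You instead package everything into the single aggregate $D_i$ and the single bookkeeping quantity $B_{i-1}$ (number of rounds on which some higher-level agent is displaced), with your peeling argument $B_{i-1}\le D_{i-1}$ playing the role of Lemma~\ref{lem: BoundOtherMatch}. Your ``good rounds'' framing makes the two structural facts (i)--(ii) and the resulting path-length bound $V_{a,f}\le\bigo(B_{i-1})$ for super-optimal $f$ more transparent than the paper's presentation, at the cost of hiding the individual bounds on sub-optimal matches, super-optimal matches, and collisions that the paper's modular decomposition exposes. Both routes yield the same exponential-in-level growth of the constants $C_i$. The obstacle you flag in your last paragraph is exactly the content of the paper's inequalities \eqref{eq: Implication1}--\eqref{eq: Implication2} and Lemma~\ref{lem: BoundingTermA}, so you have correctly located where the work lies.
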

We see that the regret of any agent \(a\in \actSet\) is logarithmic in horizon \(T\), which matches the lower bound for single player stochastic bandit algorithms \cite{lai1985asymptotically}. As such, perhaps surprisingly, we observe that in $\alpha$-reducible markets, it is possible for agents to learn while competing without incurring drastically worse regret in the long run. It is interesting to note that the learning of agent depends on its position in the market as per preferences (Remark \ref{rem: MarketDecomp}). An agent low in the hierarchy incurs more regret during the learning process due to the agents higher up in the hierarchy driven mainly by the larger number of collisions incurred while waiting for agents higher in the hierarchy to stop exploring. We note that in the worst case the constant \(C_i\) can grow exponentially in the number of agents in the market. We note that this is a consequence of the proof technique and not fundamental limitation of the algorithmic design paradigm as we show  through numerical studies in next section. We leave this as a future work to establish tighter regret bounds in terms of number of agents.
In the Appendix we also show that in Algorithm \ref{alg:prunedUCBFinal} if we use a SB subroutine  based on Thompson Sampling then a similar regret guarantee can be obtained. We now present a sketch of the proof of Theorem \ref{thm: UCBMainPaper}.


\paragraph{Sketch of the proof.} Before presenting the sketch, we first define few notations that would make the exposition clear. Let \(\numMatches_{a,\subArm_a}(T)=\sum_{f\in\subArm_a}\numMatches_{a,f}(T), \numMatches_{a,\superArm_a}(T)=\sum_{f\in\superArm_a}\numMatches_{a,f}(T)\). Moreover, for any \(a\in\actSet\) define 
\(
    H_{a,\stableArm_a}(t)=\{\exists a'\in \actSet \ \text{s.t.} \ \utilityFirm_{\stableArm_a}(a')\geq \utilityFirm_{\stableArm_a}(a), \chosenFirm_{a'}(t)=f \}
\)
which is an event that characterizes if any other more preferred agent has requested the stable match of agent \(a\) at time \(t\). Against the preceding backdrop, we now present the following crucial lemma: 
\begin{lemma}\label{lem: MainLemma}
Suppose every agent uses Algorithm \ref{alg:prunedUCBFinal} then the following holds:
\begin{itemize}[leftmargin=20pt]
\item[\textbf{(L1)}] For any \(i\in[\numMarket]\), the cumulative regret can be decomposed as 
\begin{align*}
    \sum_{j=1}^{i}\sum_{a\in\subActSet_j}\avg[\regret_a(T)]&=  \bigo\bigg(\sum_{i=1}^{k}\sum_{a\in\subActSet_i}(\avg[\numMatches_{a,\subArm_a}(T)] +  \sum_{\underset{f\neq  \{f^*_a\}}{f\in F}} \avg[C_{a,f}(T)]+ \avg[\sum_{t=1}^{T}H_{a,f^*_a}(t)])\bigg);
\end{align*}
    \item[\textbf{(L2)}] For any \(i\in [\numMarket]\), the expected matches with suboptimal firm satisfies
    \begin{align*}
       \sum_{j=1}^{i}\sum_{a\in\subActSet_j}\avg[\numMatches_{a,\subArm_a}(T)] = \bigo\lr{  \sum_{j=1}^{i}\sum_{a\in\subActSet_j}\lr{|\subArm_a|\log(T)\lr{1+\frac{1}{\gap^2}} +\avg\ls{\sum_{t=1}^{T}\collisionEvent_{a,\stableArm_a}(t)}}}
    \end{align*}
    \item[\textbf{(L3)}] The expected number of collisions between for any agent \(a\in\actSet\) satisfies
    {
\begin{align*}
    \sum_{f\in{\firmSet}}\avg[\numCollide_{a,f}(T)] = \bigo\lr{|{\firmSet}|\log(T)+\avg\ls{\numMatches_{a,\subArm_a}(T)+\numMatches_{a,\superArm_a}(T)+ \sum_{t=1}^{T}\one\lr{\collisionEvent_{a,\stableArm_a}(t) }}};
\end{align*}
}

\item[\textbf{(L4)}] For any \(i\in [\numMarket]\) we have 
\begin{align*}
    \sum_{j=1}^{i}\sum_{a\in\subActSet_j}\avg\ls{ \sum_{t=1}^{T}\one\lr{\collisionEvent_{a,\stableArm_a}(t)} }=  \bigo\lr{ C_i\lr{\sum_{j=1}^{i}|\subActSet_j|}\log(T)\lr{1+\frac{1}{\gap^2} }},
\end{align*}
where \(C_i\) is a constant dependent on market \(\market_i\) such that \(C_1<C_2<...<C_{\numMarket}\). 
\item[\textbf{(L5)}] For any \(i\in[\numMarket]\) we have 
\begin{align*}
    \sum_{j=1}^{i}\sum_{a\in\subActSet_j}\sum_{f\in\superArm_a}\avg[\numMatches_{a,f}(T)] \leq \bigo\lr{C_i \lr{\sum_{j=1}^{i}|\subActSet_j|}|\firmSet|\log(T)\lr{1+\frac{1}{\gap^2}} }
\end{align*}
\end{itemize}
\end{lemma}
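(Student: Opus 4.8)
The plan is to establish the five parts in the order (L1), (L3), (L2), and then (L4)--(L5) \emph{jointly} by induction on $i$; the first three are essentially per-agent statements, while (L4) and (L5) are where the $\alpha$-reducible structure of the market (through Lemma~\ref{lem: SuperOptimalArm}) does the real work.

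Part (L1) is bookkeeping. Starting from the regret decomposition in (\ref{eq: RegretDefinition}), I would bound the suboptimal-match term by a constant multiple of $\avg[\numMatches_{a,\subArm_a}(T)]$ (utilities are bounded) and split the collisions as $\sum_{f}\avg[\numCollide_{a,f}(T)]=\avg[\numCollide_{a,\stableArm_a}(T)]+\sum_{f\neq\stableArm_a}\avg[\numCollide_{a,f}(T)]$. The only structural observation is that $a$ collides on $\stableArm_a$ at time $t$ only if some agent that $\stableArm_a$ ranks above $a$ also requests $\stableArm_a$ at $t$, i.e.\ only on a round in the event $H_{a,\stableArm_a}(t)$ (equivalently, a round in which the ``would-collide'' event $\collisionEvent_{a,\stableArm_a}(t)$ holds), so $\numCollide_{a,\stableArm_a}(T)\le\sum_{t=1}^{T}\one\lr{H_{a,\stableArm_a}(t)}$. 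This produces the three groups of terms.

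For (L2) and (L3) I would exploit the adversarial-bandit (AB) subroutine attached to each pair $(a,f)$: it runs on the random subsequence of rounds at which $a$ considers $f$, plays ``request'' iff $\pullInstant_{a,f}(t)=1$, and incurs loss $\pullInstant_{a,f}(t)(1-2\isMatched_a(t))$, so its realized cumulative loss is exactly $\numCollide_{a,f}(T)-\numMatches_{a,f}(T)$, whereas the ``always prune'' comparator has cumulative loss $0$. The matching-market-specialized data-dependent regret bound for the log-barrier optimistic-mirror-descent subroutine (established in the appendix) then controls $\numCollide_{a,f}(T)-\numMatches_{a,f}(T)$ by the AB regret, which for $f\neq\stableArm_a$ I would bound by $\bigo\lr{\log T+\numMatches_{a,f}(T)}$: the path-length/variation of the loss vectors is controlled by the number of rounds on which the match/collide outcome at $f$ changes, which is at most a constant times $\numMatches_{a,f}(T)+\numCollide_{a,f}(T)$, and feeding this back into the $\sqrt{\cdot}$-type bound and applying AM--GM discharges the apparent circularity. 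Summing over $f\neq\stableArm_a$ and adding $\numCollide_{a,\stableArm_a}(T)\le\sum_{t}\one\lr{\collisionEvent_{a,\stableArm_a}(t)}$ gives (L3). For (L2), a standard optimism argument on the UCB indices shows any $f\in\subArm_a$ has $\UCB_{a,f}(t)\ge\utilityAgent_a(\stableArm_a)$ on at most $\bigo(\log T/\gap_a(f)^2)$ of the rounds on which $a$ matches with $f$, up to an $\bigo(1)$ failure term; on every other such round $\stableArm_a$ has a strictly larger index, hence was considered before $f$ and pruned. As at most one firm is requested per round, summing over $f\in\subArm_a$ bounds the remaining matches by the number of rounds on which $a$ prunes $\stableArm_a$, which --- again via the AB subroutine for $(a,\stableArm_a)$, with ``request'' as comparator, beneficial only off the $\collisionEvent_{a,\stableArm_a}$-rounds --- is at most $\bigo(\log T)+\sum_{t}\one\lr{\collisionEvent_{a,\stableArm_a}(t)}$. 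This yields (L2).

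Finally, (L4) and (L5) are proved jointly by induction on $i\in[\numMarket]$. For $i=1$ each $a\in\subActSet_1$ forms a fixed pair with $\stableArm_a$ in the whole market, so it is top of $\stableArm_a$'s ranking, is never blocked at $\stableArm_a$, and has $\superArm_a=\varnothing$ by Lemma~\ref{lem: SuperOptimalArm}; both level-$1$ terms vanish. For the inductive step, Lemma~\ref{lem: SuperOptimalArm} supplies the two facts that drive everything: for $a\in\subActSet_i$ one has $\superArm_a\subseteq\cup_{l<i}\subFirmSet_l$, and every firm of $\subFirmSet_i$ is suboptimal for every $a'\in\cup_{l<i}\subActSet_l$ (it is neither super-optimal, by the Lemma, nor a stable firm, which lies in $\subFirmSet_l$). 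Consequently an agent $a\in\subActSet_i$ is blocked at $\stableArm_a$ only by an agent that $\stableArm_a$ ranks above $a$; since $(a,\stableArm_a)$ is a fixed pair of $\market_{i-1}$, such an agent lies in $\cup_{l<i}\subActSet_l$ and is then requesting a firm suboptimal \emph{for itself}, so
\[
\sum_{a\in\subActSet_i}\sum_{t=1}^{T}\one\lr{\collisionEvent_{a,\stableArm_a}(t)}\ \le\ \sum_{a'\in\cup_{l<i}\subActSet_l}\lr{\numMatches_{a',\subArm_{a'}}(T)+\sum_{g\in\firmSet}\numCollide_{a',g}(T)},
\]
and the inductive hypothesis for (L2)--(L5) bounds the right side by $\bigo\lr{C_{i-1}\lr{\sum_{l<i}|\subActSet_l|}|\firmSet|\log T\,(1+\gap^{-2})}$; adding the level-$<i$ part of (L4) gives (L4) at level $i$ with a new constant $C_i>C_{i-1}$. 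Likewise, a level-$i$ agent matches a super-optimal firm $f=\stableArm_{a'}$ (with $a'\in\cup_{l<i}\subActSet_l$) only on a round at which $a'$, whom $f$ ranks above $a$, does not request $f$, and since at most one agent matches $f$ per round,
\[
\sum_{a\in\subActSet_i}\sum_{f\in\superArm_a}\numMatches_{a,f}(T)\ \le\ \sum_{a'\in\cup_{l<i}\subActSet_l}\lr{\numMatches_{a',\subArm_{a'}}(T)+\numMatches_{a',\superArm_{a'}}(T)+\sum_{g\in\firmSet}\numCollide_{a',g}(T)},
\]
which is again a sum of inductive-hypothesis quantities, giving (L5). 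I expect the main obstacle to be twofold: (i) making the data-dependent AB-regret accounting fully rigorous --- identifying the path-length/variation of each adversarial loss sequence with a concrete collision/blocking count on the randomized timescale $\tau_{a,f}$ and closing the resulting circular inequality; and (ii) tracking the constants so that $C_1<\cdots<C_{\numMarket}$ is well defined, since it is precisely the level-by-level multiplication in the two displays above that yields the (worst-case exponential in $|\actSet|$) growth of $C_i$.
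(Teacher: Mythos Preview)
Your proposal is correct and follows essentially the same architecture as the paper: (L1) is bookkeeping from (\ref{eq: RegretDefinition}); (L2) comes from the UCB-index decomposition of Lemma~\ref{lem: ChooseF} plus an AB-regret bound on pruning $\stableArm_a$; (L3) comes from comparing the AB subroutine to ``always prune'' and closing the circular dependence on $\numCollide_{a,f}$; and (L4)--(L5) come from an $\alpha$-reducibility induction via Lemma~\ref{lem: SuperOptimalArm}. Two minor deviations are worth noting. First, in (L4) you upper-bound $\sum_t\one(\collisionEvent_{a,\stableArm_a}(t))$ by \emph{requests} of lower-level agents to $\stableArm_a$ (matches plus collisions); the paper instead observes that whenever $\collisionEvent_{a,\stableArm_a}(t)$ holds the top-ranked requester actually \emph{matches} $\stableArm_a$, so one can bound directly by $\sum_{a'}\numMatches_{a',\stableArm_a}$ and avoid routing through (L3). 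Second, for the AB step you invoke a $\sqrt{\cdot}$-type path-length bound and discharge circularity via AM--GM; the paper keeps the linear-in-$\eta$ form of the log-barrier bound and absorbs the $\varpi\,\avg[\numCollide_{a,f}]$ term directly since $\varpi<1$. Both variants yield the same $\bigo(\cdot)$ conclusions and the same worst-case exponential growth of $C_i$.
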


Theorem \ref{thm: UCBMainPaper} is proved using \textbf{(L1)}-\textbf{(L5)} from Lemma \ref{lem: MainLemma}. Note that \textbf{(L1)} follows from \eqref{eq: RegretDefinition} and the definition of \(\collisionEvent_{a,\stableArm_a}(t)\). From \textbf{(L1)} we see that to bound the regret we need to consider three components: (i) expected number of matchings with suboptimal firms, (ii) expected number of collisions with any firm other than stable match, (iii) the \emph{potential collisions} at the stable match\footnote{by potential collision at stable match we mean total number of collision that would have been faced by an agent at its stable firm had it always requested the stable firm}. \textbf{(L2)} bounds the expected number of matchings with suboptimal firms. Note that the total matchings between agent \(a\) and firm \(f\) is \(\numMatches_{a,f}(T) = \sum_{t=1}^{T}\one\lr{\isMatched_{a}(t)=1,\chosenFirm_a(t)=f}\). Thus, we present the following lemma which plays a key role in the proof of \textbf{(L2)}:
\begin{lemma}
\label{lem: ChooseF}
   The event that agent \(a\) chooses the firm \(f\in \subArm_a\) and successfully matches at  time \(t\in[T]\) satisfies 
    \begin{align*}
    \{\isMatched_a(t)=1, \chosenFirm_a(t)=f\} \subset \lb{\isMatched_a(t)= 1,\UCB_{a,\stableArm_a}(t)\leq \UCB_{a,f}(t) }\cup \{\actChoose_{a,f}(t) = 1,\actChoose_{a,\stableArm_a}(t)=0\}
    \end{align*}
\end{lemma}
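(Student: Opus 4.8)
}
The plan is a short case analysis, split on the comparison between \(\UCB_{a,f}(t)\) and \(\UCB_{a,\stableArm_a}(t)\), that simply traces how agent \(a\) comes to request the sub-optimal firm \(f\in\subArm_a\) at round \(t\) from the definitions of \(\actQuery_{a,f}(t)\), \(\actChoose_{a,f}(t)\) and the descending-UCB scan of Algorithm \ref{alg:prunedUCBFinal}. Note first that \(f\in\subArm_a\) forces \(f\neq\stableArm_a\), so the two cases below are exhaustive, and throughout I work on the event \(\{\isMatched_a(t)=1,\ \chosenFirm_a(t)=f\}\).

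In the first case, \(\UCB_{a,\stableArm_a}(t)\le\UCB_{a,f}(t)\): since we are already on \(\{\isMatched_a(t)=1\}\), the whole event is contained in \(\{\isMatched_a(t)=1,\ \UCB_{a,\stableArm_a}(t)\le\UCB_{a,f}(t)\}\), the first set on the right-hand side, and there is nothing more to do. In the second case, \(\UCB_{a,\stableArm_a}(t)>\UCB_{a,f}(t)\), I would first rule out that round \(t\) is a round on which every firm is pruned: on such a round Algorithm \ref{alg:prunedUCBFinal} requests \(\argUCB_a^{[1]}=\arg\max_{f'\in\firmSet}\UCB_{a,f'}(t)\), so \(\chosenFirm_a(t)=f\) would force \(\UCB_{a,f}(t)=\max_{f'}\UCB_{a,f'}(t)\ge \UCB_{a,\stableArm_a}(t)\), contradicting the case hypothesis. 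Hence some firm is requested; since \(\chosenFirm_a(t)=f\), the scan reached \(f\) and requested it, which means every firm with UCB index at least \(\UCB_{a,f}(t)\) was pruned and \(\pullInstant_{a,f}(t)=1\), i.e. \(\actQuery_{a,f}(t)=1\) and \(\pullInstant_{a,f}(t)=1\), so \(\actChoose_{a,f}(t)=1\). Moreover \(\UCB_{a,\stableArm_a}(t)>\UCB_{a,f}(t)\) places \(\stableArm_a\) among the firms scanned (and pruned) strictly before \(f\), so \(\pullInstant_{a,\stableArm_a}(t)=0\) and therefore \(\actChoose_{a,\stableArm_a}(t)=0\); thus the event is contained in \(\{\actChoose_{a,f}(t)=1,\ \actChoose_{a,\stableArm_a}(t)=0\}\).

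This is essentially bookkeeping, so I do not expect a substantive obstacle; the only care needed is in the second case when several firms share the maximal UCB value at time \(t\) — one should either observe that the empirical means (and hence the UCB indices) are almost surely distinct, or align the weak/strict inequalities carefully with the tie-breaking rule of \textsf{ArgDescendingSort} and with the universal quantifier in the definition of \(\actQuery_{a,f}(t)\). It is worth stressing why this particular decomposition is the useful one for the subsequent proof of \textbf{(L2)}: the first set is a purely statistical ``bad UCB'' event, bounded by the standard single-agent concentration argument and occurring \(O(\log T/\gap^2)\) times per firm, whereas the second set is a purely competitive event in which the adversarial \pullModule\ subroutine prunes the stable firm while requesting a worse one, bounded through the regret guarantee of the AB subroutine. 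Splitting the two cleanly is exactly what lets the statistical and competitive error terms be handled separately.
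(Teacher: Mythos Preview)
Your proposal is correct and follows essentially the same approach as the paper: a case split on the comparison \(\UCB_{a,\stableArm_a}(t)\lessgtr\UCB_{a,f}(t)\), handling the ``all firms pruned'' fallback in the second case and then reading off \(\actChoose_{a,f}(t)=1,\ \actChoose_{a,\stableArm_a}(t)=0\) from the descending scan. The paper presents the same logic as a chain of set inclusions on \(\{\chosenFirm_a(t)=f\}\) before intersecting with \(\{\isMatched_a(t)=1\}\), but the content is identical, and your explicit remark about tie-breaking is a point the paper leaves implicit.
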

Lemma \ref{lem: ChooseF} separates the challenge associated with uncertainty and that of competition.
Note that the first event on the right hand side is the one which is standard to the analysis of UCB algorithm (\cite{lattimore2020bandit}). Meanwhile, the other event corresponds to the case when the stable firm is pruned by agent \(a\) in order to avoid potential collisions. To bound latter event we use the regret bounds for the adversarial bandit subroutine  (refer to Appendix). 

To bound \textbf{(L3)} we use the path length based regret bounds from \cite{bubeck2019improved},  \cite{wei2018more} for the adversarial bandit subroutine. Meanwhile to bound \textbf{(L4)} we use the \(\alpha-\)reduciblity assumption and \textbf{(L2)}. In particular, the \(\alpha-\)reduciblity assumption induces a hierarchy in the market as per Remark \ref{rem: MarketDecomp}. This decomposition reduces the bound in \textbf{(L4)} to appropriate accounting of number of matches with suboptimal firms via an induction argument. Finally, \textbf{(L5)} follows again due to hierarchy induced by \(\alpha-\)reducibility and using \textbf{(L2)-(L4)}.

\section{Experimental Study}\label{sec: Numerics}
In this section we present the numerical experiments that demonstrates and validates the results presented in this paper. Moreover, we also observe that our algorithm performs surprisingly well in general market structure, that is in markets which are not \(\alpha-\)reducible. We leave this as a future work to establish the regret bounds for the proposed algorithms in general markets. 

In both sets of experiments, we consider a market comprising of 5 agents and 5 firms. We consider the following two settings:

\textbf{(S-I).}  randomly initialized preference for agents and randomly initialized (but uniform) preference for firms. This setting ensures that market is \(\alpha-\)reducible 

\textbf{(S-II).} randomly initialized preference for agents and firms. In this part we specifically consider setting where \(\alpha-\)reducibility does \emph{not} hold. This would provide directions for future research in this area. 

In our simulations for every agent we randomly sample the preference ordering of firms and assign a mean reward in \([0,5]\) such that the successful match with most preferred firm gives mean reward \(5\) and the least preferred firm gives the mean reward \(0\) and the mean rewards from other firms are equally spaced between \([0,5]\). The rewards follow a normal distribution with variance 1.  We run both Algorithm \ref{alg:prunedUCBFinal} and Algorithm \ref{alg:prunedTSFinal} for 25 times for two randomly sampled preference ordering for each of \textbf{(S-I)-(S-II)}.

In Figure \ref{fig: Alpha} we consider \textbf{(S-I)} and observe the performance of algorithms. We observe that the mean regret (taken over 25 runs) accumulated by the algorithms saturate very quickly and agents identify their stable match. 
In Figure \ref{fig: noAlpha} we consider \textbf{(S-II)} and observe the performance of algorithm. Surprisingly, even without the \(\alpha-\)reducibility structure, the mean regret\footnote{mean regret here refers to the agent-optimal stable regret\cite{liu2021bandit}} (taken over 25 runs) accumulated by the algorithms saturate very quickly and agents identify their stable match.  This presents an opportunity to further explore the algorithm presented in this paper for general markets. 

 Furthermore, in both \textbf{(S-I)-(S-II)} we observe that the TS-DMA has higher variance but is faster than UCB-DMA. This is because, compared to UCB-DMA, we observe empirically that TS-DMA very rarely encounters the scenario where all of the firms gets pruned by the adversarial bandit module. We would also like to point that in some cases the regret can be negative (which is desirable) as is shown in Figure \ref{fig: Alpha}(c) for the red agent.

{
\begin{figure}[h]
    \begin{subfigure}{.48\textwidth}
    \centering
    \includegraphics[width=.8\linewidth]{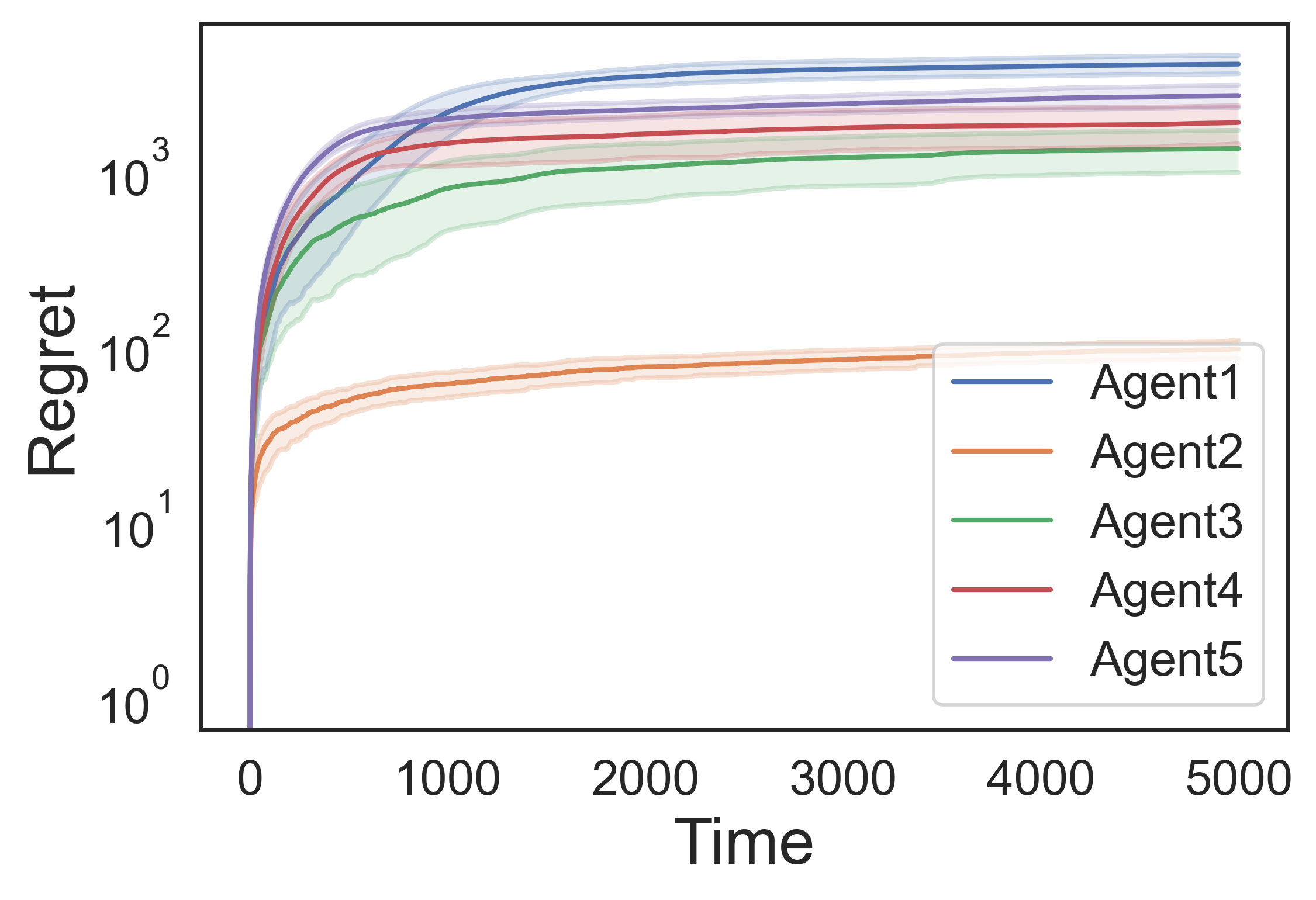}  
    \caption{UCB-DMA(Algorithm \ref{alg:prunedUCBFinal})}
    \label{fig:sub-first}
    \end{subfigure}
    \begin{subfigure}{.48\textwidth}
    \centering
    \includegraphics[width=.8\linewidth]{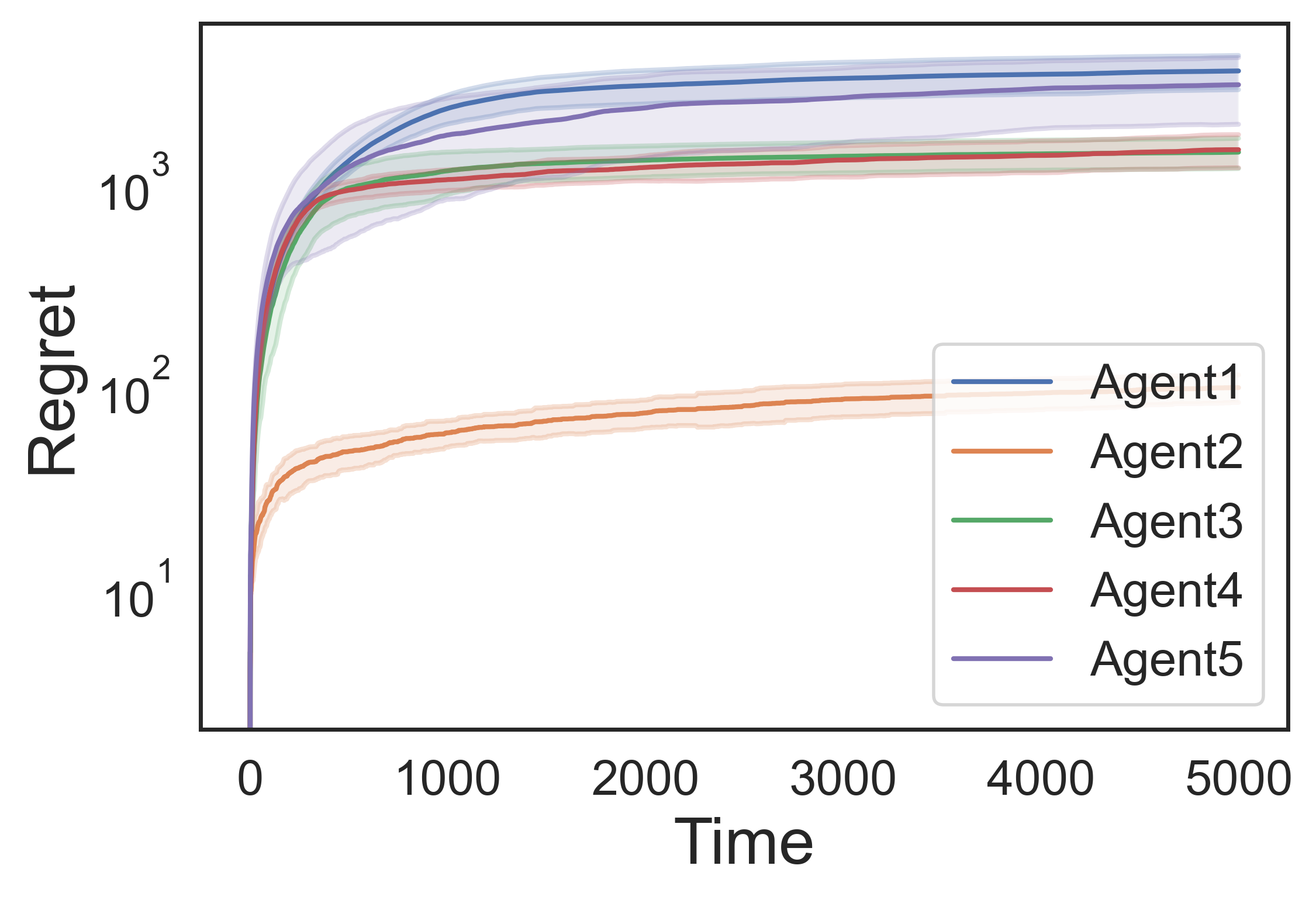}  
    \caption{UCB-DMA(Algorithm \ref{alg:prunedUCBFinal})}
    \label{fig:sub-first}
    \end{subfigure}
    \newline 
    \begin{subfigure}{.48\textwidth}
    \centering
    \includegraphics[width=.8\linewidth]{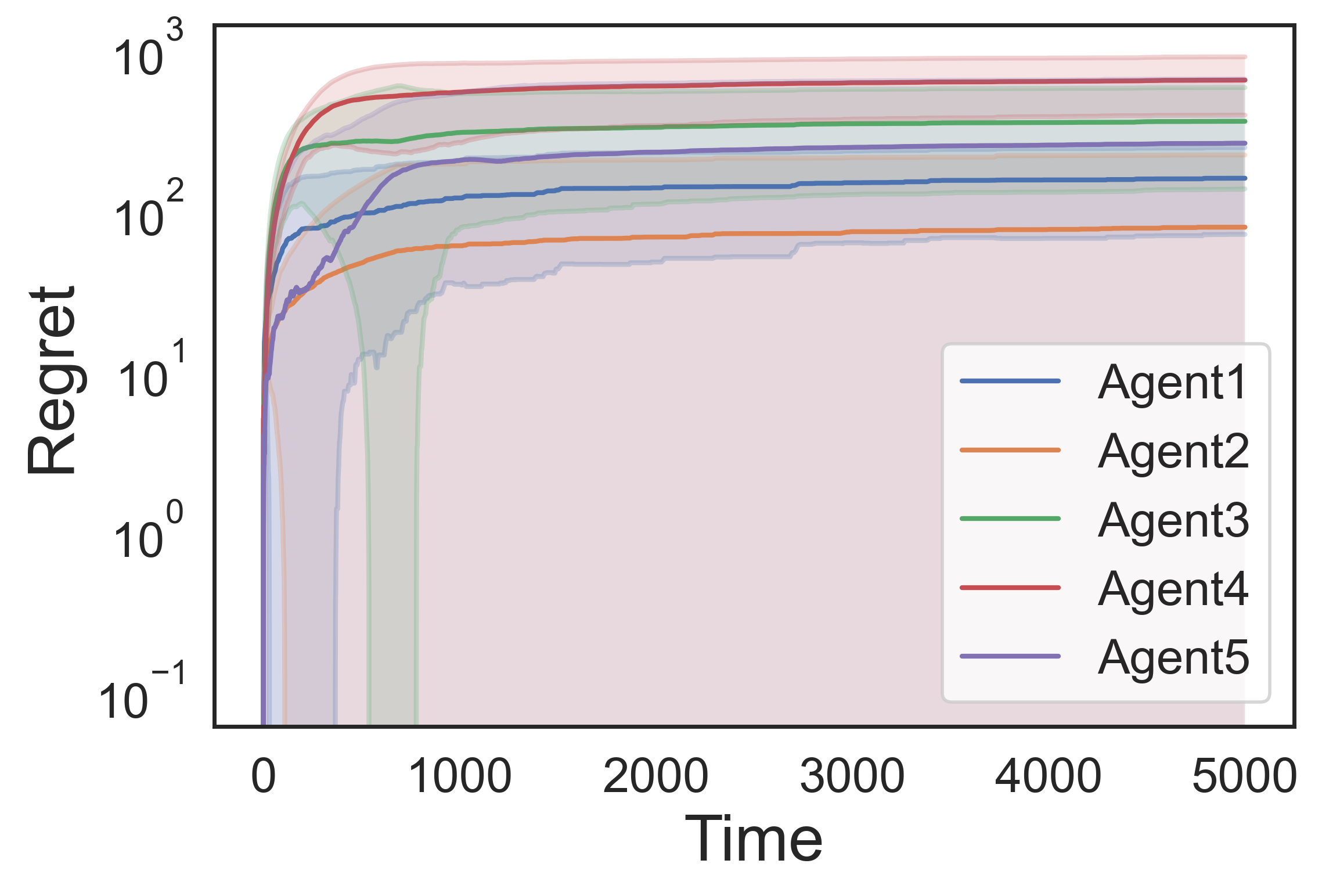}  
    \caption{TS-DMA(Algorithm \ref{alg:prunedTSFinal})}
    \label{subfig:TS_Alpha}
    \end{subfigure}
    \begin{subfigure}{.48\textwidth}
    \centering
    \includegraphics[width=.8\linewidth]{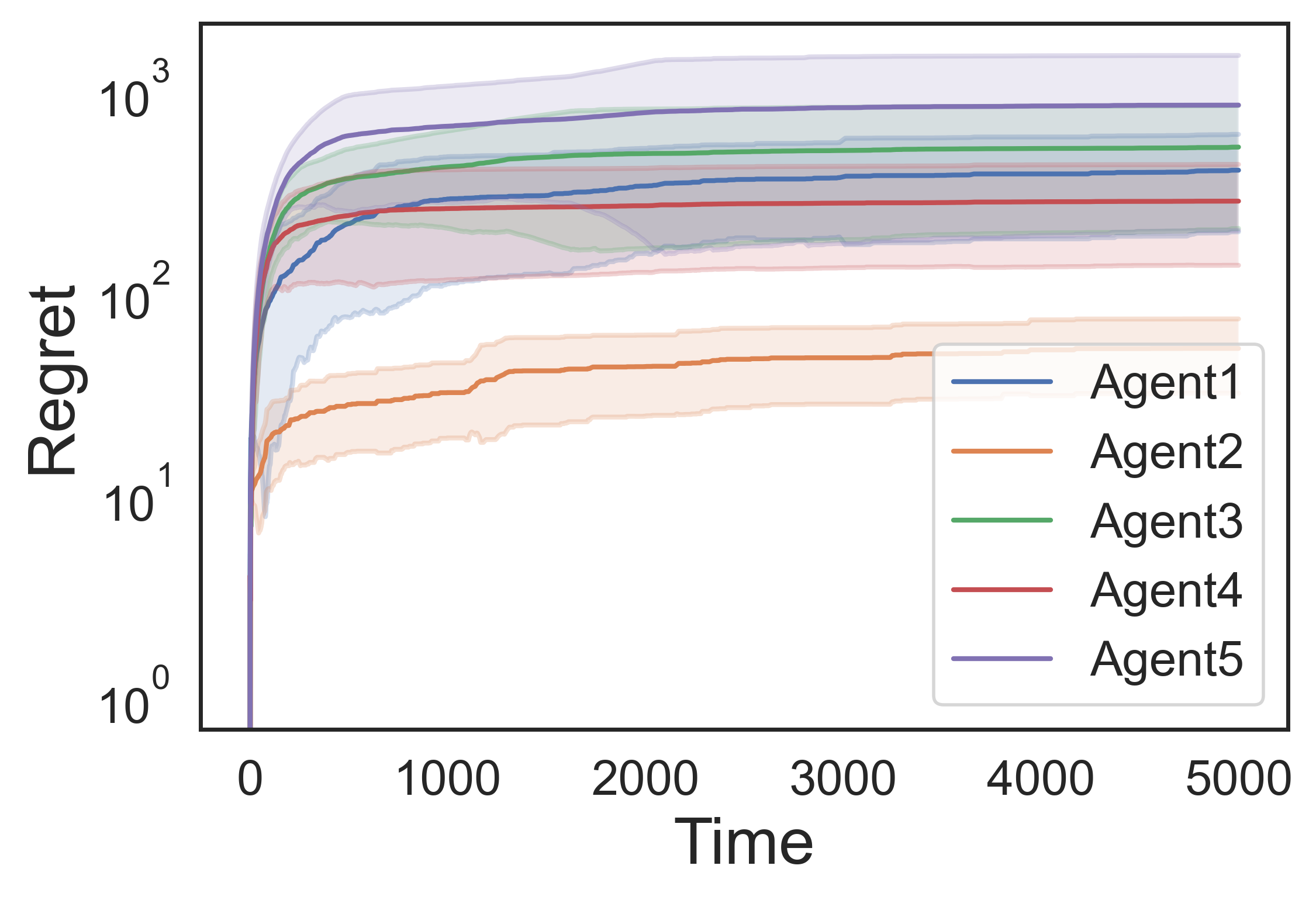}  
    \caption{TS-DMA(Algorithm \ref{alg:prunedTSFinal})}
    \label{fig:sub-first}
    \end{subfigure}
    \caption{Performance of UCB-DMA (Algorithm \ref{alg:prunedUCBFinal}) and TS-DMA(Algorithm \ref{alg:prunedTSFinal}) where \(\alpha-\)reducibilty condition is satisfied. We simulated the algorithm for two randomly generated preference orderings which satisfy the \(\alpha\)-reducibility condition. The simulation results of one of the preference ordering are presented in left column and for the other in right column. The bold lines and the corresponding shaded region denotes the mean regret and the variance of regret for the agents over 25 runs of the algorithm.   }
    \label{fig: Alpha}
\end{figure}}

\begin{figure}[h]
    \begin{subfigure}{.48\textwidth}
    \centering
    \includegraphics[width=.8\linewidth]{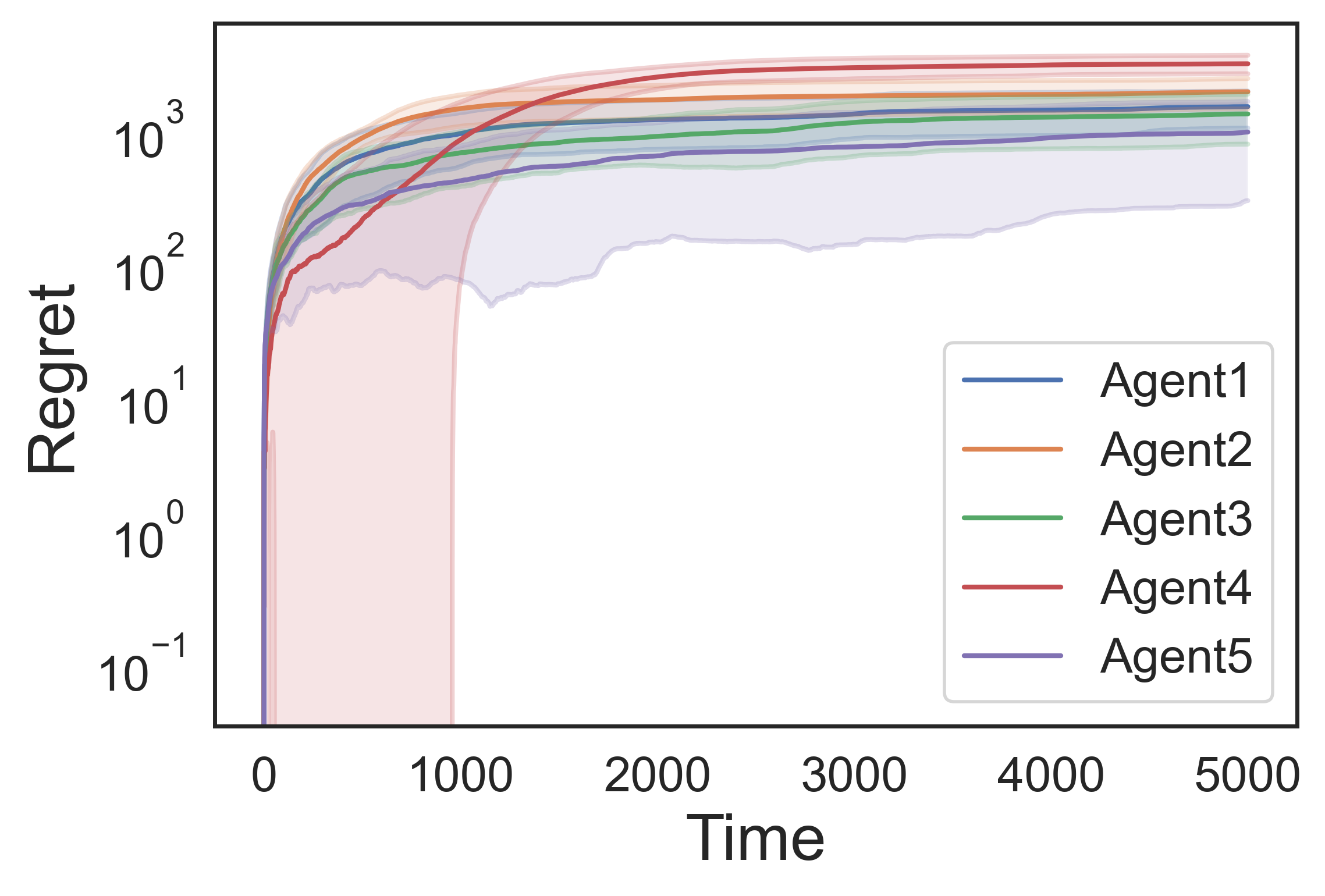}  
    \caption{UCB-DMA(Algorithm \ref{alg:prunedUCBFinal})}
    \label{fig:sub-first}
    \end{subfigure}
    \begin{subfigure}{.48\textwidth}
    \centering
    \includegraphics[width=.8\linewidth]{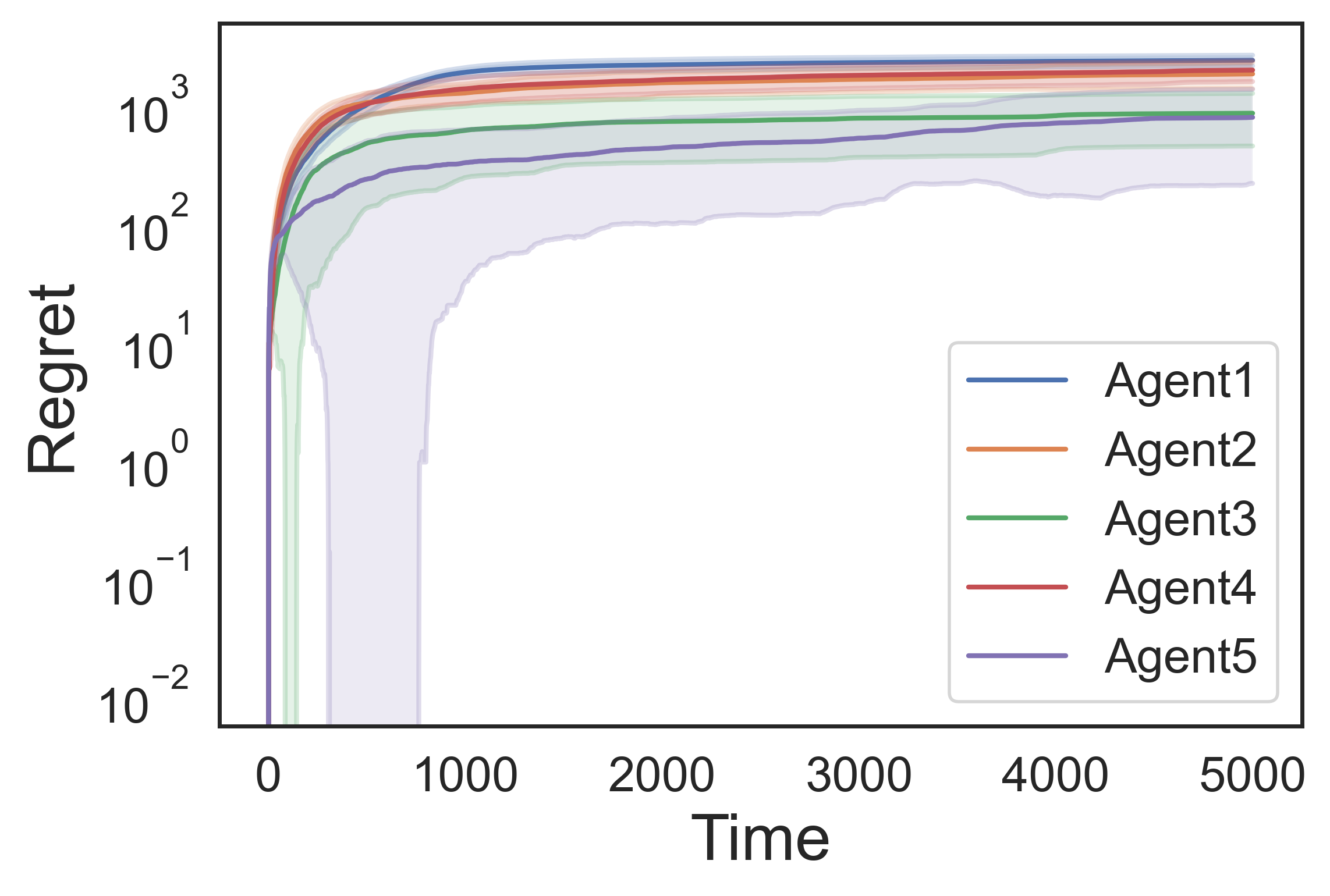}  
    \caption{UCB-DMA(Algorithm \ref{alg:prunedTSFinal})}
    \label{fig:sub-first}
    \end{subfigure}
    \newline 
    \begin{subfigure}{.48\textwidth}
    \centering
    \includegraphics[width=.8\linewidth]{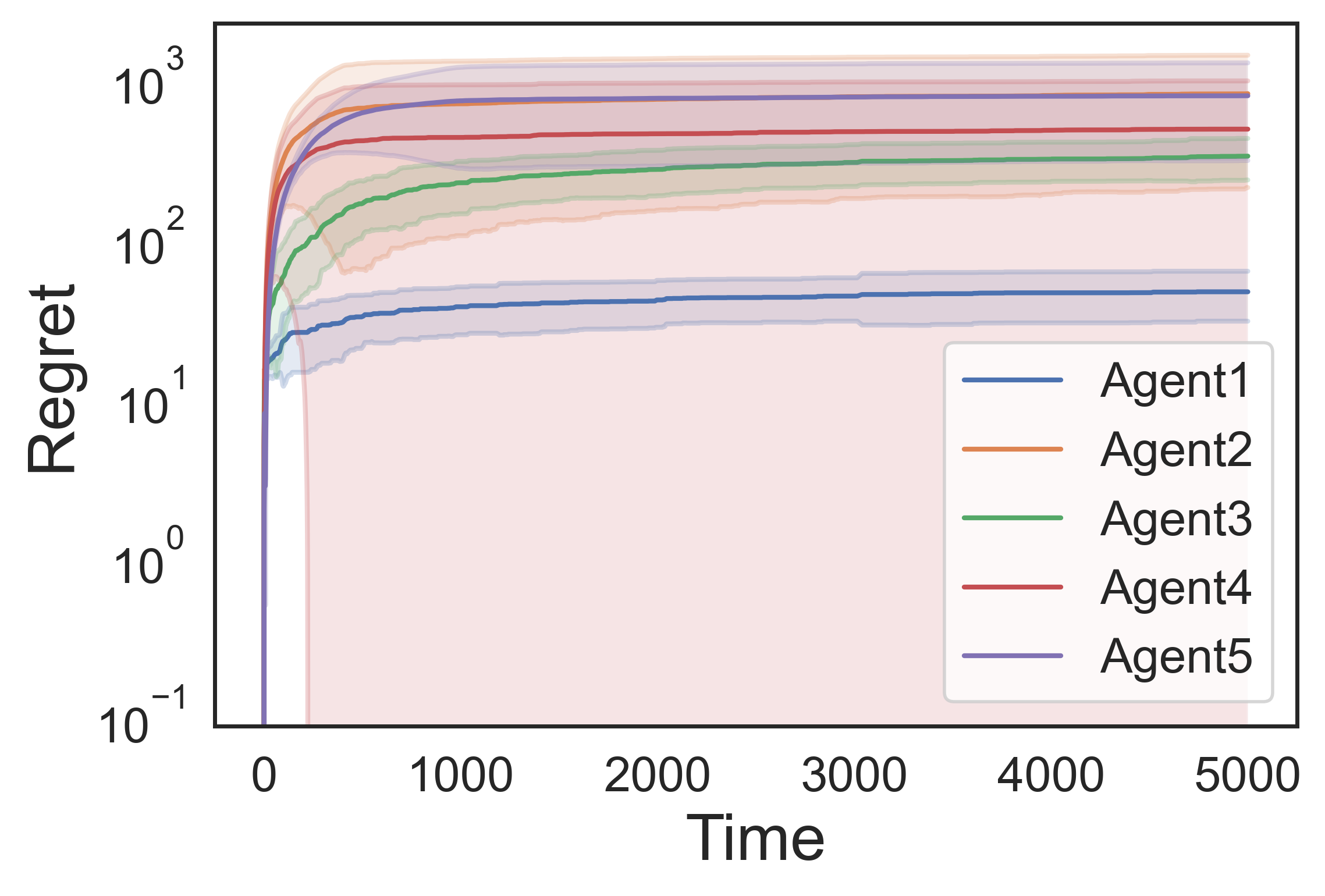}  
    \caption{TS-DMA(Algorithm \ref{alg:prunedTSFinal})}
    \label{fig:sub-first}
    \end{subfigure}
    \begin{subfigure}{.48\textwidth}
    \centering
    \includegraphics[width=.8\linewidth]{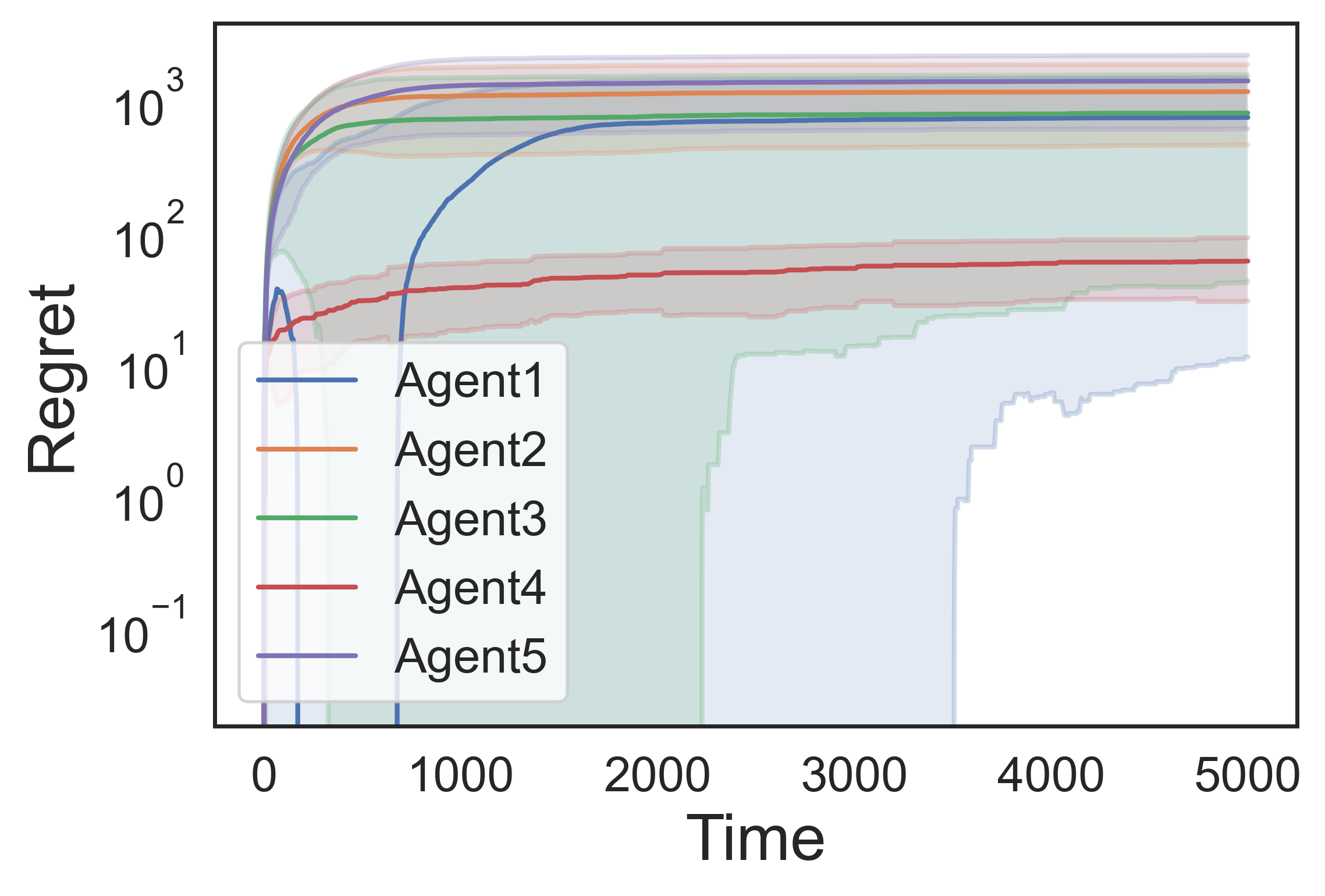}  
    \caption{TS-DMA(Algorithm \ref{alg:prunedTSFinal})}
    \label{fig:sub-first}
    \end{subfigure}
     \caption{Performance of UCB-DMA (Algorithm \ref{alg:prunedUCBFinal}) and TS-DMA(Algorithm \ref{alg:prunedTSFinal}) where \(\alpha-\)reducibilty condition is NOT satisfied. We simulated the algorithm for two randomly generated preference orderings which satisfy the \(\alpha\)-reducibility condition. The simulation results of one of the preference ordering are presented in left column and for the other in right column. The bold lines and the corresponding shaded region denotes the mean regret and the variance of regret for the agents over 25 runs of the algorithm.   }
    \label{fig: noAlpha}
\end{figure}

\section{Conclusions}\label{sec: Conclusion}
We consider a problem of bandit learning in two-sided matching markets comprising of agents and firms. We consider the setting where agents have unknown preferences over the firms. In this paper we present simple design  principle for decentralized, communication and coordination free algorithm for learning in two-sided matching markets. The primary challenge in learning in two-sided matching market is to balance exploration, exploitation and collision avoidance. We embed the aforementioned properties in the algorithm by a novel idea of blending a stochastic bandit subroutine  with an adversarial bandit subroutine . The stochastic bandit subroutine  is required for balancing the exploration-exploitation trade-off while the adversarial bandit subroutine  limits the collisions. As an instance of this design principle, we present an algorithm which has the stochastic bandit subroutine based on UCB and the adversarial bandit subroutine based on  Optimistic Mirror Descent algorithm. We show that if the preferences of agents satisfy certain structure known as $\alpha$-reducibility, then these algorithms incur a regret which is logarithmic in the time horizon. Two immediate directions of future work include: (i) extension of theoretical guarantees to general markets, and (ii) improving the dependence of regret bound on the number of agents.

\section*{Acknowledgements}
Research was partially supported by NSF under grant DMS 2013985 THEORINet: Transferable, Hierarchical,
Expressive, Optimal, Robust and Interpretable Networks and U.S. Office of Naval Research MURI
grant N00014-16-1- 2710.
\bibliography{refs}
\bibliographystyle{alpha}

\appendix

~\\
\centerline{{\fontsize{18}{13.5}\selectfont \textbf{Appendix}}}

\vspace{6pt}

In Section A, we review the adaptive adversarial algorithms proposed in \cite{bubeck2019improved} and specialize the regret bounds in the setup of this paper. In Section B we provide the proof of lemmas stated in Section \ref{ssec: RegretBound}. In Section C we provide proof of the main theorem of this paper stated in Section \ref{ssec: RegretBound}. In Section E we provide the Thompson sampling based variant of the Algorithm \ref{alg:prunedUCBFinal} and provide the analogous result as in Section \ref{ssec: RegretBound}. In Section F we provide a  table of notations for ease of reference.

\section{Adaptive Adversarial Algorithms}\label{appsec: Adaptive adversarial algorithms}
In this work we deploy the optimistic mirror descent based adversarial bandit module. We adapt algorithms from \cite{bubeck2019improved}, who improve the algorithm originally proposed in \cite{wei2018more}. In this section we recap the results from \cite{bubeck2019improved}. For the sake of completeness we restate the problem formulation and algorithm here. Towards the end we will specialize their results in the setting of this paper and state an useful result which presents the regret of such algorithms, in the context of the bandit structure described in Sec \ref{ssec: AdvBanditAlg}, in terms of the number of matchings and collisions.  

\subsection{Problem formulation from \cite{bubeck2019improved}}\label{ssec: ProblemBubeck} In this section we review algorithm described in \cite{bubeck2019improved} which is an improvement over the one described in \cite{wei2018more}. Consider a multi-armed bandit problem that proceeds in \(\tau\) time steps with \(A \leq \tau\) fixed actions. In each round \(t\), the algorithm selects one arm \(i(t)\in [A]\) and simultaneously an adversary decides the loss vector \(\ell(t)=(\ell_i(t))_{i\in[A]}\in [-1,1]^A\). Note that the adversary can be an adaptive one in that it can base its actions on the past rounds of algorithm's actions. The goal of the algorithm is to minimize the gap between total accumulated loss and the loss of best fixed arm in hindsight:
\begin{align*}
    \regretAdv(\tau) = \max_{i^\star \in [A]} \avg\ls{ \sum_{t=1}^{\tau}\ell_{i(t)}(t) - \sum_{t=1}^{\tau} \ell_{i^\star}(t) }.
\end{align*}

The algorithm is based on the optimistic mirror descend framework.  At any time \(t\), the algorithm samples an arm \(i(t)\in [A]\) with probability \(p(t)\in \Delta([A])\). The algorithm only receives the loss for the action taken and not other actions. Therfore, upon receiving the loss \(\ell_{i(t)}(t)\) the algorithm creates an unbiased estimator of losses for other actions. The estimator is
\begin{align*}
    \hat{L}_i(t) = \frac{\ell_i(t)-L(t-1)}{2p_i(t)}\one\lr{i(t)=i} + \frac{1+L(t-1)}{2}, \quad \forall \ i
\end{align*}
The unbiased loss estimate \(\hat{L}(t)\) is used to update the an auxiliary probability distribution \(x(t+1)\in \Delta([A])\) through an optimistic mirror descend update with learning rate \(\eta\). The optimistic mirror descend update is constructed from the Bregman divergence\footnote{Bregman divergence between two point \(x,y\) with respect to a convex regularizer \(\psi\) is given as \(D_{\psi}(x,y) = \psi(x)-\psi(y)-\lara{\nabla \psi(y),x-y}\).} associated with a log-barrier regularizer \(\R^{A}\ni x\mapsto \psi(x)=\frac{1}{\eta}\sum_{i=1}^{A}\ln\frac{1}{x_i}\) as follows 
\begin{align*}
    x(t+1) = \arg\min_{z\in\Delta([A])} \lara{z,\hat{L}(t)}+D_{\psi}(z,x(t)).
\end{align*}

The distribution \(x(t+1)\) is used to update the arm sampling distribution \(p(t+1)\) after mixing a small bias towards most recently picked arm as follows 
\begin{align*}
    p(t+1) = (1-\lambda(t+1))x(t+1)+\lambda(t+1)\textbf{e}_{i(t)}
\end{align*}
where \(\textbf{e}_{i^t}\in\R^{A}\) is an element of standard basis in \(\R^A\) with \(i(t)\) element as 1 and all others as zero and \(\lambda(t+1) = \frac{\lambda(1-L(t))}{2+\lambda(1-L(t))}\) for some \(\lambda>0\).

\begin{algorithm}
\SetAlgoLined
\LinesNumbered
\SetKwInOut{Input}{Input}
\SetKwInOut{Parameters}{Parameters}
\SetKwInOut{Output}{Output}
    \Parameters{\(\eta,\lambda\in (0,1), p(1),x(1)=\textsf{Unif}([A])\), \(\psi(x)=\frac{1}{\eta}\sum_{i=1}^{A}\ln\frac{1}{x_i}\)}
    \For{\(t=1,2,..,\tau\)}{
    Play \(i(t)\sim p(t)\) and observe \(L(t) = \ell_{i(t)}(t)\) \\
    Construct an unbiased estimator \(\hat{L}_i(t) = \frac{\ell_i(t)-L(t-1)}{2p_i(t)}\one\lr{i(t)=i} + \frac{1+L(t-1)}{2}\) for all \(i\in[A]\)
    \\
    Update \(x(t+1) = \arg\min_{z\in\Delta([A])} \lara{z,\hat{L}(t)}+D_{\psi}(z,x(t))\)\\
    \(p(t+1) = (1-\lambda(t+1))x(t+1)+\lambda(t+1)\textbf{e}_{i(t)}\) where \(\lambda(t+1) = \frac{\lambda(1-L(t))}{2+\lambda(1-L(t))}\)
    }
  \caption{Optimistic Mirror Descend based Adversarial Bandit Algorithm}
    \label{alg:BubeckAlg}
\end{algorithm}

Against the preceding backdrop, we restate Theorem 2 from \cite{bubeck2019improved} below: 
\begin{theorem}\label{thm: BubeckRecast}
Algorithm \ref{alg:BubeckAlg} with \(\eta\leq \frac{1}{50}\), \(\lambda =8\eta\) ensures that 
\begin{align*}
    \regretAdv(\tau) = \bigo\lr{\frac{A\ln(T)}{\eta}} +8\eta \avg\ls{V(T)}
\end{align*}
where \(V(T)\defas \sum_{t=2}^{T}|\ell_{i(t-1)}(t)-\ell_{i(t-1)}(t-1)|\) is commonly referred as ``path-length''.
\end{theorem}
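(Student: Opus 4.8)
\textbf{Proof proposal for Theorem~\ref{thm: BubeckRecast}.}
Since this theorem is a verbatim restatement of Theorem~2 from \cite{bubeck2019improved}, the plan is to reproduce the optimistic-mirror-descent regret analysis with the log-barrier regularizer, specialized to the loss model at hand. First I would set up the standard decomposition of the pseudo-regret for optimistic FTRL/mirror descent: writing the per-round comparison against a fixed arm \(i^\star\), split \(\langle p(t)-e_{i^\star}, \ell(t)\rangle\) into (i) the ``prediction error'' term \(\langle p(t)-x(t+1), \ell(t) - m(t)\rangle\) where \(m(t) = \tfrac{1+L(t-1)}{2}\mathbf{1}\) is the optimistic guess, (ii) a telescoping Bregman-divergence term \(\tfrac{1}{\eta}\big(D_\psi(e_{i^\star},x(t)) - D_\psi(e_{i^\star},x(t+1))\big)\), and (iii) a negative stability term \(-\tfrac{1}{\eta}D_\psi(x(t+1),x(t))\) that we keep to cancel the prediction error. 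The log-barrier regularizer contributes \(O(A\ln T / \eta)\) from the telescoped divergence term because the comparator can be taken in the interior (shrunk by \(1/T\)), which is the first summand in the bound.

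Next I would handle the prediction-error term, which is where the path-length \(V(T)\) and the mixing parameter \(\lambda\) enter. The key observations are: (a) the unbiased estimator satisfies \(\hat{L}_i(t) - m_i(t) = \tfrac{\ell_i(t)-L(t-1)}{2p_i(t)}\mathbf{1}\{i(t)=i\}\), so it is nonzero only on the played arm; (b) by the local-norm / stability analysis of log-barrier OMD, \(D_\psi(x(t+1),x(t))\) dominates \(\eta^2 \sum_i x_i(t)(\hat L_i(t)-m_i(t))^2\) up to constants when \(\eta\) is small enough (here \(\eta\le 1/50\)); and (c) the crucial cancellation: \(\langle p(t)-x(t+1),\ell(t)-m(t)\rangle\) can be bounded, after using the mixing \(p(t) = (1-\lambda(t))x(t)+\lambda(t)e_{i(t-1)}\), by a term proportional to \(\eta^{-1}D_\psi(x(t+1),x(t))\) plus a term of the form \(\lambda(t)\,|\ell_{i(t-1)}(t) - \ell_{i(t-1)}(t-1)|\). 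Summing the latter over \(t\) and using \(\lambda(t)\le \lambda = 8\eta\) gives the \(8\eta\,\mathbb{E}[V(T)]\) contribution. The choice \(\lambda = 8\eta\) is exactly what makes the bias-toward-last-arm term absorb the ``variance'' of the loss estimator on the previously played arm, which is the one coordinate where \(p_i(t)\) could otherwise be too small to control \(\hat L_i(t)^2\).

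The main obstacle I expect is step (c): making the cancellation between the optimistic prediction error and the negative stability term rigorous requires the careful multiplicative-stability property of the log-barrier (namely that consecutive iterates satisfy \(x_i(t+1) \in [x_i(t)/2,\, 2x_i(t)]\) when \(\eta\) is small), and then tracking that the only troublesome coordinate is \(i(t-1)\), whose probability is lower-bounded by \(\lambda(t)\) precisely because of the mixing step. This is the heart of the Wei--Luo / Bubeck--Cohen--Li argument and is somewhat delicate with constants; for the purposes of this paper I would cite \cite{bubeck2019improved} for the precise constant bookkeeping and reproduce only the structural decomposition above. A minor additional point is verifying that in our two-arm instantiation the loss range is indeed \([-1,1]\) (losses are \(0\) for pruning and \(\pm 1\) for requesting), so the general theorem applies verbatim with \(A=2\); this I would note in passing before invoking the bound.
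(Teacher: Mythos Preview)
The paper does not supply its own proof of this theorem: it is explicitly presented as a restatement of Theorem~2 from \cite{bubeck2019improved} and is invoked as a black box, with only a brief remark that the constraint \(\eta\le 1/162\) in the original can be relaxed to \(\eta\le 1/50\) (the proof in \cite{bubeck2019improved} lifts \cite[Theorem~7]{wei2018more}, whose constant was not sharp). Your sketch of the underlying optimistic-mirror-descent / log-barrier argument is a reasonable outline of the Wei--Luo / Bubeck--Li--Cohen analysis, and your closing plan---cite \cite{bubeck2019improved} for the constant bookkeeping---is exactly what the paper does; the structural decomposition you wrote out is extra and not present in the paper. One minor caution on your step~(c): the path-length term in the original analysis does not arise directly from the mixing contribution \(\lambda(t)\,|\ell_{i(t-1)}(t)-\ell_{i(t-1)}(t-1)|\) as you describe, but rather from controlling the local-norm of \(\hat L(t)-m(t)\) on the played arm \(i(t)\) (which involves \(|\ell_{i(t)}(t)-L(t-1)|\)) and then using the biased sampling toward \(i(t-1)\) to relate this in expectation to the desired path-length quantity; if you were to flesh out the argument rather than cite it, that is where the precision would be needed.
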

\begin{remark}
Note that Theorem 2 in \cite{bubeck2019improved} requires\footnote{Moreover, it is an algebraic exercise to establish that \(\eta<\frac{1}{24}\) and \(\lambda = \frac{1-12\eta -c\cdot\sqrt{1-24\eta}}{24}\) also works for some \(c\in (0,1)\). But we don't go in this direction to retain simplicity of algorithmic description.}.  But in fact the proof goes through for \(\eta \leq 1/50\).   \(\eta\leq 1/162\) and \(\lambda = 8\eta\). This is because in \cite{bubeck2019improved} for the proof of Theorem 2, they directly lift  \cite[Theorem 7]{wei2018more}  where \(\eta\leq 1/162\) which is not  tuned efficiently. 
\end{remark}
\subsection{Adaptive Adversarial Module}\label{ssec: ApplicationBubeck}
In this section we describe \pullModule\ in Algorithm \ref{alg:prunedUCBFinal} which is based on the algorithm presented in Sec ~\ref{ssec: ProblemBubeck}.

For any \((a,f)\in \actSet\times \firmSet\), the adversarial bandit module associated with \((a,f)\) ( as described in Algorithm \ref{alg:AdaptivePart} ) is a version of Algorithm \ref{alg:BubeckAlg} for case when there are two actions: \emph{request the firm \(f\)} or \emph{prune the firm \(f\)}. In addition, the loss incurred due to pruning the firm \(f\) is always 0 while the loss incurred due to pulling an firm \(f\) depends on whether the agent \(a\) got matched with it or collided with it.
In this special case of two actions, the optimistic mirror descent update (line 4 in Algorithm \ref{alg:BubeckAlg}) can be obtained in closed form (see Lemma \ref{lem: TwoArmBregman}). 
Note that the adversarial bandit module associated with any agent-firm tuple \((a,f)\) is only used when \(t\in \tau_{a,f}(T)\subset[T]\).  
\begin{lemma}\label{lem: AppRegredAdv}
Given a scalar \(\eta\leq \frac{1}{50}\), for any agent-firm pair \((a,f)\in\actSet\times \firmSet\), the regret of the adversarial bandit algorithm is bounded as 
\[
\avg[\regretAdv_{a,f}(\numSelect_{a,f}(T))] \leq   \bigo\lr{\frac{\log(T)}{\eta}} + 32\eta \avg\ls{\min\lb{\numPotMatches_{a,f}(T),\numPotCollide_{a,f}(T),\numMatches_{a,f}(T)+\numCollide_{a,f}(T)}},
\]
where \(\numPotMatches_{a,f}(T) = \sum_{t=1}^{T}\one\lr{\collisionEvent^{\comp}_{a,f}(t)}\) and \(\numPotCollide_{a,f}(T) = \sum_{t=1}^{T}\one\lr{\collisionEvent_{a,f}(t)}\).
\end{lemma}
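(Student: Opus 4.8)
The plan is to derive Lemma~\ref{lem: AppRegredAdv} directly from the recast path-length bound in Theorem~\ref{thm: BubeckRecast} by specializing it to the two-armed bandit instance that the subroutine \pullModule\ runs for a fixed pair \((a,f)\), and then bounding the path-length term \(V\) in three different ways so that the minimum appears.

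First I would set up the bookkeeping: for the fixed pair \((a,f)\), the relevant adversarial bandit runs only on the randomized timescale \(\numSelect_{a,f}(T) = \{t : \actQuery_{a,f}(t)=1\}\), so I apply Theorem~\ref{thm: BubeckRecast} with \(A=2\), \(\eta\le \tfrac1{50}\), \(\lambda = 8\eta\), and horizon \(|\numSelect_{a,f}(T)|\le T\). This immediately gives
\begin{align*}
\avg[\regretAdv_{a,f}(\numSelect_{a,f}(T))] = \bigo\!\left(\frac{\log(T)}{\eta}\right) + 8\eta\,\avg[V_{a,f}],
\end{align*}
where \(V_{a,f}\) is the path-length of the realized loss sequence \(\ell_{i(s)}(s)\) over consecutive rounds \(s\) in \(\numSelect_{a,f}(T)\). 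Since the pruning arm always has loss \(0\) and the requesting arm has loss \(\pm 1\) (as \(\lossPull_{a,f}(t) = \pullInstant_{a,f}(t)(1-2\isMatched_a(t))\)), every summand \(|\ell_{i(s-1)}(s)-\ell_{i(s-1)}(s-1)|\) lies in \(\{0,1,2\}\); I would argue that each nonzero summand can be charged to a round on which either a request occurred (i.e.\ \(\pullInstant_{a,f}=1\), meaning a match or a collision on \(f\)) or the previous request's arm changed. This yields \(V_{a,f} \le 2\big(\numMatches_{a,f}(T)+\numCollide_{a,f}(T)\big)\) up to a constant, giving the third term inside the minimum.

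The other two bounds on \(V_{a,f}\) come from the ``counterfactual'' quantities \(\numPotMatches_{a,f}(T)=\sum_t \one(\collisionEvent^{\comp}_{a,f}(t))\) and \(\numPotCollide_{a,f}(T)=\sum_t \one(\collisionEvent_{a,f}(t))\): a nonzero path-length increment at a round where the requesting arm is ``active'' forces the realized loss to differ between consecutive selections, and this can only happen when the environment toggles between a state where requesting \(f\) would succeed (\(\collisionEvent^{\comp}_{a,f}\)) and one where it would collide (\(\collisionEvent_{a,f}\)); bounding the number of toggles by the smaller of the two counts gives \(V_{a,f} = \bigo(\min\{\numPotMatches_{a,f}(T),\numPotCollide_{a,f}(T)\})\). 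Combining the three estimates and absorbing the factor \(2\) into the constant yields \(8\eta\,\avg[V_{a,f}] \le 32\eta\,\avg[\min\{\numPotMatches_{a,f}(T),\numPotCollide_{a,f}(T),\numMatches_{a,f}(T)+\numCollide_{a,f}(T)\}]\), which is the claimed bound. I expect the main obstacle to be the careful charging argument for the path-length: one must track that \(\ell\) is only updated on the sparse timescale \(\numSelect_{a,f}(T)\), relate consecutive \emph{selected} rounds (not consecutive calendar rounds) to the feedback events, and verify that the optimistic correction term \(\tfrac{1+\lossPull_{a,f}(t-1)}{2}\) does not inflate \(V\) — i.e.\ that the path-length in Theorem~\ref{thm: BubeckRecast} is genuinely controlled by the indicator events \(\collisionEvent_{a,f}\), \(\collisionEvent^{\comp}_{a,f}\) and the match/collision counts rather than by the number of selections, which could be as large as \(T\).
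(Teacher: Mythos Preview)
Your proposal is correct and follows essentially the same approach as the paper: apply Theorem~\ref{thm: BubeckRecast} with \(A=2\) and bound the path-length \(V_{a,f}\) three ways—by toggle-counting between \(\collisionEvent_{a,f}\) and \(\collisionEvent_{a,f}^{\comp}\) on the selected rounds to get the \(\min\{\numPotMatches_{a,f},\numPotCollide_{a,f}\}\) term, and by charging each nonzero increment to a round with \(\pullInstant_{a,f}=1\) (a match or collision) to get the \(\numMatches_{a,f}+\numCollide_{a,f}\) term, each with a factor of \(4\). Your closing worry about the optimistic correction term \(\tfrac{1+\lossPull_{a,f}(t-1)}{2}\) is unnecessary: the path-length in Theorem~\ref{thm: BubeckRecast} is defined in terms of the true losses \(\ell\), not the estimators \(\hat{L}\), so that term never enters \(V_{a,f}\).
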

\begin{proof}
To prove this lemma we only need to bound the path length \(V_{a,f}(T)\) in Theorem \ref{thm: BubeckRecast}. We claim that the path length \(V_{a,f}(T) \leq \min\{\numPotCollide_{a,f}(T),\numPotMatches_{a,f}(T)\}\). Recall \(\tau_{a,f}(T)=\{t\in T: \actQuery_{a,f}(t)=1\}\). For the remaining proof for any \(t\in\tau_{a,f}(T)\) by \(t-1\) we mean \(\max\{\mathfrak{t}<t:\mathfrak{t}\in\tau_{a,f}(T)\}\). 
For any \(t\in\numSelect_{a,f}(T)\), let's denote the loss due to pruning at time \(t\) by \(\ell_{a,f}^{(prune)}(t)\) and similarly let the loss due to pulling at time \(t\) by \(\ell_{a,f}^{(pull)}(t)\). Note that by construction, the loss due to the pruning operation is deterministic and zero. That is, for any \(t\in\numSelect_{a,f}(T)\),  \(\ell_{a,f}^{(prune)}(t)=0\) and \(\ell_{a,f}^{(pull)}(t)=1-2\isMatched_a(t)\). 
Furthermore, note that 
\begin{align*}
     V_{a,f}(T) &\leq \sum_{t\in\tau_{a,f}(T)} |\ell_{a,f}^{(pull)}(t)-\ell_{a,f}^{(pull)}(t-1)| \\ 
     &\underset{(a)}{\leq} 2\sum_{t\in\tau_{a,f}(T)} \one\lr{\collisionEvent_{a,f}(t-1),\collisionEvent_{a,f}^{\comp}(t)} + \one\lr{\collisionEvent_{a,f}^{\comp}(t-1),\collisionEvent_{a,f}(t)} \\ 
     &\leq 4 \min\lb{ \sum_{t=1}^{T}\one\lr{\collisionEvent_{a,f}^{\comp}(t)}, \sum_{t=1}^{T} \one\lr{\collisionEvent_{a,f}(t)} } \\ 
     &=  4 \min\lb{ \numPotMatches_{a,f}(T), \numPotCollide_{a,f}(T)}
\end{align*}
where the factor of 2 in 
is by the fact that a path length change in going from matching to potential collision or collision to potential matching is 2. The remaining inequalities follow from algebra. 

Furthermore, we have
\begin{align*}
    V_{a,f}(T) &= \sum_{t\in\numSelect_{a,f}(T)}\one\lr{\pullInstant_{a,f}(t)=1,\pullInstant_{a,f}(t-1)=1}|\ell_{a,f}^{(pull)}(t)-\ell_{a,f}^{(pull)}(t-1)|\\&\quad\quad\quad + \sum_{t\in\numSelect_{a,f}(T)} \one\lr{\pullInstant_{a,f}(t)=0,\pullInstant_{a,f}(t-1)=1}|\ell_{a,f}^{(pull)}(t)-\ell_{a,f}^{(pull)}(t-1)|
\\
    &\leq \sum_{t\in\numSelect_{a,f}(T)}\one\lr{\pullInstant_{a,f}(t)=1,\pullInstant_{a,f}(t-1)=1}|\ell_{a,f}^{(pull)}(t)-\ell_{a,f}^{(pull)}(t-1)|\\&\quad\quad\quad + 2\sum_{t\in\numSelect_{a,f}(T)} \one\lr{\pullInstant_{a,f}(t)=0,\pullInstant_{a,f}(t-1)=1} \\
    & = \sum_{t\in\numSelect_{a,f}(T)}\one\lr{\pullInstant_{a,f}(t)=1,\pullInstant_{a,f}(t-1)=1}|\ell_{a,f}^{(pull)}(t)-\ell_{a,f}^{(pull)}(t-1)|\\&\quad\quad\quad + 2\sum_{t=2}^{T} \one\lr{\pullInstant_{a,f}(t)=0,\pullInstant_{a,f}(t-1)=1} 
    \end{align*}
    \begin{align*}
    &= 2\sum_{t\in\numSelect_{a,f}(T)}\one\lr{\pullInstant_{a,f}(t)=1,\pullInstant_{a,f}(t-1)=1,\isMatched_{a}(t)=0,\isMatched_{a}(t-1)=1}\\ &\quad\quad\quad+2\sum_{t\in\numSelect_{a,f}(T)}\one\lr{\pullInstant_{a,f}(t)=1,\pullInstant_{a,f}(t-1)=1,\isMatched_{a}(t)=1,\isMatched_{a}(t-1)=0}\\&\quad\quad\quad +2 \sum_{t\in\numSelect_{a,f}(T)} \one\lr{\pullInstant_{a,f}(t)=0,\pullInstant_{a,f}(t-1)=1}  \end{align*}
    \begin{align*} 
    &\leq 2\lr{\sum_{t\in\numSelect_{a,f}(T)}\one\lr{ \pullInstant_{a,f}(t)=1,\isMatched_{a}(t)=0} + \one\lr{ \pullInstant_{a,f}(t-1)=1,\isMatched_{a}(t-1)=1}}\\&\quad\quad\quad +2 \sum_{t\in\numSelect_{a,f}(T)} \one\lr{\pullInstant_{a,f}(t)=0,\pullInstant_{a,f}(t-1)=1} \\ 
    &\leq 4\lr{\numMatches_{a,f}(T)+\numCollide_{a,f}(T)}
\end{align*}
\end{proof}

\subsection{Technical Lemma}
\begin{lemma}\label{lem: TwoArmBregman}
For any \(L\in \R^2\) and \(X\in \Delta(\R^2) \) the update \(X_{+} = \arg\min_{Z\in\Delta(\R^2)} \lara{Z,L}+D_{\psi}(Z,X)\) can be analytically solved to be \(X_{+} = [x_{+},1-x_+]\) where 
\begin{align}\label{eq: UpdateX}
x_{+} = \frac{2+\xi-\sqrt{4+\xi^2}}{2\xi}
\end{align}
where \(\xi = \eta(L_1-L_2)+\frac{1}{X_1}-\frac{1}{X_2}\). For better interpretation we provide the graph for update \eqref{eq: UpdateX} in the Figure \ref{fig:plot_}.
\end{lemma}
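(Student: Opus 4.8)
The plan is to reduce the two-dimensional constrained problem to a single scalar stationarity condition, solve the resulting quadratic, and pick out the correct root. First I would unfold the Bregman divergence: for $A=2$ the log-barrier regularizer is $\psi(x)=-\frac1\eta(\ln x_1+\ln x_2)$, so $\nabla\psi(x)=-\frac1\eta(1/x_1,1/x_2)$ and
\[
\langle Z,L\rangle+D_\psi(Z,X)=\langle Z,L\rangle+\psi(Z)-\psi(X)-\langle\nabla\psi(X),Z-X\rangle .
\]
Discarding the terms that do not depend on $Z$, the objective equals, up to an additive constant, $\langle Z,\,L-\nabla\psi(X)\rangle+\psi(Z)$. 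Since $\psi$ is strictly convex on the relative interior of $\Delta(\R^2)$ and blows up to $+\infty$ at the boundary, while the extra term is linear, this is a strictly convex function with a unique minimizer, which lies in the interior and is therefore characterized by first-order stationarity along the affine set $\{Z_1+Z_2=1\}$.

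Next I would parametrize $Z=[z,1-z]$ with $z\in(0,1)$, differentiate in $z$, substitute $\nabla\psi(X)_i=-1/(\eta X_i)$, and multiply through by $\eta$. The stationarity condition collapses to
\[
\eta(L_1-L_2)+\frac1{X_1}-\frac1{X_2}=\frac1z-\frac1{1-z},
\]
that is $\xi=(1-2z)/\big(z(1-z)\big)$ with $\xi$ exactly as in the statement. Clearing denominators gives the quadratic $\xi z^2-(\xi+2)z+1=0$, whose discriminant simplifies to $(\xi+2)^2-4\xi=\xi^2+4$, so $z=\big((\xi+2)\pm\sqrt{\xi^2+4}\big)/(2\xi)$.

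It remains to show the minus-sign root is the one in $(0,1)$, which then equals $x_+$. For $\xi>0$ I would verify $0<\xi+2-\sqrt{\xi^2+4}<2\xi$ by squaring the relevant inequalities, both of which reduce to $\xi>0$. For $\xi<0$ the product of the two roots is $1/\xi<0$, so exactly one is positive, and a sign check identifies it again as the minus-sign root; the degenerate case $\xi\to 0$ follows by continuity and gives $x_+\to\frac12$. Together with the uniqueness of the interior minimizer established above, this yields $X_+=[x_+,1-x_+]$ with $x_+$ as claimed.

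The computation is entirely elementary; the only point demanding a little care --- the nearest thing to an obstacle here --- is the root selection, where one must check across the sign of $\xi$ that the chosen root stays strictly inside $(0,1)$, since the log-barrier objective is finite only there and a spurious root would lie outside the simplex.
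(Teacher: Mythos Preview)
Your proposal is correct and follows essentially the same approach as the paper: parametrize by a single scalar, use strict convexity and boundary blow-up of the log-barrier to reduce to an interior first-order condition, obtain the quadratic $\xi z^2-(\xi+2)z+1=0$, and select the minus-sign root. Your root-selection argument is in fact more careful than the paper's, which simply asserts the plus-sign root exceeds $1$ without splitting on the sign of $\xi$.
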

\begin{proof}
For any \(X,Z\in \Delta(\R^2)\) we represent \(X=[x,1-x]\) and \(Z=[z,1-z]\) for \(x,z\in[0,1]\). Under this notation we can write \(D_{\psi}(Z,X) = \frac{1}{\eta}\lr{     \log\lr{\frac{x}{z}}+\log\lr{\frac{1-x}{1-z}}+\frac{z-x}{x}+\frac{x-z}{1-x}}\). Thus the optimization problem becomes  
\begin{align*}
    x_{+} &= \arg\min_{z\in[0,1]} \lara{z,L}+D_{\psi}(z,X) \\ 
    &= \arg\min_{z\in[0,1]}zL_1+(1-z)L_2+\frac{1}{\eta}\lr{     \log\lr{\frac{x}{z}}+\log\lr{\frac{1-x}{1-z}}+\frac{z-x}{x}+\frac{x-z}{1-x}} \\ 
    &=  \arg\min_{z\in[0,1]}zL_1+(1-z)L_2+\frac{1}{\eta}\lr{     -\log\lr{z}-\log\lr{1-z}+\frac{z}{x}-\frac{z}{1-x}}
\end{align*}

Let \(f(z) = zL_1+(1-z)L_2+\frac{1}{\eta}\lr{     -\log\lr{z}-\log\lr{1-z}+\frac{z}{x}-\frac{z}{1-x}} \). Note that \(f(0)=+\infty,\) and \(f(1) = +\infty\) so the minimizer of \(f(z)\) lies stricly inside \([0,1]\). Therefore \(\nabla f(x_+) = 0\). We compute 
\begin{align*}
    \nabla f(z) = L_1-L_2+\frac{1}{\eta(1-z)}-\frac{1}{\eta z}+\frac{1}{\eta x }-\frac{1}{\eta(1-x)} = L_1-L_2+\frac{2z-1}{\eta z(1-z)}+\frac{1}{\eta x}-\frac{1}{\eta(1-x)}
\end{align*}

Imposing the condition \(\nabla f(x_+)= 0\) implies that 
\begin{align*}
 \xi x_+^2-(2+\xi)x_{+}+1=0
\end{align*}
where \(\xi = \eta(L_1-L_2)+\frac{1}{x}-\frac{1}{1-x}\). Thus there are two possibilities 
\begin{align*}
    x_+ = \frac{2+\xi+\sqrt{4+\xi^2}}{2\xi}, \quad \text{or}\quad x_+ = \frac{2+\xi-\sqrt{4+\xi^2}}{2\xi}, 
\end{align*}
However the first possibility implies that \(x_+ > 1\), thus the only solution which lies in \((0,1)\) is the latter. This completes the proof. 
\end{proof}

\begin{figure}
    \centering
  \begin{tikzpicture}
 
\begin{axis}[xmin = -20, xmax = 20,
    ymin = 0, ymax = 1.0,
    xtick distance = 4,
    ytick distance = 0.5,
    grid = both,
    minor tick num = 1,
    major grid style = {lightgray},
    minor grid style = {lightgray!25},
    width = \textwidth,
    height = 0.25\textwidth,
    xlabel = {$\forgetMeasure$},
    ylabel = {$x_+$},
    grid,]
    
    \addplot[
        domain = -20:20,
        samples = 200,
        smooth,
        thick,
        blue,
    ] {(2+x-sqrt(x*x+4))/(2*x)};
\end{axis}
 \end{tikzpicture}
    \caption{Update function of pulling probability based on line 10 in Algorithm \ref{alg:AdaptivePart}}
    \label{fig:plot_}
\end{figure}
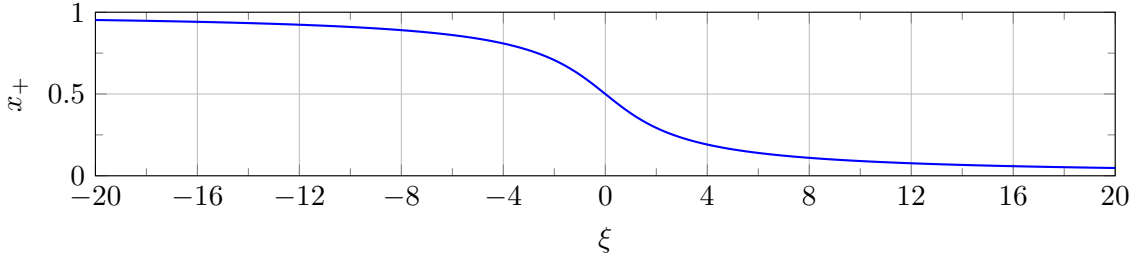

\section{Proofs of main Lemmas}
We introduce the following notation for every \(a\in\actSet,f\in\firmSet\)
\begin{align*}
    \collisionEvent_{a,f}(t)=\one\lr{\exists a'\in\actSet: \chosenFirm_{a'}(t)=f, \utilityFirm_{f}(a')>\utilityFirm_{f}(a)},
\end{align*}
which characterizes an event some agent more preferred than \(a\) by firm \(f\) has requested firm \(f\). We now present the proofs of Lemmas in main paper in the following subsections.
\subsection{Proof of Lemma \ref{lem: ChooseF}}
Proof of Lemma \ref{lem: ChooseF} follows directly from the following Lemma.
\begin{lemma}\label{lem: App_SubEvent}
    The event that agent \(a\) chooses a firm \(f\in\firmSet\) at time \(t\in[T]\) satisfies 
    \begin{align}\label{eq: ChosenFirmDecomp}
    \{\isMatched_a(t)=1,\chosenFirm_a(t)=f\} \subset \lb{\isMatched_a(t)=1,\UCB_{a,\stableArm_a}(t)\leq \UCB_{a,f}(t) }\bigcup \lb{\actChoose_{a,f}(t) = 1,\actChoose_{a,\stableArm_a}(t)=0}. 
    \end{align}
\end{lemma}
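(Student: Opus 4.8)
\textbf{Proof plan for Lemma \ref{lem: App_SubEvent}.}

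The plan is to fix an agent $a$, a firm $f \in \firmSet$, and a time $t$, and to argue on the event $\{\isMatched_a(t)=1,\chosenFirm_a(t)=f\}$. On this event agent $a$ actually requested $f$, so $f$ must have survived the pruning process at round $t$; that is, $\actChoose_{a,f}(t)=1$ (recall $\actChoose_{a,f}(t)=\one(\pullInstant_{a,f}(t)=1,\actQuery_{a,f}(t)=1)$), unless the fallback branch of Algorithm \ref{alg:prunedUCBFinal} was triggered in which all firms were pruned and $a$ requests $\argUCB_a^{[1]}$. I would handle these two cases separately. In the fallback case, $f = \arg\max_{f'}\UCB_{a,f'}(t)$, so in particular $\UCB_{a,\stableArm_a}(t) \le \UCB_{a,f}(t)$, and since $\isMatched_a(t)=1$ on the event, we land in the first set on the right-hand side of \eqref{eq: ChosenFirmDecomp}. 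This disposes of the fallback branch.

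In the main case $\actChoose_{a,f}(t)=1$. Now I split on whether $\actChoose_{a,\stableArm_a}(t)=0$ or $\actChoose_{a,\stableArm_a}(t)=1$. If $\actChoose_{a,\stableArm_a}(t)=0$, then we are directly in the second set on the right-hand side, namely $\{\actChoose_{a,f}(t)=1,\actChoose_{a,\stableArm_a}(t)=0\}$, and we are done. If instead $\actChoose_{a,\stableArm_a}(t)=1$, then agent $a$ requested $\stableArm_a$ at round $t$; but an agent requests exactly one firm per round, and on our event that firm is $f$, so $f = \stableArm_a$. In that subcase $\UCB_{a,\stableArm_a}(t) = \UCB_{a,f}(t)$ trivially, hence $\UCB_{a,\stableArm_a}(t)\le\UCB_{a,f}(t)$, and combined with $\isMatched_a(t)=1$ we again land in the first set.

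Actually the cleanest way to organize the argument is to observe the following dichotomy under $\{\isMatched_a(t)=1,\chosenFirm_a(t)=f\}$: either $\stableArm_a$ was \emph{considered} (i.e.\ $\actQuery_{a,\stableArm_a}(t)=1$) and was not pruned-out before $f$ was reached, or it was pruned. If $\stableArm_a$ was pruned before being requested — more precisely, if $\actChoose_{a,\stableArm_a}(t)=0$ — we are in the second set. Otherwise, by the descending-UCB order in which firms are considered and the fact that the process stops at the first requested firm, $f$ being the requested firm while $\stableArm_a$ was not pruned forces $\UCB_{a,f}(t)\ge \UCB_{a,\stableArm_a}(t)$ (either $f$ was reached before $\stableArm_a$ in the ordering, so its index is at least as large, or $f=\stableArm_a$); together with $\isMatched_a(t)=1$ this is the first set. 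The only genuinely delicate point — the step I expect to need the most care — is making precise the bookkeeping around $\actQuery$ and $\actChoose$ for the fallback branch where every firm is pruned, since then no $\actChoose_{a,f'}(t)$ equals $1$ yet $a$ still requests $\argUCB_a^{[1]}$; I would make sure the event decomposition genuinely covers this by noting that in that branch the requested firm maximizes $\UCB_{a,\cdot}(t)$, which subsumes it into the first set. Assembling these cases gives the claimed inclusion.
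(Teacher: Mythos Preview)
Your proposal is correct and follows essentially the same approach as the paper: both first separate the fallback branch (all firms pruned, so $f$ maximizes the UCB index and hence $\UCB_{a,\stableArm_a}(t)\le\UCB_{a,f}(t)$) from the main branch $\actChoose_{a,f}(t)=1$, and then within the main branch argue that either $\stableArm_a$ was effectively pruned ($\actChoose_{a,\stableArm_a}(t)=0$) or the UCB comparison holds. The only cosmetic difference is that the paper splits the main branch on the sign of $\UCB_{a,\stableArm_a}(t)-\UCB_{a,f}(t)$ and then deduces $\actChoose_{a,\stableArm_a}(t)=0$ from the ordering, whereas you split directly on $\actChoose_{a,\stableArm_a}(t)$ and observe that $\actChoose_{a,\stableArm_a}(t)=1$ together with $\actChoose_{a,f}(t)=1$ forces $f=\stableArm_a$; these are equivalent bookkeeping choices.
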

\begin{proof}
For any agent \(a\) fix some \(f\). Recall that \(\chosenFirm_a(t) = f\) implies that agent a has chosen to pull arm \(f\). Based on design of Algorithm \ref{alg:prunedUCBFinal} there are two possibilities: either all the firms with higher \UCB~ than firm \(f\) got pruned and the firm \(f\) was requested; or all of the firms in \(\firmSet\) got pruned and the firm \(f\) got selected as it was having highest \UCB.
Thus, 
\begin{align*}
&\lb{\chosenFirm_a(t) = f} = \lb{\actChoose_{a,f}(t) = 1} \bigcup \lb{\actChoose_{a,f}(t)=0~\forall~f\in\firmSet,\UCB_{a,f} \geq  \UCB_{a,f'} \ \forall \ f' \in \firmSet}
\\&\underset{(i)}{=} \lb{\actChoose_{a,f}(t) = 1,\UCB_{a,f_a^*}(t)\geq \UCB_{a,f}(t)}\bigcup \lb{\actChoose_{a,f}(t) = 1,\UCB_{a,f_a^*}(t)\leq \UCB_{a,f}(t)} \\ 
&\quad\quad \bigcup \lb{\actChoose_{a,f}(t)=0~\forall~f\in\firmSet,\UCB_{a,f} \geq \UCB_{a,f'} \ \forall \ f' \in \firmSet} \\ 
&\underset{(ii)}{\subset}\lb{\actChoose_{a,f}(t) = 1,\UCB_{a,f_a^*}(t)\geq \UCB_{a,f}(t)}\bigcup \lb{\actChoose_{a,f}(t) = 1,\UCB_{a,f_a^*}(t)\leq \UCB_{a,f}(t)} \\ 
&\quad\quad \bigcup \lb{\UCB_{a,\stableArm_a}(t)\leq \UCB_{a,f}(t) } \end{align*}
\begin{align*}
&\underset{(iii)}{\subset} \lb{\actChoose_{a,f}(t) = 1,\actChoose_{a,\stableArm_a}(t)=0,\UCB_{a,f_a^*}(t)\geq \UCB_{a,f}(t)}\\ 
&\quad\quad \bigcup \lb{\actChoose_{a,f}(t) = 1,\UCB_{a,f_a^*}(t)\leq \UCB_{a,f}(t)} \bigcup \lb{\UCB_{a,\stableArm_a}(t)\leq \UCB_{a,f}(t) } \\ 
&\underset{(iv)}{\subset}  \lb{\actChoose_{a,f}(t) = 1,\actChoose_{a,\stableArm_a}(t)=0,\UCB_{a,f_a^*}(t)\geq \UCB_{a,f}(t)} \bigcup \lb{\UCB_{a,\stableArm_a}(t)\leq \UCB_{a,f}(t) }\\
&\underset{(v)}{\subset}  \lb{\actChoose_{a,f}(t) = 1,\actChoose_{a,\stableArm_a}(t)=0} \bigcup \lb{\UCB_{a,\stableArm_a}(t)\leq \UCB_{a,f}(t) }
\end{align*}
where in \((i)\) we introduced two complementary events \(\{\UCB_{a,f_a^*}(t)\geq \UCB_{a,f}(t)\}\) and \(\{\UCB_{a,f_a^*}(t)\leq \UCB_{a,f}(t)\}\). Note that \((ii)\)
holds due to the fact that \(\{\UCB_{a,\chosenFirm_a(t)} \geq  \UCB_{a,f} \ \forall \ f \in \firmSet\}\) implies \(\{\UCB_{a,\chosenFirm_a(t)} \geq \UCB_{a,\stableArm_a}\}\). Furthermore, \((iii)\) holds due to the fact that a firm with lower UCB will be pulled only if all the firms with higher UCB are pruned. Finally, \((iv),(v)\) holds by dropping appropriate events. 

The result follows by noting that 
\begin{align*}
    &\one\lr{\isMatched_a(t)=1,\chosenFirm_a(t)=f} \\  &\subset \lr{\lb{\actChoose_{a,f}(t) = 1,\actChoose_{a,\stableArm_a}(t)=0} \bigcup \lb{\UCB_{a,\stableArm_a}(t)\leq \UCB_{a,f}(t) }}\bigcap\one\lr{\isMatched_a(t)=1} \\ 
    &\subset \lb{\isMatched_a(t)=1,\UCB_{a,\stableArm_a}(t)\leq \UCB_{a,f}(t) }\bigcup \lb{\actChoose_{a,f}(t) = 1,\actChoose_{a,\stableArm_a}(t)=0}
\end{align*}
\end{proof}
\begin{remark}\label{rem: TSUCB}
The results in Lemma \ref{lem: App_SubEvent} holds even if we replace UCB subroutine in Algorithm \ref{alg:prunedUCBFinal} with any other index based stochastic bandit subroutine, e.g. Thompson sampling. 
\end{remark}
\subsection{Proof of Lemma \ref{lem: MainLemma}}
We present the proof of each result \textbf{(L1)-(L5)} in Lemma \ref{lem: MainLemma} individually in the following subsubsections. Before that we define an important notation as follows:
\begin{align}
    \collisionEvent_{a,f}(t) = \one\lr{ \exists~a'\in \actSet: \chosenFirm_{a'}(t)=f, \utilityFirm_{f}(a')\geq \utilityFirm_{f}(a)}
\end{align}
\subsubsection{Proof of \textbf{(L1)} in Lemma \ref{lem: MainLemma}}
From \eqref{eq: RegretDefinition} we get 
\begin{align*}
     \sum_{i=1}^{k}\sum_{a\in\subActSet_i}R_{a} &\leq   \bar{\Delta}  \sum_{i=1}^{k}\sum_{a\in\subActSet_i}\sum_{f\in\subArm_a}\avg[\numMatches_{a,f}(T)] +   u \sum_{i=1}^{k}\sum_{a\in\subActSet_i}\sum_{f\in F\backslash \{f^*_a\}} \avg[C_{a,f}(T)]\\ &\hspace{3cm} + \bar{u}\sum_{i=1}^{k}\sum_{a\in\subActSet_i} \avg[\numCollide_{a,\stableArm_a}(T)], \\ 
    &\leq \bar{C}\bigg( \sum_{i=1}^{k}\sum_{a\in\subActSet_i}\sum_{f\in\subArm_a}\avg[\numMatches_{a,f}(T)] +    \sum_{i=1}^{k}\sum_{a\in\subActSet_i}\sum_{f\in F\backslash \{f^*_a\}} \avg[C_{a,f}(T)] \\&\hspace{3cm}+ \sum_{i=1}^{k}\sum_{a\in\subActSet_i} \avg[\sum_{t=1}^{T}H_{a,f^*_a}(t)]\bigg)
\end{align*}
where \(\bar{\gap} = \max_{a,f}\gap_{a}(f)\) and \(\bar{u} =\max_{a}\utilityAgent_a(\stableArm_a)\). This completes the proof
\subsubsection{Proof of \textbf{(L2)} in Lemma \ref{lem: MainLemma}}
Proof of \textbf{(L2)} in Lemma \ref{lem: MainLemma} follows immediately from the following more general result. 
\begin{lemma}\label{lem: numMatchingApp}
For any agent \(a\in\actSet\) using Algorithm \ref{alg:prunedUCBFinal} the expected number of matches with any set \(\tilde{\firmSet}\subseteq \subArm_a\) can be bounded as  
\begin{align*}
    \avg[\numMatches_{a,\tilde{\firmSet}}(T)] \leq \bigo\lr{|\tilde{\firmSet}|\lr{\log(T)+\frac{\log(T)}{\gap^2}}+\avg\ls{\sum_{t=1}^{T}\one\lr{\collisionEvent_{a,\stableArm_a}(t) }}}
\end{align*}
where \(\gap = \min_{a,f}\gap_{a}(f)\).
\end{lemma}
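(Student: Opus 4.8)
The plan is to bound $\avg[\numMatches_{a,\tilde{\firmSet}}(T)]$ by summing over $f\in\tilde\firmSet$ the expected number of successful matches with each suboptimal firm $f$, and to exploit the event decomposition from Lemma \ref{lem: ChooseF}. First I would write
$\numMatches_{a,f}(T) = \sum_{t=1}^T \one\lr{\isMatched_a(t)=1, \chosenFirm_a(t)=f}$ and apply Lemma \ref{lem: ChooseF} to get
$\numMatches_{a,f}(T) \leq \sum_{t=1}^T \one\lr{\isMatched_a(t)=1, \UCB_{a,\stableArm_a}(t)\leq \UCB_{a,f}(t)} + \sum_{t=1}^T \one\lr{\actChoose_{a,f}(t)=1, \actChoose_{a,\stableArm_a}(t)=0}$. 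This splits the count into a \emph{statistical} term, which should behave like the classical UCB analysis, and a \emph{competitive} term, which I will control via the adversarial bandit regret bound (Lemma \ref{lem: AppRegredAdv}).

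For the statistical term, the standard concentration argument for UCB applies essentially verbatim. Since $f\in\subArm_a$ means $\utilityAgent_a(f) < \utilityAgent_a(\stableArm_a)$, with a gap of at least $\gap$, the event $\{\UCB_{a,\stableArm_a}(t)\leq\UCB_{a,f}(t)\}$ requires either the empirical mean of $\stableArm_a$ to be underestimated (a low-probability event summing to $O(1)$ over the horizon by a union bound over the sub-Gaussian tails, using the self-normalizing confidence width built into the index), or the confidence interval of $f$ to still be wide, i.e. $\numMatches_{a,f}(t) = O(\log(T)/\gap^2)$. Crucially, because the exploration bonus only counts \emph{successful} matches $\numMatches_{a,f}$, one has to be slightly careful: each contribution to this term \emph{is} a successful match with $f$ (the indicator includes $\isMatched_a(t)=1$), so the total number of such rounds is itself bounded by $O(\log(T)/\gap^2)$ per firm, giving $O(|\tilde\firmSet|\log(T)/\gap^2)$ after summing. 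The residual low-probability events contribute $O(|\tilde\firmSet|)$, which I can absorb into the $O(|\tilde\firmSet|\log(T))$ term.

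For the competitive term, I would bound $\sum_t \one\lr{\actChoose_{a,f}(t)=1, \actChoose_{a,\stableArm_a}(t)=0}$ for each $f$ separately using the adversarial bandit subroutine attached to the pair $(a,f)$. The key observation is that whenever $\actChoose_{a,f}(t)=1$ and $\actChoose_{a,\stableArm_a}(t)=0$, the stable firm $\stableArm_a$ was considered but pruned at time $t$ (since $f$ has lower UCB index than $\stableArm_a$ would, once $\stableArm_a$ is well-estimated — here I need to pair this with the statistical term to ensure $\UCB_{a,\stableArm_a}(t) > \UCB_{a,f}(t)$ on the relevant rounds so that $\stableArm_a$ is indeed encountered in the ordering before $f$). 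On those rounds, the AB subroutine for $\stableArm_a$ chose to prune; if it were instead never collided at $\stableArm_a$ (i.e. $\collisionEvent_{a,\stableArm_a}^{\comp}(t)$ holds), pruning incurs loss $0$ while requesting would have incurred loss $-1$, so persistent pruning when matching is possible forces the AB regret $\regretAdv_{a,\stableArm_a}$ to grow linearly. Applying Lemma \ref{lem: AppRegredAdv} with $\eta = \Theta(1)$ small, the AB regret is $O(\log(T)) + O(\numPotMatches_{a,\stableArm_a}(T)) \wedge O(\numPotCollide_{a,\stableArm_a}(T))$; rearranging, the number of ``wrongful prune'' rounds is $O(\log(T))$ plus a term controlled by the number of \emph{potential collisions} at the stable match, namely $\sum_t \one\lr{\collisionEvent_{a,\stableArm_a}(t)}$. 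Summing over $f\in\tilde\firmSet$ and combining with the statistical term yields the claimed bound; the potential-collision term appears once (it does not get multiplied by $|\tilde\firmSet|$) because each round it is charged to at most one pruning event that actually leads to a suboptimal match.

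The main obstacle I anticipate is the careful bookkeeping that ties the two terms together: the event $\actChoose_{a,\stableArm_a}(t)=0$ alone does not certify that $\stableArm_a$ was \emph{reached} in the UCB ordering before $f$ was requested, so I must intersect with the good statistical event $\{\UCB_{a,\stableArm_a}(t)>\UCB_{a,f}(t)\}$ and handle the complementary bad event using the first (statistical) term — making the decomposition slightly circular, to be resolved by noting both contributions are already being counted. A secondary subtlety is translating the AB regret bound (stated on the randomized timescale $\numSelect_{a,\stableArm_a}(T)$) back to real time and extracting a clean bound on the number of ``prune-when-matchable'' rounds in terms of $\sum_t \one\lr{\collisionEvent_{a,\stableArm_a}(t)}$ rather than the larger quantity $\numMatches_{a,\stableArm_a}(T)+\numCollide_{a,\stableArm_a}(T)$; this is where the min inside Lemma \ref{lem: AppRegredAdv} is essential.
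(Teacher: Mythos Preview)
Your proposal is correct and follows essentially the same route as the paper: decompose via Lemma~\ref{lem: ChooseF} into a statistical term (handled by the standard UCB argument, yielding $O(\log(T)/\gap^2)$ per suboptimal firm) and a competitive term controlled by the adversarial bandit regret on the pair $(a,\stableArm_a)$ (not $(a,f)$, as you correctly switch to mid-paragraph). The paper streamlines your ``main obstacle'' by first collapsing $\sum_{f\in\tilde\firmSet}\sum_t \one\{\actChoose_{a,f}(t)=1,\actChoose_{a,\stableArm_a}(t)=0\}\le \sum_t \one\{\actChoose_{a,\stableArm_a}(t)=0\}$ (at most one firm is requested per round) and then invoking a separate Lemma~\ref{lem: BoundingTermA}, which packages exactly the AB-regret manipulation you describe; the event $\{\actQuery_{a,\stableArm_a}(t)=1\}$ is automatic on the set where $\UCB_{a,\stableArm_a}(t)>\UCB_{a,f}(t)$ and $\actChoose_{a,f}(t)=1$, so there is no circularity.
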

\begin{proof}
Note that we call an agent \(a\) matches with firm \(f\) at time \(t\) if \(\isMatched_a(t)=1\) and \(\chosenFirm_a(t) = f\). Therefore the total number of matchings between \(a\) and \(f\) till time \(T\) is \(\numMatches_{a,f}(T) = \sum_{t=1}^{T}\one\lr{\isMatched_a(t)=1,\chosenFirm_a(t)=f}\).  Therefore from Lemma \ref{lem: ChooseF} the following holds for every \(f\in\tilde{\firmSet}\):
\begin{align*}
&\numMatches_{a,\tilde{\firmSet}}(\horizon) = \sum_{f\in\tilde{\firmSet}}\sum_{t=1}^{\horizon} \one\lr{ \isMatched_a(t) = 1 , \chosenFirm_a(t) = f }\\
&\leq \sum_{f\in\tilde{\firmSet}}\sum_{t=1}^{\horizon}\lr{\one \lr{\isMatched_{a}(t)=1,\chosenFirm_a(t)=f,\UCB_{a,f}(t)\geq \UCB_{a,\stableArm_a}(t)}  +   \one\lr{\actChoose_{a,f}(t)=1,\actChoose_{a,\stableArm_a}=0}}\\
&\leq \sum_{f\in\tilde{\firmSet}}\sum_{t=1}^{\horizon}\one \lr{\isMatched_{a}(t)=1,\chosenFirm_a(t)=f,\UCB_{a,f}(t)\geq \UCB_{a,\stableArm_a}(t)} \\&\hspace{1cm} +  \sum_{t=1}^{\horizon}\sum_{f\in\tilde{\firmSet}} \one\lr{\actChoose_{a,f}(t)=1,\actChoose_{a,\stableArm_a}=0}
\end{align*}
\begin{align*}
&\leq\sum_{f\in\tilde{\firmSet}}\underbrace{\sum_{t=1}^{\horizon}\one \lr{\isMatched_{a}(t)=1,\chosenFirm_a(t)=f,\UCB_{a,f}(t)\geq \UCB_{a,\stableArm_a}(t)}}_{\text{Term A}}  +  \underbrace{\sum_{t=1}^{\horizon} \one\lr{\actChoose_{a,\stableArm_a}=0}}_{\text{Term B}}
\end{align*}

For any fixed firm \(f\in\tilde{\firmSet}\) we now bound Term A. For that purpose, 
    define an event \[\eventUCB_{a,f}(t) \defas \lb{ \UCB_{a,f}(t) \geq \utilityAgent_a({\stableArm_a})-\epsilon }  =  \left\{\mean_{a,f}(t-1)+\sqrt{\frac{2\log(B_a(t))}{\numMatches_{a,f}(t-1)}}\geq \utilityAgent_a({\stableArm_a})-\epsilon\right\},\] where 
    \(
    B_a(t)\Let 1+\totalMatch_a(t)\log^2\lr{\totalMatch_a(t)} \leq 1+t\log^2(t) \teL \bar{B}(t),   \)\footnote{ The inequality holds due to the fact that \(\totalMatch_a(t)\leq t\) and monotonicity of the mapping \(x\mapsto 1+x\log^2(x)\).}.

    Using this notation, we have
\begin{align*}
    \text{Term A}=& \underbrace{\sum_{t=1}^{\horizon}\one(\isMatched_{a}(t)=1,\chosenFirm_a(t)=f,\UCB_{a,f}(t)\geq \UCB_{a,\stableArm_a}(t),\eventUCB_{a,f}(t))}_{\text{Term C}}\\&\hspace{1cm}+\underbrace{\sum_{t=1}^{\horizon}\one(\isMatched_a(t)=1,\chosenFirm_a(t)=f,\UCB_{a,f}(t)\geq \UCB_{a,\stableArm_a}(t),\eventUCB_{a,f}^\comp(t))}_{\text{Term D}}
\end{align*}
We shall first bound \(\avg[\text{Term C}]\) below: 
\begin{align*}
    &\text{Term C} = \sum_{t=1}^{\horizon}\one(\isMatched_a(t)=1,\chosenFirm_a(t)=f,\UCB_{a,f}(t)\geq \UCB_{a,f_a^*}(t),\eventUCB_{a,f}(t)) \\
    &\leq \sum_{t=1}^{\horizon}\one(\isMatched_{a}(t)=1,\chosenFirm_a(t)=f,\eventUCB_{a,f}(t)) 
    \\ &=  \sum_{t=1}^{\horizon}\one\bigg({\isMatched_{a}(t)=1,\chosenFirm_{a}(t)=f,\mean_{a,f}(t-1)+\sqrt{\frac{2\log(B_a(t))}{\numMatches_{a,f}(t-1)}}\geq \utilityAgent_a(\stableArm_a)-\epsilon}\bigg)\\
    &\leq \sum_{t=1}^{\horizon}\one\bigg({\isMatched_{a}(t)=1,\chosenFirm_{a}(t)=f,\mean_{a,f}(t-1)+\sqrt{\frac{2\log(B_a(\horizon))}{\numMatches_{a,f}(t-1)}}\geq \utilityAgent_a(\stableArm_a)-\epsilon}\bigg)
    \\
    &= \sum_{t=1}^{\horizon}\sum_{s=0}^{t-1}\one\bigg({\isMatched_{a}(t)=1,\chosenFirm_{a}(t)=f,\mean_{a,f}^{(s)}+\sqrt{\frac{2\log(B_a(\horizon))}{s}}\geq \utilityAgent_a(\stableArm_a)-\epsilon,\numMatches_{a,f}(t-1)=s}\bigg)
    \\
    &\leq \sum_{s=0}^{\horizon-1}\sum_{t=s+1}^{\horizon} \one\bigg(\chosenFirm_{a}(t)=f,\mean_{a,f}^{(s)}+\sqrt{\frac{2\log(B_a(\horizon))}{s}}\geq \utilityAgent_a(\stableArm_a)-\epsilon,\numMatches_{a,f}(t-1)=s,\numMatches_{a,f}(t)=s+1\bigg)\\
    &\leq \sum_{s=0}^{\horizon-1} \one\bigg(\mean_{a,f}^{(s)}+\sqrt{\frac{2\log(B_a(\horizon))}{s}}\geq \utilityAgent_a(\stableArm_a)-\epsilon\bigg)\\
    &\leq \sum_{s=0}^{\horizon-1} \one\bigg(\mean_{a,f}^{(s)}-\utilityAgent_{a}(f)+\sqrt{\frac{2\log({\bar{B}(\horizon)})}{s}}\geq \underbrace{\utilityAgent_a(\stableArm_a)-\utilityAgent_a(f)}_{\gap_{a}(f)}-\epsilon\bigg),
\end{align*}
where \(\mu_{a,f}^{(s)}\) is defined to be the empirical utility that agent \(a\) obtains on \(s\) independent successful pulls of arm \(f\). 
Using Lemma \ref{Lem: LemmaLattimore} to further bound \(\avg[\text{Term C}]\) we get 
\[
\avg[\text{Term C}]\leq 1+\frac{2}{(\gap_{a}(f)-\epsilon)^2}\lr{\log(\bar{B}(T)+\sqrt{\pi\log(\bar{B}(\horizon))}+1)} 
\]
Next, we bound \(\avg[\text{Term D}]\) below: 
\begin{align*}
    \avg[\text{Term D}] &=\avg\ls{\sum_{t=1}^{\horizon}\one(\isMatched_a(t)=1,\chosenFirm_a(t)=f,\UCB_{a,f}(t)\geq \UCB_{a,\stableArm_a}(t),\UCB_{a,f}(t)\leq \utilityAgent_a(\stableArm_a)-\epsilon}\\
    &\leq \avg\ls{\sum_{t=1}^{\horizon}\one\lr{\isMatched_{a}(t)=1,\mean_{a,\stableArm_a}(t-1)+\sqrt{\frac{2\log(B_a(t))}{\numMatches_{a,\stableArm_a}(t-1)}}\leq \utilityAgent_a({\stableArm_a})-\epsilon}} \\ 
    & {\leq } \sum_{t=1}^{\horizon}\sum_{s=0}^{\horizon-1} \Pr{\mean_{a,\stableArm_a}^{(s)}+\sqrt{\frac{2\log(\bar{B}(t))}{s}}\leq \utilityAgent_a(\stableArm_a)-\epsilon} \\ 
    &{\leq } \sum_{t=1}^{\horizon}\sum_{s=0}^{\horizon-1} \exp\lr{-\frac{s\lr{\sqrt{\frac{2\log(\bar{B}(t))}{s}}+\epsilon}^2}{2}} \\
    &{\leq } \sum_{t=1}^{\horizon}\frac{1}{\bar{B}(t)}\sum_{s=1}^{\horizon} \exp\lr{-\frac{s\epsilon^2}{2}} \\
    &\leq \frac{\epsilon^2}{2}\sum_{t=0}^{\horizon-1}\frac{1}{\bar{B}(t)}
    \end{align*}
     which can further be bounded as \(\avg[\text{Term D}]\leq \frac{5}{\epsilon^2}\) in \cite[Exercise 8.1]{lattimore2020bandit}.
For simplicity we choose \(\epsilon = \gap_{a}(f)/2\) which ensures that 
\(
\avg[\text{Term A}] \leq \bigo\lr{\frac{\log(T)}{\lr{\gap_{a}(f)}^2}}
\)

Now let's turn our attention to Term B which characterizes the number of times agent \(a\) has pruned the stable match.  
    Using Lemma \ref{lem: BoundingTermA} we have 
     \begin{align*}
         \avg[\text{Term B}] \leq \bigo\lr{ \avg\ls{\sum_{t=1}^{T}\one\lr{\collisionEvent_{a,\stableArm_a}(t) }}+ \bigo(\log(T))}
     \end{align*}

Thus the Term A is bounded by number of there can be potential collisions at the stable firm. This concludes the proof of this lemma.
\end{proof}

\subsubsection{Proof of \textbf{(L3)} in Lemma \ref{lem: MainLemma}}
In this part, we prove a result which is more general than \textbf{(L3)} in Lemma \ref{lem: MainLemma}.  
\begin{lemma}\label{lem: NumCollisionLemma}
Expected number of collisions faced by agent \(a\) on the set of firms \(\firmSet^\dagger\subseteq\firmSet \backslash\{\stableArm_a\}\)
{
\begin{align}\label{eq: AppCollisionGeneral}
    \sum_{f\in\firmSet^\dagger}\avg[\numCollide_{a,f}(T)] \leq \bigo\lr{|\firmSet^\dagger|\log(T)+\avg[\numMatches_{a,\underline{{\firmSet}}^\dagger_a}(T)]+\avg[\numMatches_{a,\bar{{\firmSet}}^\dagger_a}(T)]+ \avg\ls{\sum_{t=1}^{T}\one\lr{\collisionEvent_{a,\stableArm_a}(t) }}},
\end{align}
where \(\underline{\firmSet}^\dagger_a = \subArm_a\cap \firmSet^\dagger\) and \(\bar{\firmSet}^\dagger_a=\superArm_a\cap \firmSet^\dagger\). Additionally 
\begin{align}\label{eq: AppCollision}
    \avg\ls{\numCollide_{a,\stableArm_a}(T)} \leq  \avg\ls{\sum_{t=1}^{T}\one\lr{ \collisionEvent_{a,\stableArm_a}(t) }}
\end{align}
}
\end{lemma}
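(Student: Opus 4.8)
The bound \eqref{eq: AppCollision} I would dispatch first and directly: a collision of $a$ with $\stableArm_a$ at a round $t$ can occur only if $a$ requests $\stableArm_a$ and some agent that $\stableArm_a$ ranks above $a$ also requests it, so $\numCollide_{a,\stableArm_a}(T)=\sum_{t=1}^{T}\one\lr{\chosenFirm_a(t)=\stableArm_a,\,\collisionEvent_{a,\stableArm_a}(t)}\le\sum_{t=1}^{T}\one\lr{\collisionEvent_{a,\stableArm_a}(t)}$. All of the work is in \eqref{eq: AppCollisionGeneral}.

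For \eqref{eq: AppCollisionGeneral} the plan is to bound $\avg[\numCollide_{a,f}(T)]$ for each fixed $f\in\firmSet^\dagger$ and then sum. First I would split the collisions at $f$ according to the mechanism by which $f$ got requested: either $f$ was requested inside the pruning loop of Algorithm \ref{alg:prunedUCBFinal} (the event $\actChoose_{a,f}(t)=1$), or every firm was pruned on that round and $f=\argUCB_a^{[1]}$ was requested through the fallback line. Write $\numCollide_{a,f}(T)=\numCollide^{(1)}_{a,f}(T)+\numCollide^{(2)}_{a,f}(T)$ for these two contributions and note $\numMatches^{(1)}_{a,f}(T)\le\numMatches_{a,f}(T)$ for the matches obtained inside the loop. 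The term $\numCollide^{(1)}_{a,f}(T)$ is exactly what the adversarial bandit module of $(a,f)$ is there to control: on its randomized timescale $\numSelect_{a,f}(T)$ the module suffers loss $0$ whenever it prunes and loss $1-2\isMatched_a(t)$ whenever it requests, so comparing the module's loss against the constant ``prune'' arm (whose loss is identically $0$) and taking expectations gives $\avg[\numCollide^{(1)}_{a,f}(T)]\le\avg[\numMatches^{(1)}_{a,f}(T)]+\regretAdv_{a,f}(\numSelect_{a,f}(T))$. Bounding the minimum in Lemma \ref{lem: AppRegredAdv} by its third argument $\numMatches_{a,f}(T)+\numCollide_{a,f}(T)$, and then substituting $\numCollide_{a,f}(T)=\numCollide^{(1)}_{a,f}(T)+\numCollide^{(2)}_{a,f}(T)$, leaves a self-referential inequality for $\avg[\numCollide_{a,f}(T)]$ in which the coefficient of $\numCollide_{a,f}(T)$ on the right-hand side is $32\eta<1$ (this is where $\eta\le 1/50$ is used); rearranging it yields $\avg[\numCollide_{a,f}(T)]=\bigo\lr{\log T+\avg[\numMatches_{a,f}(T)]+\avg[\numCollide^{(2)}_{a,f}(T)]}$.

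It then remains to control the aggregate fallback contribution $\sum_{f\in\firmSet^\dagger}\numCollide^{(2)}_{a,f}(T)$. The key observation is that on any round in which every firm is pruned the loop has in particular considered and pruned $\stableArm_a$, so $\actChoose_{a,\stableArm_a}(t)=0$, while only a single firm is actually requested on that round; hence $\sum_{f\in\firmSet^\dagger}\numCollide^{(2)}_{a,f}(T)\le\sum_{t=1}^{T}\one\lr{\actChoose_{a,\stableArm_a}(t)=0}$, which is precisely the quantity ``Term B'' appearing in the proof of Lemma \ref{lem: numMatchingApp} and bounded there (via Lemma \ref{lem: BoundingTermA}) by $\bigo(\log T)+\avg\ls{\sum_{t=1}^{T}\one\lr{\collisionEvent_{a,\stableArm_a}(t)}}$. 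Summing the per-firm bound over $f\in\firmSet^\dagger$, using $\firmSet^\dagger\subseteq\firmSet\setminus\{\stableArm_a\}$ so that $\firmSet^\dagger$ partitions into its sub-optimal part $\underline{\firmSet}^\dagger_a$ and its super-optimal part $\bar{\firmSet}^\dagger_a$ and therefore $\sum_{f\in\firmSet^\dagger}\numMatches_{a,f}(T)=\numMatches_{a,\underline{\firmSet}^\dagger_a}(T)+\numMatches_{a,\bar{\firmSet}^\dagger_a}(T)$, and then adding the Term B estimate, produces exactly \eqref{eq: AppCollisionGeneral}.

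The main obstacle is the fallback line of Algorithm \ref{alg:prunedUCBFinal}: the collisions it produces are invisible to every adversarial bandit module and so cannot be charged to an AB-regret bound; the resolution is to recognize that such rounds coincide with rounds on which the stable firm was pruned, a quantity already controlled in the analysis of matches with suboptimal firms. A secondary, purely technical, point is that the third term of the bound in Lemma \ref{lem: AppRegredAdv} makes the per-firm estimate self-referential in $\numCollide_{a,f}(T)$, and closing the inequality is possible only because the module is run with $\eta\le 1/50$, so that $32\eta<1$.
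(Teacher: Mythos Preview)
Your proposal is correct and follows essentially the same route as the paper: the same split of collisions into ``loop'' and ``fallback'' contributions, the same use of the adversarial-bandit regret against the constant prune arm (the paper packages this as inequality \eqref{eq: Implication1}) to get the self-referential bound with coefficient $\varpi=32\eta<1$, and the same reduction of the fallback term to $\sum_t\one\lr{\actChoose_{a,\stableArm_a}(t)=0}$ followed by Lemma \ref{lem: BoundingTermA}. The only cosmetic difference is that the paper sums over $f\in\firmSet^\dagger$ before rearranging while you rearrange per firm and then sum.
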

\begin{proof}
To compute the number of collisions, we compute the following for \(a\in\actSet\) and \(f\in\firmSet\backslash\{\stableArm_a\}\) 
\begin{align*}
    &\sum_{f\in\firmSet^\dagger}\numCollide_{a,f}(\horizon) = \sum_{f\in\firmSet^\dagger}\sum_{t=1}^{\horizon} \one\lr{\chosenFirm_a(t)=f,\collisionEvent_{a,f}(t)}\\
    &= \sum_{f\in\firmSet^\dagger}\sum_{t=1}^{\horizon} \one\lr{\actChoose_{a,f}(t)=1,\actQuery_{a,f}(t)=1,\collisionEvent_{a,f}(t)} \\&\quad +\sum_{f\in\firmSet^\dagger} \sum_{t=1}^{\horizon}\one\lr{ \actChoose_{a,f'}(t) = 0 \ \forall \ f'\in\firmSet, f_a(t)=f,\collisionEvent_{a,f}(t) }\\
    &\leq\sum_{f\in\firmSet^\dagger} \sum_{t=1}^{\horizon} \one\lr{\actChoose_{a,f}(t)=1,\actQuery_{a,f}(t)=1,\collisionEvent_{a,f}(t)} + \sum_{f\in\firmSet^\dagger}\sum_{t=1}^{\horizon}\one\lr{ \actChoose_{a,\stableArm_{a}}(t) = 0 ,f_a(t)=f},\\
    &\leq\sum_{f\in\firmSet^\dagger} \sum_{t=1}^{\horizon} \one\lr{\actChoose_{a,f}(t)=1,\actQuery_{a,f}(t)=1,\collisionEvent_{a,f}(t)} + \sum_{t=1}^{\horizon}\one\lr{ \actChoose_{a,\stableArm_{a}}(t) = 0 },
\end{align*}
where the first inequality holds because \(\{\actChoose_{a,f'}(t) = 0 \ \forall \ f'\in\firmSet\}\) implies that \(\{\actChoose_{a,\stableArm_{a}}(t) = 0 \}\).
Using \eqref{eq: Implication1} we have: for all \(a\in\actSet,f\in\firmSet\) and \(\varpi\in(0,32\eta)\subset (0,1)\)
{
\begin{align*}
   &\sum_{f\in\firmSet^\dagger} \avg[\numCollide_{a,f}(T)]\\& \leq \sum_{f\in\firmSet^\dagger}\lr{(1+\varpi)\avg[\numMatches_{a,f}(T)] +\bigo(\log(T)) + \varpi\avg[\numCollide_{a,f}(T)] + \avg\ls{\sum_{t=1}^{\horizon}{\one\lr{ \actChoose_{a,\stableArm_{a}} = 0 }}}} \\ 
    &\leq \bigo\lr{ |\firmSet^\dagger|\log(T)+\sum_{f\in\firmSet^\dagger}\avg[\numMatches_{a,f}(T)]}+ \avg\ls{\sum_{t=1}^{T}\one\lr{\collisionEvent_{a,\stableArm_a}(t) }} + \varpi\sum_{f\in\firmSet^\dagger}\avg[\numCollide_{a,f}(T)]
\end{align*}
where the last inequality is due to Lemma \ref{lem: BoundingTermA}. In summary,
\begin{align*}
    \sum_{f\in\firmSet^\dagger}\avg[\numCollide_{a,f}(T)] &\leq \bigo\lr{|\firmSet|\bigo(\log(T))+\sum_{f\in\firmSet^\dagger}\lr{\avg[\numMatches_{a,f}(T)]}}+ \avg\ls{\sum_{t=1}^{T}\one\lr{\collisionEvent_{a,\stableArm_a}(t) }} \\ 
    &\leq \bigo\lr{|\firmSet^\dagger|\log(T)+\avg[\numMatches_{a,\underline{{\firmSet}}^\dagger_a}(T)]+\avg[\numMatches_{a,\bar{{\firmSet}}^\dagger_a}(T)]+ \avg\ls{\sum_{t=1}^{T}\one\lr{\collisionEvent_{a,\stableArm_a}(t) }}}
\end{align*}
This completes the proof of \eqref{eq: AppCollisionGeneral}.
We now prove \eqref{eq: AppCollision}. We note that 
\begin{align*}
    \avg\ls{\numCollide_{a,\stableArm_a}(T)} &= \avg\ls{\sum_{t=1}^{T}\one\lr{ \chosenFirm_a(t)=f,\collisionEvent_{a,\stableArm_a}(t)}}\leq \avg\ls{ \sum_{t=1}^{T}\one\lr{\collisionEvent_{a,\stableArm_a}(t)}}.
\end{align*}
This completes the proof.
}
\end{proof}

\subsubsection{Proof of \textbf{(L4)} in Lemma \ref{lem: MainLemma}}
We restate \textbf{(L4)} from Lemma \ref{lem: MainLemma} below:
\begin{lemma}\label{lem: App_CollisionStable}
For any \(i\in [\numMarket]\) we have 
\begin{align*}
    \sum_{j=1}^{i}\sum_{a\in\subActSet_j}\avg\ls{ \sum_{t=1}^{T}\one\lr{\collisionEvent_{a,\stableArm_a}(t)} }=  \bigo\lr{ C_i|\firmSet|\lr{\sum_{j=1}^{i}|\subActSet_j|}\log(T)\lr{1+\frac{1}{\gap^2} }},
\end{align*}
where \(C_i\) is a constant dependent on market \(\market_i\) such that \(C_1<C_2<...<C_{\numMarket}\).
\end{lemma}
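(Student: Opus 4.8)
The plan is to prove Lemma~\ref{lem: App_CollisionStable} by induction on $i$, carrying the statement itself as the inductive hypothesis: writing $\Phi_i:=\sum_{j=1}^{i}\sum_{a\in\subActSet_j}\avg\ls{\sum_{t=1}^T\one\lr{\collisionEvent_{a,\stableArm_a}(t)}}$, we want $\Phi_i=\bigo\lr{C_i|\firmSet|\lr{\sum_{j\le i}|\subActSet_j|}\log T\lr{1+1/\gap^2}}$ for a strictly increasing sequence $C_1<C_2<\dots<C_{\numMarket}$ to be determined. The two ingredients are the market hierarchy of Remark~\ref{rem: MarketDecomp}/Lemma~\ref{lem: SuperOptimalArm}, which turns a potential collision at a low-hierarchy agent's stable firm into a \emph{request} made by some higher-hierarchy agent, and the already-established bounds on matches with sub-optimal firms (Lemma~\ref{lem: numMatchingApp}, i.e.\ \textbf{(L2)}) and on collisions (Lemma~\ref{lem: NumCollisionLemma}). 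For the base case $i=1$: every $a\in\subActSet_1$ forms a fixed pair $(a,\stableArm_a)$ of the whole market $\market_0=\market$, so $\utilityFirm_{\stableArm_a}(a)\ge\utilityFirm_{\stableArm_a}(a')$ for all $a'\in\actSet$; since no two agents give a firm the same utility, no agent is preferred by $\stableArm_a$ to $a$, hence $\collisionEvent_{a,\stableArm_a}(t)\equiv0$ and $\Phi_1=0$.

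For the inductive step, fix $i\ge2$ and $a\in\subActSet_i$. Because $(a,\stableArm_a)$ is a fixed pair of $\market_{i-1}$, every agent $a'$ that $\stableArm_a$ prefers to $a$ must have been removed in an earlier round of the decomposition, i.e.\ $a'\in\bigcup_{l<i}\subActSet_l$; therefore $\one\lr{\collisionEvent_{a,\stableArm_a}(t)}\le\one\lr{\exists\,a'\in\bigcup_{l<i}\subActSet_l:\chosenFirm_{a'}(t)=\stableArm_a}$. Summing over $t$ and over $a\in\subActSet_i$, and using that $\{\stableArm_a:a\in\subActSet_i\}=\subFirmSet_i$ is a set of distinct firms (so at most one $a\in\subActSet_i$ matches a given request of $a'$), one gets
\[
\sum_{a\in\subActSet_i}\avg\ls{\sum_{t=1}^T\one\lr{\collisionEvent_{a,\stableArm_a}(t)}}\ \le\ \sum_{a'\in\cup_{l<i}\subActSet_l}\avg\ls{\sum_{f\in\subFirmSet_i}\numMatches_{a',f}(T)+\sum_{f\in\subFirmSet_i}\numCollide_{a',f}(T)}.
\]
By Lemma~\ref{lem: SuperOptimalArm}, for $a'\in\subActSet_l$ with $l<i$ none of the firms in $\subFirmSet_i$ is super-optimal for $a'$ (those lie in $\bigcup_{m<l}\subFirmSet_m$) and none is $a'$'s stable firm (that lies in $\subFirmSet_l$), so $\subFirmSet_i\subseteq\subArm_{a'}$. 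Hence $\sum_{f\in\subFirmSet_i}\numMatches_{a',f}(T)\le\numMatches_{a',\subArm_{a'}}(T)$, and applying Lemma~\ref{lem: NumCollisionLemma} with $\firmSet^\dagger=\subFirmSet_i$ — for which $\bar{\firmSet}^\dagger_{a'}=\superArm_{a'}\cap\subFirmSet_i=\varnothing$, so the super-optimal-match term disappears — gives $\sum_{f\in\subFirmSet_i}\avg[\numCollide_{a',f}(T)]=\bigo\lr{|\firmSet|\log T+\avg[\numMatches_{a',\subArm_{a'}}(T)]+\avg[\sum_t\one(\collisionEvent_{a',\stableArm_{a'}}(t))]}$. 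It is exactly this choice of $\firmSet^\dagger$ that keeps \textbf{(L4)} from depending on \textbf{(L5)}.

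Summing these bounds over $a'\in\bigcup_{l<i}\subActSet_l$ and applying \textbf{(L2)} to bound $\sum_{l<i}\sum_{a'\in\subActSet_l}\avg[\numMatches_{a',\subArm_{a'}}(T)]$ by $\bigo\lr{(\sum_{l<i}|\subActSet_l|)|\firmSet|\log T(1+1/\gap^2)+\Phi_{i-1}}$, and noting that $\sum_{l<i}\sum_{a'\in\subActSet_l}\avg[\sum_t\one(\collisionEvent_{a',\stableArm_{a'}}(t))]$ is precisely $\Phi_{i-1}$, we obtain a recursion of the form $\Phi_i=\Phi_{i-1}+(\Phi_i-\Phi_{i-1})\le(1+\kappa)\Phi_{i-1}+\bigo\lr{|\firmSet|\lr{\sum_{j\le i}|\subActSet_j|}\log T(1+1/\gap^2)}$ for an absolute constant $\kappa>0$ coming from the constants in Lemma~\ref{lem: numMatchingApp} and Lemma~\ref{lem: NumCollisionLemma}. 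Plugging in the inductive hypothesis and solving this linear recursion closes the induction with $C_i=\bigo(1+(1+\kappa)C_{i-1})$, which is strictly increasing and, in the worst case, grows geometrically in $i$ — hence exponentially in the number of agents, as anticipated in the remark following Theorem~\ref{thm: UCBMainPaper}.

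The main obstacle is the structural reduction in the inductive step. One has to argue carefully that (i) a potential collision at $\stableArm_a$ for $a\in\subActSet_i$ can be blamed only on agents strictly above $a$ in the hierarchy, which requires the fixed-pair property of the residual market $\market_{i-1}$, and (ii) the collisions those higher agents incur while probing $\subFirmSet_i$ can be controlled without re-introducing their super-optimal matches — possible only because, by Lemma~\ref{lem: SuperOptimalArm}, $\subFirmSet_i$ is entirely sub-optimal for each such agent, which is exactly what makes the $\bar{\firmSet}^\dagger$ term in Lemma~\ref{lem: NumCollisionLemma} vanish. Everything else — the aggregation over the lower markets, tracking the $|\firmSet|$ and $\sum_j|\subActSet_j|$ factors, and unrolling the recursion for $C_i$ — is routine bookkeeping.
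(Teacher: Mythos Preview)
Your proposal is correct and reaches the same conclusion as the paper, but the route differs in two notable ways.

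First, the paper exploits a sharper structural observation at the reduction step: when \(\collisionEvent_{a,\stableArm_a}(t)\) holds for \(a\in\subActSet_i\), not only does some higher-hierarchy agent \emph{request} \(\stableArm_a\), but the agent who actually gets \emph{matched} to \(\stableArm_a\) at time \(t\) is necessarily preferred over \(a\) and hence also lies in \(\bigcup_{l<i}\subActSet_l\). Thus the paper bounds directly by \(\sum_{a'\in\cup_{l<i}\subActSet_l}\avg[\numMatches_{a',\stableArm_a}(T)]\) and never needs the collision term \(\numCollide_{a',f}\) at all, so Lemma~\ref{lem: NumCollisionLemma} is never invoked in this proof. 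Your approach instead counts all requests (matches plus collisions), then uses Lemma~\ref{lem: NumCollisionLemma} with \(\firmSet^\dagger=\subFirmSet_i\); your key insight that \(\bar{\firmSet}^\dagger_{a'}=\varnothing\) (because \(\subFirmSet_i\subseteq\subArm_{a'}\) for every \(a'\) higher in the hierarchy) is exactly what makes this detour close without circularly invoking \textbf{(L5)}. Both are valid; the paper's is shorter, yours is a bit more modular.

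Second, the paper's induction is a more elaborate multi-step unrolling: rather than the one-step recursion \(\Phi_i\le(1+\kappa)\Phi_{i-1}+\bigo(\cdot)\) you set up, the paper peels off layers one at a time, carrying along residual cross-terms \(\sum_{a'\in\cup_{j\le K-\ell-1}\subActSet_j}\avg[\numMatches_{a',\stableArm_a}]\) via auxiliary polynomials \(f(\theta;\ell),g(\theta;\ell)\), and only at the very end (when \(\ell=K-1\)) uses \(\Phi_1=0\) to close. This yields \(C_k=k\theta^{k-1}\). Your direct recursion produces \(C_i=\bigo((1+\kappa)^i)\). Both constants are exponential in the hierarchy depth, and your argument is arguably cleaner bookkeeping for the same result.
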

\begin{proof}
For any \(k\in[\numMarket]\) define \(S_k = \sum_{i=1}^{k}\sum_{a\in\subActSet_i}\avg[\sum_{t=1}^{T}\one\lr{ \collisionEvent_{a,\stableArm_a}(t)}]\) and \(Z(T,\Delta) = |\firmSet|\log(T)\lr{1+\frac{1}{\Delta^2}}\).
Define \(f(\theta;\ell) = \sum_{j=1}^{\ell}\theta^j\), \(f(\theta;0)=1\) and \(g(\theta;\ell) = \sum_{j=0}^{\ell-1}\theta^j\). Moreover, let \(\mathcal{H}_i=\sum_{a\in\subActSet_i}\avg[\sum_{t=1}^{T}\one\lr{ \collisionEvent_{a,\stableArm_a}(t)}]\). Consequently \(S_k = \sum_{i=1}^{k}\mathcal{H}_i\).
We claim that

\begin{align}\label{eq: InductionH}
    S_{K} &\leq S_{K-\ell} + f(\theta;\ell)\mathcal{H}_{K-\ell}+ \sum_{p=1}^{\ell}g(\theta;p)\sum_{a\in\subActSet_{\numMarket-p+1}}\sum_{a'\in \cup_{j=1}^{\numMarket-\ell-1}\subActSet_{j}} \avg\ls{\numMatches_{a',\stableArm_a}(T)}\notag \\&\hspace{1cm}+Z(T,\Delta)\sum_{r=1}^{\ell} f(\theta;r)|\subActSet_{K-r}|
\end{align}

We prove this via induction. We first show that this holds for \(\ell=1\). Indeed note that 
\begin{align*}
    &S_{\numMarket} =S_{\numMarket-1} + \mathcal{H}_{K}= S_{\numMarket-1} + \sum_{a\in\subActSet_{\numMarket}} \avg\ls{\sum_{t=1}^{T}\one{\lr{ \collisionEvent_{a,\stableArm_a}(t)}}} \\ 
    &\underset{(a)}{\leq} S_{\numMarket-1} + \sum_{a\in\subActSet_{\numMarket}}\sum_{a'\in \cup_{j=1}^{\numMarket-2}\subActSet_{j}} \avg\ls{\numMatches_{a',\stableArm_a}(T)}+ \sum_{a\in\subActSet_{\numMarket}}\sum_{a'\in \subActSet_{\numMarket-1}} \avg\ls{\numMatches_{a',\stableArm_a}(T)} \\
     &\underset{(b)}{=}  S_{\numMarket-1} + \sum_{a\in\subActSet_{\numMarket}}\sum_{a'\in \cup_{j=1}^{\numMarket-2}\subActSet_{j}} \avg\ls{\numMatches_{a',\stableArm_a}(T)}+ \sum_{a'\in \subActSet_{\numMarket-1}}\sum_{f\in\subFirmSet_{\numMarket}} \avg\ls{\numMatches_{a',f}(T)}\\
     &\underset{(c)}{\leq}  S_{\numMarket-1} + \sum_{a\in\subActSet_{\numMarket}}\sum_{a'\in \cup_{j=1}^{\numMarket-2}\subActSet_{j}} \avg\ls{\numMatches_{a',\stableArm_a}(T)}+ \sum_{a'\in \subActSet_{\numMarket-1}}\avg\ls{\numMatches_{a',\subArm_{a'}}(T)}\\
     &\underset{(d)}{\leq}  S_{\numMarket-1} + \theta\sum_{a'\in\subActSet_{\numMarket-1}}\avg\ls{\sum_{t=1}^{T}\one{\lr{ \collisionEvent_{a',\stableArm_{a'}}(t)}}} + \sum_{a\in\subActSet_{\numMarket}}\sum_{a'\in \cup_{j=1}^{\numMarket-2}\subActSet_{j}} \avg\ls{\numMatches_{a',\stableArm_a}(T)}+ \theta|\subActSet_{\numMarket-1}|Z(T,\Delta)\\
    & {=}  S_{\numMarket-1} + \theta\mathcal{H}_{\numMarket-1} + \sum_{a\in\subActSet_{\numMarket}}\sum_{a'\in \cup_{j=1}^{\numMarket-2}\subActSet_{j}} \avg\ls{\numMatches_{a',\stableArm_a}(T)}+ \theta|\subActSet_{\numMarket-1}|Z(T,\Delta) 
\end{align*}
where the (a) holds due to \(\alpha-\)reducible structure which says that any agent in \(\subActSet_{K}\) will only get collided at stable arm if some agent from \(\cup_{j=1}^{k-1}\subActSet_j\) has also requested the stable firm. Next, \((b)\) holds due to the fact that for any agent \(a\in\subActSet_k\), the corresponding stable match \(\stableArm_a\in\subFirmSet_k\)(see Remark \ref{rem: MarketDecomp}). Next, (c) follows because for agents in \(\subActSet_{K-1}\), the set of suboptimal firms is super set of \(\firmSet_K\). This is again a property of \(\alpha-\)reducible structure. Finally \((d)\) follows from \textbf{(L2)} in Lemma \ref{lem: MainLemma} where \(\theta\) is the corresponding constant from big-oh notation.

Suppose the bound in \eqref{eq: InductionH} holds for \(\ell=L\) for some integer \(\ell\in\{2,3,...,K\}\). Then we show it also holds for \(\ell+1\). That is, 
\begin{align*}
    &S_{K} \underset{(a)}{\leq} S_{K-\ell} + f(\theta;\ell)\mathcal{H}_{K-\ell}+ \sum_{p=1}^{\ell}g(\theta;p)\sum_{a\in\subActSet_{\numMarket-p+1}}\sum_{a'\in \cup_{j=1}^{\numMarket-\ell-1}\subActSet_{j}} \avg\ls{\numMatches_{a',\stableArm_a}(T)} \\&\hspace{1cm}+Z(T,\Delta)\sum_{r=1}^{\ell} f(\theta;r)|\subActSet_{K-r}| \\        &\underset{(b)}{=} S_{K-\ell-1} + g(\theta;\ell+1)\mathcal{H}_{K-\ell}+ \sum_{p=1}^{\ell}g(\theta;p)\sum_{a\in\subActSet_{\numMarket-p+1}}\sum_{a'\in \cup_{j=1}^{\numMarket-\ell-1}\subActSet_{j}} \avg\ls{\numMatches_{a',\stableArm_a}(T)} \\&\hspace{1cm}+Z(T,\Delta)\sum_{r=1}^{\ell} f(\theta;r)|\subActSet_{K-r}| 
        \\
    &\underset{(c)}{\leq} S_{K-\ell-1} + g(\theta;\ell+1)\lr{\mathcal{H}_{K-\ell} +\sum_{p=1}^{\ell}\sum_{a\in\subActSet_{\numMarket-p+1}}\sum_{a'\in \subActSet_{\numMarket-\ell-1}} \avg\ls{\numMatches_{a',\stableArm_a}(T)} }\\&\hspace{1cm}+ \sum_{p=1}^{\ell}g(\theta;p)\sum_{a\in\subActSet_{\numMarket-p+1}}\sum_{a'\in \cup_{j=1}^{\numMarket-\ell-2}\subActSet_{j}} \avg\ls{\numMatches_{a',\stableArm_a}(T)} +Z(T,\Delta)\sum_{r=1}^{\ell} f(\theta;r)|\subActSet_{K-r}| \end{align*}
    \begin{align*}
    &\underset{(d)}{\leq} S_{K-\ell-1} + g(\theta;\ell+1)\lr{\sum_{p=1}^{K-\ell-1}\sum_{a'\in \subActSet_{p}}\sum_{a\in\subActSet_{K-\ell}}\avg[\numMatches_{a',\stableArm_a}] +\sum_{p=1}^{\ell}\sum_{a\in\subActSet_{\numMarket-p+1}}\sum_{a'\in \subActSet_{\numMarket-\ell-1}} \avg\ls{\numMatches_{a',\stableArm_a}(T)} }\\&\hspace{1cm}+ \sum_{p=1}^{\ell}g(\theta;p)\sum_{a\in\subActSet_{\numMarket-p+1}}\sum_{a'\in \cup_{j=1}^{\numMarket-\ell-2}\subActSet_{j}} \avg\ls{\numMatches_{a',\stableArm_a}(T)} +Z(T,\Delta)\sum_{r=1}^{\ell} f(\theta;r)|\subActSet_{K-r}| 
     \\
    &\underset{(e)}{=} S_{K-\ell-1} + g(\theta;\ell+1)\lr{\sum_{p=1}^{K-\ell-2}\sum_{a'\in \subActSet_{p}}\sum_{a\in\subActSet_{K-\ell}}\avg[\numMatches_{a',\stableArm_a}]
    +\sum_{p=1}^{\ell+1}\sum_{a\in\subActSet_{\numMarket-p+1}}\sum_{a'\in \subActSet_{\numMarket-\ell-1}} \avg\ls{\numMatches_{a',\stableArm_a}(T)} }\\&\hspace{1cm}+ \sum_{p=1}^{\ell}g(\theta;p)\sum_{a\in\subActSet_{\numMarket-p+1}}\sum_{a'\in \cup_{j=1}^{\numMarket-\ell-2}\subActSet_{j}} \avg\ls{\numMatches_{a',\stableArm_a}(T)} +Z(T,\Delta)\sum_{r=1}^{\ell} f(\theta;r)|\subActSet_{K-r}|\\
    &\underset{(f)}{\leq}  S_{K-\ell-1} + g(\theta;\ell+1)\lr{\sum_{p=1}^{K-\ell-2}\sum_{a'\in \subActSet_{p}}\sum_{a\in\subActSet_{K-\ell}}\avg[\numMatches_{a',\stableArm_a}]
    +\sum_{a'\in \subActSet_{\numMarket-\ell-1}} \avg\ls{\numMatches_{a',\subArm_{a'}}(T)} }\\&\hspace{1cm}+ \sum_{p=1}^{\ell}g(\theta;p)\sum_{a\in\subActSet_{\numMarket-p+1}}\sum_{a'\in \cup_{j=1}^{\numMarket-\ell-2}\subActSet_{j}} \avg\ls{\numMatches_{a',\stableArm_a}(T)} +Z(T,\Delta)\sum_{r=1}^{\ell} f(\theta;r)|\subActSet_{K-r}| \\
    &\underset{(g)}{=} S_{K-\ell-1} + g(\theta;\ell+1)\lr{
    \sum_{a'\in \subActSet_{\numMarket-\ell-1}} \avg\ls{\numMatches_{a',\subArm_{a'}}(T)} }\\&\hspace{1cm}+ \sum_{p=1}^{\ell+1}g(\theta;p)\sum_{a\in\subActSet_{\numMarket-p+1}}\sum_{a'\in \cup_{j=1}^{\numMarket-\ell-2}\subActSet_{j}} \avg\ls{\numMatches_{a',\stableArm_a}(T)} +Z(T,\Delta)\sum_{r=1}^{\ell} f(\theta;r)|\subActSet_{K-r}| \\
    &\underset{(h)}{\leq} S_{K-\ell-1} + g(\theta;\ell+1)\lr{
    \theta |\firmSet|Z(T,\gap)|\subActSet_{K-\ell-1}|+\theta \mathcal{H}_{K-\ell-1}}\\&\hspace{1cm}+ \sum_{p=1}^{\ell+1}g(\theta;p)\sum_{a\in\subActSet_{\numMarket-p+1}}\sum_{a'\in \cup_{j=1}^{\numMarket-\ell-2}\subActSet_{j}} \avg\ls{\numMatches_{a',\stableArm_a}(T)} +Z(T,\Delta)\sum_{r=1}^{\ell} f(\theta;r)|\subActSet_{K-r}| \end{align*}
     \begin{align*}
    &\underset{(i)}{=}S_{K-\ell-1} + f(\theta;\ell+1) \mathcal{H}_{K-\ell-1}+ \sum_{p=1}^{\ell+1}g(\theta;p)\sum_{a\in\subActSet_{\numMarket-p+1}}\sum_{a'\in \cup_{j=1}^{\numMarket-\ell-2}\subActSet_{j}} \avg\ls{\numMatches_{a',\stableArm_a}(T)} \\&\quad \quad \quad +Z(T,\Delta)\sum_{r=1}^{\ell+1} f(\theta;r)|\subActSet_{K-r}| 
\end{align*}
where \((a)\) holds by induction hypothesis, \((b)\) holds by definition of \(S_k\) and \(f(\theta;\ell),g(\theta;\ell)\), \((c)\) holds by moving some terms around and noting that \(g(\theta;\cdot)\) is increasing. Next, \((d)\) holds by \(\alpha-\)reducbility and definition of \(\mathcal{H}_k\) (same analysis as in base case of induction). Next, \((e)\) holds by splitting the terms. Next, \((f)\) holds by \(\alpha-\)reducilibility definition. Next \((g)\) holds by combining similar terms. Next \((h)\) holds by \textbf{(L2)} in Lemma \ref{lem: MainLemma}. Next, \((i)\) holds due to combining similar terms.

Thus we conclude that induction claim \eqref{eq: InductionH} holds true. We know that \(S_1=0\) therefore from \eqref{eq: InductionH} we obtain

\begin{align}\label{eq: InductionS}
S_k &\leq Z(T,\Delta)\sum_{r=1}^{K-1} f(\theta;r)|\subActSet_{K-r}| \leq 
\lr{\sum_{j=1}^{K-1}|{\subActSet}_j|}K\theta^{K-1}Z(T,\Delta).
\end{align} 
The term \(C_k=k\theta^{k-1}\) in the statement. This completes the proof.

\end{proof}
\subsubsection{Proof of (\textbf{L5}) in Lemma \ref{lem: MainLemma}}

So only thing to bound is matching with superoptimal firms. 
\begin{lemma}
 For any \(k\in[\numMarket]\) we have 
\begin{align*}
    \sum_{j=1}^{k}\sum_{a\in\subActSet_j}\sum_{f\in\superArm_a}\avg[\numMatches_{a,f}(T)] \leq \bigo\lr{C_i \lr{\sum_{j=1}^{k-1}|\subActSet_j|}|\firmSet|\log(T)\lr{1+\frac{1}{\gap^2}} },
\end{align*}
where \(C_i\) is a constant dependent on market \(\market_i\) such that \(C_1<C_2<...<C_{\numMarket}\).
\end{lemma}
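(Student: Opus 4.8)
The plan is to reorganize the sum defining the left-hand side by the firm being matched rather than by the matching agent, use Lemma~\ref{lem: SuperOptimalArm} and the fixed-pair property to bound the total number of times lower-ranked agents match a given super-optimal firm by the number of rounds on which that firm's stable partner fails to request it, and then close an induction on the market level $k$ with \textbf{(L2)}, \textbf{(L3)} and \textbf{(L4)}.

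Write $N_k \defas \sum_{j=1}^{k}\sum_{a\in\subActSet_j}\sum_{f\in\superArm_a}\avg[\numMatches_{a,f}(T)]$. By Lemma~\ref{lem: SuperOptimalArm}, whenever $a\in\subActSet_j$ and $f\in\superArm_a$ one has $f\in\subFirmSet_l$ for a unique $l<j$, and (since the decomposition of Remark~\ref{rem: MarketDecomp} pairs up agents and firms level by level) $f$ has a unique stable partner $a_f\in\subActSet_l$ with $\stableArm_{a_f}=f$. Since $(a_f,f)$ is a fixed pair of $\market_{l-1}$ and every agent $a$ with $f\in\superArm_a$ belongs to some $\subActSet_j$ with $j>l$, hence to $\market_{l-1}$, the firm $f$ strictly prefers $a_f$ to $a$; therefore, whenever such an $a$ matches with $f$ at round $t$, the agent $a_f$ did not request $f$ at round $t$. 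As at most one agent matches with $f$ per round, grouping $N_k$ by $f$ and using that $f\mapsto a_f$ is a bijection of $\subFirmSet_l$ onto $\subActSet_l$ gives
\begin{align*}
N_k \;\le\; \sum_{l=1}^{k-1}\sum_{f\in\subFirmSet_l}\avg\ls{\sum_{t=1}^{T}\one\lr{\chosenFirm_{a_f}(t)\neq f}} \;=\; \sum_{l=1}^{k-1}\sum_{a'\in\subActSet_l}\avg\ls{\sum_{t=1}^{T}\one\lr{\chosenFirm_{a'}(t)\neq\stableArm_{a'}}}.
\end{align*}

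Next I would unpack the right-hand side. On a round with $\chosenFirm_{a'}(t)\neq\stableArm_{a'}$, agent $a'$ requests a firm in $\subArm_{a'}\cup\superArm_{a'}$ and either matches or collides on it, so $\sum_{t=1}^{T}\one(\chosenFirm_{a'}(t)\neq\stableArm_{a'})\le \numMatches_{a',\subArm_{a'}}(T)+\numMatches_{a',\superArm_{a'}}(T)+\sum_{f\in\firmSet}\numCollide_{a',f}(T)$. Bounding the collision term by \textbf{(L3)} and then the sub-optimal-match terms by \textbf{(L2)}, and summing over $a'\in\cup_{l=1}^{k-1}\subActSet_l$, yields
\begin{align*}
N_k = \bigo\lr{\lr{\sum_{l=1}^{k-1}|\subActSet_l|}|\firmSet|\log(T)\lr{1+\frac{1}{\gap^2}} + N_{k-1} + \sum_{l=1}^{k-1}\sum_{a'\in\subActSet_l}\avg\ls{\sum_{t=1}^{T}\one\lr{\collisionEvent_{a',\stableArm_{a'}}(t)}}},
\end{align*}
where the middle term is exactly $N_{k-1}$ (the super-optimal matches of agents at levels $\le k-1$) and the last term is controlled by \textbf{(L4)} by $\bigo\lr{C_{k-1}|\firmSet|\lr{\sum_{l=1}^{k-1}|\subActSet_l|}\log(T)\lr{1+1/\gap^2}}$.

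Finally, I would run the induction on $k$. The base case $k=1$ is immediate: agents in $\subActSet_1$ form fixed pairs of $\market$ itself, so $\superArm_a=\varnothing$ and $N_1=0$. For the step, substitute the induction hypothesis for $N_{k-1}$ and the \textbf{(L4)} estimate into the displayed recursion to obtain $N_k=\bigo\lr{C_k\lr{\sum_{l=1}^{k-1}|\subActSet_l|}|\firmSet|\log(T)\lr{1+1/\gap^2}}$, where $C_k$ is a suitable constant multiple of $C_{k-1}$ plus the fixed constants coming from \textbf{(L2)}--\textbf{(L4)}; these can be taken so that $C_1<C_2<\cdots<C_{\numMarket}$. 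The main obstacle is not any single estimate but the bookkeeping of these constants: one must check that chaining \textbf{(L2)}, \textbf{(L3)}, \textbf{(L4)} and the induction hypothesis keeps the constants consistent with the monotone $C_i$ already fixed in \textbf{(L4)}, and that no circularity creeps in --- which it does not, since $N_k$ only feeds back into itself through $N_{k-1}$, i.e., only through strictly higher-ranked agents.
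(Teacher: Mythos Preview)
Your argument is correct and in spirit the same as the paper's, but you take a cleaner route. The paper proves this via Lemma~\ref{lem: BoundOtherMatch}, which uses the counting identity $\sum_{f'}\numCollide_{a,f'}+\sum_{f'\neq\stableArm_a}\numMatches_{a,f'}+\numMatches_{a,\stableArm_a}=T$ together with $\sum_{a'}\numMatches_{a',\stableArm_a}\le T$ to bound all other agents' matches at $\stableArm_a$, and then runs a two-parameter induction in $(K,\ell)$ tracking auxiliary polynomials $f(\ttheta;\ell)$, $g(\ttheta;\ell)$ that interleave the quantities $\mathbb{M}_i$ and $\mathcal{H}_i$ level by level before finally invoking \textbf{(L4)}. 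Your key step is sharper and more direct: you observe that whenever an agent $a$ at a level strictly below $l$ \emph{matches} with $f\in\subFirmSet_l$, the fixed-pair partner $a_f$ cannot have requested $f$ (since $f$ strictly prefers $a_f$ to every agent in $\market_{l-1}$), so the super-optimal matches at $f$ are dominated by $\sum_t\one(\chosenFirm_{a_f}(t)\neq\stableArm_{a_f})$. This collapses the paper's double induction into a single recursion $N_k\le \alpha N_{k-1}+\beta C_{k-1}Z_k+\gamma Z_k$ (with $Z_k=(\sum_{l<k}|\subActSet_l|)|\firmSet|\log T(1+1/\gap^2)$ and absolute constants $\alpha,\beta,\gamma$ coming from \textbf{(L2)}--\textbf{(L4)}), which unwinds immediately. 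What your approach buys is a much shorter proof with transparent constant tracking; what the paper's approach buys is nothing additional here, though its Lemma~\ref{lem: BoundOtherMatch} is stated for all $a'\in\actSet$ (not just lower-ranked ones) and so does not rely on the fixed-pair dominance you exploit. Your caveat about bookkeeping is apt: the resulting $C_k$ need not coincide with the $C_k$ of \textbf{(L4)}, but the lemma's statement only demands some monotone market-dependent constants, which your recursion certainly produces.
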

\begin{proof}
For any \(k\in[\numMarket]\), define \(\tilde{S}_k = \sum_{i=1}^{k}\sum_{a\in \subActSet_i}\avg[M_{a,\superArm_a}(T)]\) and \(Z(T,\gap) = |F|\log(T)\lr{1+1/\gap^2}\). Define \(f(\theta;\ell) = \sum_{j=1}^{\ell}\theta^j\), \(f(\theta;0)=1\) and \(g(\theta;\ell) = \sum_{j=0}^{\ell-1}\theta^j\). Let \(\mathcal{H}_i\) \(=\sum_{a\in\subActSet_i}\avg[\sum_{t=1}^{T}\one\lr{ \collisionEvent_{a,\stableArm_a}(t)}]\) and \(\mathbb{M}_i=\sum_{a\in \subActSet_i}\avg[M_{a,\superArm_a}(T)]\) then \(\ts_k = \sum_{i=1}^{k}\mathbb{M}_i\). We claim that 
\begin{align}\label{eq: InductionTildeS}
    \tilde{S}_k \leq \bigo\lr{\tilde{\theta}^{k-1} \lr{\sum_{j=1}^{k-1}|\subActSet_j|}|\firmSet|Z(T,\gap)}
\end{align}
where \(\tilde{\theta}\) is a constant greater than 1. Note that the bound holds for \(k=1\) as there is not super-optimal firms for those agents. Let \eqref{eq: InductionTildeS} holds till some integer \(K-1\) then we show that it holds for \(K\) as well. Indeed,

We claim that 
\begin{align}\label{eq: inductionTS}
    \ts_{K}&\leq \ts_{K-\ell} + f(\ttheta;\ell)\mathbb{M}_{K-\ell}+ \sum_{p=1}^{\ell} g(\ttheta;p)\sum_{a\in\subActSet_{K-p+1}}\sum_{f\in \cup_{j\leq K-\ell-1 }\subFirmSet_{j}} \avg\ls{\numMatches_{a,f}} +  \sum_{p=1}^{\ell}f(\ttheta,p)\mathcal{H}_{K-p}\notag\\&\hspace{1cm}+Z(T,\Delta)\sum_{p=1}^{\ell} f(\ttheta,p) |\subActSet_{K-p}|
\end{align}

We prove \eqref{eq: InductionTildeS} by induction. First, consider the case \(\ell=1\)
\begin{align*}
    \ts_{\numMarket}&=\sum_{i=1}^{\numMarket}\sum_{a\in \subActSet_i}\avg[M_{a,\superArm_a}(T)] \\ 
    &\underset{(a)}{=} \ts_{\numMarket-1}+ \sum_{a\in\subActSet_{\numMarket}} \avg[M_{a,\superArm_a}(T)] \\ 
    &\underset{(b)}{\leq} \ts_{\numMarket-1}+\sum_{a\in\subActSet_{\numMarket}} \sum_{f\in \cup_{j\leq \numMarket-2}\subFirmSet_{j}}\avg[M_{a,f}(T)]+ \sum_{a\in\subActSet_{\numMarket}} \sum_{f\in \subFirmSet_{\numMarket-1}}\avg[M_{a,f}(T)]\\
     &\underset{(c)}{=} \ts_{\numMarket-1}+\sum_{a\in\subActSet_{\numMarket}} \sum_{f\in \cup_{j\leq \numMarket-2}\subFirmSet_{j}}\avg[M_{a,f}(T)]+  \sum_{a'\in \subActSet_{\numMarket-1}}\sum_{a\in\subActSet_{\numMarket}}\avg[M_{a,\stableArm_{a'}}(T)] \\ 
     &\underset{(d)}{\leq} \ts_{\numMarket-1}+\ttheta\sum_{a'\in \subActSet_{\numMarket-1}}\avg[M_{a',\superArm_{a'}}(T)] +\sum_{a\in\subActSet_{\numMarket}} \sum_{a'\in \cup_{j\leq \numMarket-2}\subActSet_{j}}\avg[M_{a,\stableArm_{a'}}(T)]\\&\hspace{1cm}+ \sum_{a'\in \subActSet_{\numMarket-1}}\ttheta{\lr{H_{a',f^*_{a'}}+Z(T,\Delta)}}\\ 
     &\underset{(e)}{=} \ts_{\numMarket-1}+\ttheta\mathbb{M}_{K-1} +\sum_{a\in\subActSet_{\numMarket}} \sum_{f\in \cup_{j\leq \numMarket-2}\subFirmSet_{j}}\avg[M_{a,f}(T)]+ \ttheta\mathcal{H}_{K-1}+Z(T,\gap)\ttheta|\subActSet_{K-1}| 
\end{align*}
where \((a)\) holds by definition, \((b)\) holds by using \(\alpha-\)reducilbe structure which ensures that set of superoptimal firms of any agent will lie in markets before it. Next, \((c)\) holds by property of alpha-reducible markets which ensures that for firm \(f\in \firmSet_{K-1}\) there exists agent \(a'\in\subActSet_{K-1}\) such that \(f=\stableArm_{a'}\). Next, \((d)\) holds by Lemma \ref{lem: BoundOtherMatch}. Next \((e)\) holds by rearrangement of terms. Next, we show that if \eqref{eq: InductionTildeS} holds for some \(\ell\) then it holds for \(\ell+1\) as well. That is, 
\begin{align*}
    &\ts_{K}\underset{(a)}{\leq} \ts_{K-\ell} + f(\ttheta;\ell)\mathbb{M}_{K-\ell}+ \sum_{p=1}^{\ell} g(\ttheta;p)\sum_{a\in\subActSet_{K-p+1}}\sum_{f\in \cup_{j\leq K-\ell-1} \subFirmSet_{j}} \avg\ls{\numMatches_{a,f}} +  \sum_{p=1}^{\ell}f(\ttheta,p)\mathcal{H}_{K-p}\notag\\&\hspace{1cm}+Z(T,\Delta)\sum_{p=1}^{\ell} f(\ttheta,p) |\subActSet_{K-p}| \\
    &\underset{(b)}{=}\ts_{K-\ell-1} + g(\ttheta;\ell+1)\mathbb{M}_{K-\ell}+ \sum_{p=1}^{\ell} g(\ttheta;p)\sum_{a\in\subActSet_{K-p+1}}\sum_{f\in \cup_{j\leq K-\ell-1} \subFirmSet_{j}} \avg\ls{\numMatches_{a,f}} +  \sum_{p=1}^{\ell}f(\ttheta,p)\mathcal{H}_{K-p}\notag\\&\hspace{1cm}+Z(T,\Delta)\sum_{p=1}^{\ell} f(\ttheta,p) |\subActSet_{K-p}|  \\
    &\underset{(c)}{=}\ts_{K-\ell-1} + g(\ttheta;\ell+1)\lr{ \sum_{a\in\subActSet_{K-\ell}} \sum_{f\in\cup_{j\leq K-\ell-2}\subFirmSet_j}  \avg\ls{\numMatches_{a,f}} +\sum_{a\in\subActSet_{K-\ell} } \sum_{f\in\firmSet_{K-\ell-1} }\avg\ls{\numMatches_{a,f}(T)} }\\&\hspace{1cm}+ \sum_{p=1}^{\ell} g(\ttheta;p)\sum_{a\in\subActSet_{K-p+1}}\sum_{f\in \cup_{j\leq K-\ell-1} \subFirmSet_{j}} \avg\ls{\numMatches_{a,f}} +  \sum_{p=1}^{\ell}f(\ttheta,p)\mathcal{H}_{K-p}\\&\hspace{1cm}+Z(T,\Delta)\sum_{p=1}^{\ell} f(\ttheta,p) |\subActSet_{K-p}|  \end{align*}
    \begin{align*}
    &\underset{(d)}{\leq} \ts_{K-\ell-1} + g(\ttheta;\ell+1)\lr{ \sum_{p=1}^{\ell+1}\sum_{a\in\subActSet_{K-p+1} } \sum_{f\in\firmSet_{K-\ell-1} }\avg\ls{\numMatches_{a,f}(T)} }\\&\hspace{1cm}+ \sum_{p=1}^{\ell+1} g(\ttheta;p)\sum_{a\in\subActSet_{K-p+1}}\sum_{f\in \cup_{j\leq K-\ell-2} \subFirmSet_{j}} \avg\ls{\numMatches_{a,f}} +  \sum_{p=1}^{\ell}f(\ttheta,p)\mathcal{H}_{K-p}\\&\hspace{1cm}+Z(T,\Delta)\sum_{p=1}^{\ell} f(\ttheta,p) |\subActSet_{K-p}| \\
    &\underset{(e)}{=} \ts_{K-\ell-1} + g(\ttheta;\ell+1)\lr{ \sum_{a'\in\subActSet_{K-\ell-1} }\sum_{p=1}^{\ell+1}\sum_{a\in\subActSet_{K-p+1} } \avg\ls{\numMatches_{a,\stableArm_{a'}}(T)} }\\&\hspace{1cm}+ \sum_{p=1}^{\ell+1} g(\ttheta;p)\sum_{a\in\subActSet_{K-p+1}}\sum_{f\in \cup_{j\leq K-\ell-2} \subFirmSet_{j}} \avg\ls{\numMatches_{a,f}} +  \sum_{p=1}^{\ell}f(\ttheta,p)\mathcal{H}_{K-p}\\&\hspace{1cm}+Z(T,\Delta)\sum_{p=1}^{\ell} f(\ttheta,p) |\subActSet_{K-p}|
    \\
    &\underset{(f)}{\leq} \ts_{K-\ell-1} + g(\ttheta;\ell+1)\lr{ \ttheta\mathcal{H}_{K-\ell-1} + \ttheta \mathbb{M}_{K-\ell-1}+\ttheta Z(T,\gap)|\subActSet_{K-\ell-1}| }\\&\hspace{1cm}+ \sum_{p=1}^{\ell+1} g(\ttheta;p)\sum_{a\in\subActSet_{K-p+1}}\sum_{f\in \cup_{j\leq K-\ell-2} \subFirmSet_{j}} \avg\ls{\numMatches_{a,f}} +  \sum_{p=1}^{\ell}f(\ttheta,p)\mathcal{H}_{K-p}\\&\hspace{1cm}+Z(T,\Delta)\sum_{p=1}^{\ell} f(\ttheta,p) |\subActSet_{K-p}|\\
    &\underset{(g)}{=} \ts_{K-\ell-1} + f(\ttheta;\ell+1)\mathbb{M}_{K-\ell-1} + \\&\hspace{1cm}+ \sum_{p=1}^{\ell+1} g(\ttheta;p)\sum_{a\in\subActSet_{K-p+1}}\sum_{f\in \cup_{j\leq K-\ell-2} \subFirmSet_{j}} \avg\ls{\numMatches_{a,f}} +  \sum_{p=1}^{\ell+1}f(\ttheta,p)\mathcal{H}_{K-p}\\&\hspace{1cm}+Z(T,\Delta)\sum_{p=1}^{\ell+1} f(\ttheta,p) |\subActSet_{K-p}|
\end{align*}
where \((a)\) is by induction hypothesis, \((b)\) is by decomposing \(\ts_{K-\ell}\), \((c)\) is by using definition of \(\mathbb{M}_{K-\ell}\), \((d)\) is by rearrangement of terms and using the fact that \(g(\ttheta,\cdot)\)
is increasing, \((e)\) is by rearrangement of terms and using the fact that for any \(f\in \subFirmSet_k\) for some \(k\) there exists \(a'\in \subActSet_k\) such that \(f=\stableArm_{a'}\). Next, \((f)\) is by Lemma \ref{lem: BoundOtherMatch}. Next, \((g)\) is by combining similar terms. This concludes the induction proof. 

We know that \(\ts_1=\mathbb{M}_1=0\) because of \(\alpha-\)reducible structure which ensures that these firms do not have superoptimal firms. Thus in \eqref{eq: InductionTildeS} if take \(\ell=K-1\) then we get 
\begin{align*}
     \ts_{K}&\leq   \sum_{p=1}^{K-1}f(\ttheta,p)\mathcal{H}_{K-p}+Z(T,\Delta)\sum_{p=1}^{K-1} f(\ttheta,p) |\subActSet_{K-p}| \\ 
     &\leq \sum_{p=1}^{K-1}\sum_{j=1}^{p}\ttheta^j\mathcal{H}_{K-p} + Z(T,\Delta)\sum_{p=1}^{K-1} f(\ttheta,p) |\subActSet_{K-p}|\end{align*}
    \begin{align*} 
   \ts_K  &\leq \sum_{j=1}^{K-1}\ttheta^j\sum_{p=j}^{K-1}\mathcal{H}_{K-p} + Z(T,\Delta)\sum_{p=1}^{K-1} f(\ttheta,p) |\subActSet_{K-p}|\\
     &\underset{(a)}{=} \sum_{j=1}^{K-1}\ttheta^jS_{K-j} + Z(T,\Delta)\lr{\sum_{j=1}^{K-1}|{\subActSet}_j|}K\ttheta^{K-1} \\
     &\underset{(b)}{\leq} Z(T,\Delta)\lr{\sum_{j=1}^{K-1}|{\subActSet}_j|}\sum_{j=1}^{K-1}\ttheta^j(K-j)\theta^{K-j-1} + Z(T,\Delta)\lr{\sum_{j=1}^{K-1}|{\subActSet}_j|}K\ttheta^{K-1}
\end{align*}
where \(S_{K-j}\) in (a) is from proof of \textbf{(L4)} in Lemma \ref{lem: MainLemma} and (b) is by \eqref{eq: InductionS}. Define \(\tilde{C}_k=k\ttheta^{k-1}+\sum_{j=1}^{k-1}\ttheta^j(k-j)\theta^{k-j-1}\). Thus we see that 
\begin{align*}
    \ts_{K}\leq |\firmSet|\log(T)\lr{1+\frac{1}{\gap^2}}\lr{\sum_{j=1}^{K-1}|{\subActSet}_j|} \tilde{C}_K
\end{align*}
\end{proof}

\section{Proof of Theorem \ref{thm: UCBMainPaper}}

We now look at the joint regret 
for any \(k\in[K]\). Define \(Z(T,\gap) = |F|\log(T)\lr{1+\frac{1}{\gap^2}}\)
\begin{align*}
   & \sum_{i=1}^{k}\sum_{a\in\subActSet_i}R_{a} \underset{(a)}{=} \bigo\bigg( \sum_{i=1}^{k}\sum_{a\in\subActSet_i}\avg[\numMatches_{a,\subArm_a}(T)] +    \sum_{i=1}^{k}\sum_{a\in\subActSet_i}\sum_{f\in F\backslash \{f^*_a\}} \avg[C_{a,f}(T)] \\&\hspace{1cm}+ \sum_{i=1}^{k}\sum_{a\in\subActSet_i} \avg[\sum_{t=1}^{T}H_{a,f^*_a}(t)]\bigg) \\
    &\underset{(b)}{=} \bigo\lr{  \sum_{i=1}^{k}\sum_{a\in\subActSet_i}\avg[\numMatches_{a,\subArm_a}(T)] +    \sum_{i=1}^{k}\sum_{a\in\subActSet_i} \avg[M_{a,\superArm_a}(T)] + \sum_{i=1}^{k}\sum_{a\in\subActSet_i} \avg[\sum_{t=1}^{T}H_{a,f^*_a}(t)]}\\&\hspace{1cm}+\bigo\lr{|\firmSet|\sum_{i=1}^{k}|\subActSet_i|\log(T)} \\
    &\underset{(c)}{=} \bigo\lr{\sum_{i=1}^{k}\sum_{a\in\subActSet_i} \avg[M_{a,\superArm_a}(T)] + \sum_{i=1}^{k}\sum_{a\in\subActSet_i} \avg[\sum_{t=1}^{T}H_{a,f^*_a}(t)]}+ \bigo(
    \sum_{i=1}^{k}\sum_{a\in\subActSet_i}|\subArm_a|Z(T,\Delta))\\&\hspace{1cm}+\bigo\lr{|F|\sum_{i=1}^{k}|\subActSet_i|\log(T)} \\
    &\underset{(d)}{=} \bigo( \tilde{C}_k\lr{\sum_{p=1}^{k}|\actSet_p|}Z(T,\Delta))+\bigo(\lr{\sum_{p=1}^{k}|\actSet_p|}C_kZ(T,\Delta))+\bigo(
    \sum_{p=1}^{k}\sum_{a\in\subActSet_p}|\subArm_a|Z(T,\Delta))\\&\hspace{1cm}+\bigo\lr{|F|\sum_{p=1}^{k}|\subActSet_p|\log(T)}\\
    &\underset{(e)}{=}\bigo\lr{(C_k+\tilde{C}_k)|\firmSet|\lr{\sum_{p=1}^{k}|\actSet_p|}}\log(T)\lr{1+\frac{1}{\gap^2}}
\end{align*}
where \((a)\) holds due to \textbf{(L1)} in Lemma \ref{lem: MainLemma}, \((b)\) holds due to \textbf{(L3)} in Lemma \ref{lem: MainLemma}, \((c)\) is due to \textbf{(L2)} in Lemma \ref{lem: MainLemma}. Next, \((d)\) is due to \textbf{(L4)-(L5)} in Lemma \ref{lem: MainLemma}. Finally, \((e)\) follows by combining terms.

\section{Technical lemmas}\label{appsec: Proofs}
In this section we present some technical lemmas which are helpful in the proofs in next section.
\begin{lemma}(Lemma 8.2,\cite{lattimore2020bandit})\label{Lem: LemmaLattimore}
Let \(X_1,X_2,\dots,X_T\) be a sequence of independent 1-subgaussian random variable, and \(\hat{\mu}^{(t)}\defas \frac{1}{t}\sum_{s=1}^{t}X_s,\epsilon>0,a>0\) and 
\[
\kappa \defas \sum_{t=1}^{n}\one\lr{\hat{\mu}_t+\sqrt{\frac{2a}{t}}\geq \epsilon}, \quad \kappa' \defas u  + \sum_{t=\ceil{u}}^{T} \one\lr{\hat{\mu}_t+\sqrt{\frac{2a}{t}}\geq \epsilon}
\]
where \(u=\frac{2a}{\epsilon^2}\). Then 
\[
\avg[\kappa] \leq \avg[\kappa']\leq 1+\frac{2}{\epsilon^2}(a+\sqrt{\pi a}+1)
\]
\end{lemma}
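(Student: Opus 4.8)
This is the standard anytime-UCB bound from \cite{lattimore2020bandit}; the plan is to separate a trivial ``burn-in'' contribution from a tail that is controlled by the sub-Gaussian tail inequality together with an integral comparison, using the choice \(u=2a/\epsilon^{2}\) twice. First I would note the pointwise inequality \(\kappa\le\kappa'\): splitting the sum defining \(\kappa\) at the index \(\lceil u\rceil\), the first \(\lceil u\rceil-1\) indicators are each at most \(1\) and \(\lceil u\rceil-1\le u\), so the remainder is at most \(u+\sum_{t\ge\lceil u\rceil}\one\lr{\hat\mu_t+\sqrt{2a/t}\ge\epsilon}=\kappa'\). After taking expectations it therefore suffices to bound \(\avg[\kappa']=u+\sum_{t=\lceil u\rceil}^{T}\Pr{\hat\mu_t\ge\epsilon-\sqrt{2a/t}}\).

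The key observation is that by the definition of \(u\) one has \(\sqrt{2a/t}\le\epsilon\) for all \(t\ge\lceil u\rceil\), so the threshold \(\epsilon-\sqrt{2a/t}\) is nonnegative and the sub-Gaussian tail applies: \(\hat\mu_t\) is \(1/\sqrt t\)-sub-Gaussian, being the average of \(t\) independent \(1\)-sub-Gaussian variables, hence \(\Pr{\hat\mu_t\ge\epsilon-\sqrt{2a/t}}\le\exp\lr{-\tfrac t2\lr{\epsilon-\sqrt{2a/t}}^{2}}\). Writing \(g(x)=x\lr{\epsilon-\sqrt{2a/x}}^{2}=x\epsilon^{2}-2\epsilon\sqrt{2ax}+2a\), a one-line derivative computation gives \(g'(x)\ge0\iff x\ge u\), so \(x\mapsto e^{-g(x)/2}\) is nonincreasing on \([u,\infty)\) and \(g(\lceil u\rceil)\ge g(u)=0\). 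Consequently \(\sum_{t\ge\lceil u\rceil}e^{-g(t)/2}\le e^{-g(\lceil u\rceil)/2}+\int_{\lceil u\rceil}^{\infty}e^{-g(x)/2}\,dx\le 1+\int_{u}^{\infty}e^{-g(x)/2}\,dx\).

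It remains to evaluate \(\int_{u}^{\infty}e^{-g(x)/2}dx\), and this is where the choice of \(u\) pays off again. Substituting \(x=y^{2}\) turns \(g\) into a perfect square, \(g(y^{2})=\epsilon^{2}\lr{y-\sqrt{2a}/\epsilon}^{2}\), while the new lower limit \(y=\sqrt u=\sqrt{2a}/\epsilon\) is exactly the center of the resulting Gaussian; thus \(\int_{u}^{\infty}e^{-g(x)/2}dx=\int_{\sqrt{2a}/\epsilon}^{\infty}2y\,e^{-\frac{\epsilon^{2}}{2}\lr{y-\sqrt{2a}/\epsilon}^{2}}dy\), and the shift \(z=y-\sqrt{2a}/\epsilon\) gives \(\int_{0}^{\infty}2\lr{z+\sqrt{2a}/\epsilon}e^{-\epsilon^{2}z^{2}/2}dz=\frac{2}{\epsilon^{2}}+\frac{2\sqrt{2a}}{\epsilon}\cdot\frac{\sqrt{\pi/2}}{\epsilon}=\frac{2}{\epsilon^{2}}\lr{1+\sqrt{\pi a}}\), using the elementary integrals \(\int_{0}^{\infty}ze^{-\epsilon^{2}z^{2}/2}dz=\epsilon^{-2}\) and \(\int_{0}^{\infty}e^{-\epsilon^{2}z^{2}/2}dz=\sqrt{\pi/2}/\epsilon\). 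Combining the three displays yields \(\avg[\kappa]\le\avg[\kappa']\le u+1+\tfrac{2}{\epsilon^{2}}\lr{1+\sqrt{\pi a}}=1+\tfrac{2}{\epsilon^{2}}\lr{a+\sqrt{\pi a}+1}\), which is the claim.

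I do not anticipate a real obstacle; the only mild care is the degenerate regime \(u<1\), where \(\lceil u\rceil=1\) and the burn-in sum is empty, and one checks directly that \(e^{-g(1)/2}\le1\) and the same integral comparison still go through (since \([1,\infty)\subseteq[u,\infty)\) and \(g\) is increasing there). The two coincidences producing the clean constant — that \(g(u)=0\) and that \(\sqrt u\) is the center of the Gaussian obtained after \(x=y^{2}\) — are both forced by \(u=2a/\epsilon^{2}\), so no tuning is involved.
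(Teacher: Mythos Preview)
The paper does not supply its own proof of this lemma; it is quoted directly from \cite{lattimore2020bandit} and used as a black box. Your argument is precisely the standard proof from that reference (burn-in of length \(u\), sub-Gaussian tail on the remainder, monotonicity of \(g\) on \([u,\infty)\), and the \(x=y^{2}\) substitution that collapses the integral to a Gaussian), and it is correct.
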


\begin{lemma}
Suppose we use the AB subroutine Algorithm \ref{alg:AdaptivePart} with \(\eta \leq 1/50\) then the following two inequalities hold: 
\begin{equation}
\begin{aligned}\label{eq: Implication1}
    &\avg\ls{\sum_{t=1}^{\horizon} \one\lr{\actChoose_{a,f}(t)=1,\actQuery_{a,f}(t)=1,\collisionEvent_{a,f}(t)}}\\&\hspace{1cm}\leq   (1+\varpi)\avg[\numMatches_{a,f}(T)] + \bigo(\log(T)) + \varpi\avg[\numCollide_{a,f}(T)] ,
\end{aligned}
\end{equation}
where \(0<\varpi\leq 32\eta<1\)and 
\begin{equation}
\begin{aligned}\label{eq: Implication2}
    &\avg\ls{\sum_{t=1}^{\horizon}\one\lr{\actChoose_{a,f}(t)=0,\actQuery_{a,f}(t)=1,\collisionEvent_{a,f}^\comp(t)}}\\&\hspace{1cm}\leq \bigo\lr{ \log(T) +\avg\ls{\sum_{t=1}^{T} \one\lr{\collisionEvent_{a,f}(t)}} + \avg[\numPotCollide_{a,f}(T)]}.
\end{aligned}
\end{equation}
\end{lemma}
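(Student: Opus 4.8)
The plan is to obtain both \eqref{eq: Implication1} and \eqref{eq: Implication2} from the adversarial-bandit regret bound Lemma~\ref{lem: AppRegredAdv}, applied to the \pullModule\ subroutine attached to the fixed pair $(a,f)$, by comparing that subroutine against its two fixed arms --- ``prune'' and ``request''. Throughout fix $(a,f)$ and recall that this subroutine is updated exactly on the random timescale $\numSelect_{a,f}(T)=\{t:\actQuery_{a,f}(t)=1\}$; on such a round it plays ``request'' iff $\pullInstant_{a,f}(t)=1$, i.e.\ iff $\actChoose_{a,f}(t)=1$, in which case it incurs loss $1-2\isMatched_a(t)$, and it plays ``prune'' otherwise, incurring loss $0$. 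The structural fact driving the proof is that whenever $a$ requests $f$ it matches iff no strictly more preferred agent also requests $f$, so $\isMatched_a(t)=\one(\collisionEvent^{\comp}_{a,f}(t))$ on $\{\actChoose_{a,f}(t)=1\}$; and because $\collisionEvent_{a,f}(t)$ depends only on the other agents' choices, $\one(\collisionEvent^{\comp}_{a,f}(t))$ is also the well-defined counterfactual match status of $a$ at $f$ on the rounds where $a$ prunes $f$.

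Next I would do the loss bookkeeping over $t\in\numSelect_{a,f}(T)$. Write $\hat C_{a,f}(T)=\sum_t\one(\actChoose_{a,f}(t)=1,\collisionEvent_{a,f}(t))$ and $\hat M_{a,f}(T)=\sum_t\one(\actChoose_{a,f}(t)=1,\collisionEvent^{\comp}_{a,f}(t))$ for the collisions and matches produced on rounds where $a$ actually requests $f$, and $\tilde C_{a,f}(T),\tilde M_{a,f}(T)$ for the analogous counts with $\actChoose_{a,f}(t)=0$ (i.e.\ when $a$ prunes $f$, which includes both the in-loop prune and the all-firms-pruned fallback). Since $\actChoose_{a,f}(t)=1$ forces $\actQuery_{a,f}(t)=1$, the left side of \eqref{eq: Implication1} is exactly $\hat C_{a,f}(T)$, and since $\actChoose_{a,f}(t)=0$ together with $t\in\numSelect_{a,f}(T)$ is the same as $\actChoose_{a,f}(t)=0,\actQuery_{a,f}(t)=1$, the left side of \eqref{eq: Implication2} is exactly $\tilde M_{a,f}(T)$. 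By the structural fact the subroutine's cumulative loss equals $\hat C_{a,f}(T)-\hat M_{a,f}(T)$, the ``prune'' comparator's loss is $0$, and the ``request'' comparator's loss is $(\hat C_{a,f}(T)+\tilde C_{a,f}(T))-(\hat M_{a,f}(T)+\tilde M_{a,f}(T))$. As $\regretAdv_{a,f}(\numSelect_{a,f}(T))$ dominates the regret against either fixed arm, subtracting yields $\avg[\hat C_{a,f}(T)]-\avg[\hat M_{a,f}(T)]\le\avg[\regretAdv_{a,f}(\numSelect_{a,f}(T))]$ and $\avg[\tilde M_{a,f}(T)]-\avg[\tilde C_{a,f}(T)]\le\avg[\regretAdv_{a,f}(\numSelect_{a,f}(T))]$.

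Substituting Lemma~\ref{lem: AppRegredAdv}, namely $\avg[\regretAdv_{a,f}(\numSelect_{a,f}(T))]\le\bigo(\log(T)/\eta)+32\eta\,\avg[\min\{\numPotMatches_{a,f}(T),\numPotCollide_{a,f}(T),\numMatches_{a,f}(T)+\numCollide_{a,f}(T)\}]$, then finishes both claims. For \eqref{eq: Implication1}, bound $\hat M_{a,f}(T)\le\numMatches_{a,f}(T)$ and use the $\numMatches_{a,f}(T)+\numCollide_{a,f}(T)$ branch of the $\min$ to get $\avg[\hat C_{a,f}(T)]\le(1+32\eta)\avg[\numMatches_{a,f}(T)]+\bigo(\log T)+32\eta\,\avg[\numCollide_{a,f}(T)]$, which is \eqref{eq: Implication1} with $\varpi=32\eta$, and $0<32\eta<1$ since $\eta\le 1/50$. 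For \eqref{eq: Implication2}, bound $\tilde C_{a,f}(T)\le\numPotCollide_{a,f}(T)=\sum_t\one(\collisionEvent_{a,f}(t))$ and use the $\numPotCollide_{a,f}(T)$ branch of the $\min$ to get $\avg[\tilde M_{a,f}(T)]\le(1+32\eta)\avg[\numPotCollide_{a,f}(T)]+\bigo(\log T)=\bigo(\log T+\avg[\numPotCollide_{a,f}(T)])$, which is \eqref{eq: Implication2}.

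The main obstacle is the bookkeeping rather than the algebra: one must verify that the \pullModule\ subroutine for $(a,f)$ is updated precisely on $\numSelect_{a,f}(T)$, that on such a round a played ``request'' corresponds exactly to $\actChoose_{a,f}(t)=1$ with observed loss $1-2\isMatched_a(t)$ while the all-firms-pruned fallback request of the top-$\UCB$ firm never re-updates that firm's subroutine, that $\isMatched_a(t)=\one(\collisionEvent^{\comp}_{a,f}(t))$ on $\{\actChoose_{a,f}(t)=1\}$ and that $\one(\collisionEvent^{\comp}_{a,f}(t))$ is a legitimate counterfactual elsewhere, and that the left side of \eqref{eq: Implication1} captures only the collisions reached through the request path --- fallback collisions being accounted for separately (in the $\one(\actChoose_{a,\stableArm_a}(t)=0)$ terms that appear wherever this lemma is applied). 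Once these correspondences are pinned down, the two-line computations above deliver the claims, with $\bigo(\log T)$ absorbing the $\bigo(\log(T)/\eta)$ of Lemma~\ref{lem: AppRegredAdv} since $\eta$ is a fixed constant.
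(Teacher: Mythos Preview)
Your proposal is correct and follows essentially the same approach as the paper: both compare the adversarial bandit subroutine attached to $(a,f)$ against its two fixed arms (``prune'' and ``request'') via Lemma~\ref{lem: AppRegredAdv}, then select the appropriate branch of the $\min$ in the path-length term --- the $\numMatches_{a,f}(T)+\numCollide_{a,f}(T)$ branch for \eqref{eq: Implication1} and the $\numPotCollide_{a,f}(T)$ branch for \eqref{eq: Implication2}. Your notation $\hat C,\hat M,\tilde C,\tilde M$ is a clean repackaging of the paper's $\lossAdv_{a,f}$ and $\advLoss_{a,f}$, and your explicit flagging of the bookkeeping issues (fallback request not re-updating the subroutine, $\isMatched_a(t)=\one(\collisionEvent^{\comp}_{a,f}(t))$ on request rounds) makes the argument slightly more transparent than the paper's version.
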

\begin{proof}
To simplify the presentation of proof, let's define 
\begin{align*}
    \lossAdv_{a,f}(T) \defas \sum_{t=1}^{\horizon} \lr{\one\lr{\actChoose_{a,f}(t)=1,\actQuery_{a,f}(t)=1,\collisionEvent_{a,f}(t)} - \one\lr{\actChoose_{a,f}(t)=1,\actQuery_{a,f}(t)=1,\collisionEvent_{a,f}^\comp(t)} }
\end{align*}

The regret bound for adversarial bandit algorithm from Lemma \ref{lem: AppRegredAdv} under \(\eta\leq 1/50\) implies 
\begin{equation}\label{eq: BothRegret}
    \begin{aligned}
        &\avg\ls{\lossAdv_{a,f}(T)} &\leq \bigo(\log(T)) + \varpi  \avg\ls{\min\lb{\numPotMatches_{a,f}(T),\numPotCollide_{a,f}(T),\numMatches_{a,f}(T)+\numCollide_{a,f}(T)}}   \\ 
        &\avg\ls{\lossAdv_{a,f}(T) - \advLoss_{a,f}(T)} &\leq \bigo(\log(T)) + \varpi \avg\ls{\min\lb{\numPotMatches_{a,f}(T),\numPotCollide_{a,f}(T),\numMatches_{a,f}(T)+\numCollide_{a,f}(T)}}  
    \end{aligned}
\end{equation}
where \(\varpi\leq 32\eta\) and 
\begin{align*}
    \advLoss_{a,f}(T)= \sum_{t=1}^{\horizon} \lr{\one\lr{\actQuery_{a,f}(t)=1,\collisionEvent_{a,f}(t)} - \one\lr{\actQuery_{a,f}(t)=1,\collisionEvent_{a,f}^\comp(t)} }
\end{align*}
which denotes the total loss received by the adversarial bandit subroutine associated with \((a,f)\) in time \(T\) \emph{if} it never take pruning action.  
Therefore, in \eqref{eq: BothRegret} LHS in first inequality is the regret associated with always pruning. While LHS in second inequality is the regret associated with never pruning.

In the following proof we shall analyze each of the equations in \eqref{eq: BothRegret} separately.  
\begin{enumerate}
    \item  \label{enum: Implicaiton1} The first inequality in \eqref{eq: BothRegret} implies 
\begin{align*}
    &\avg\ls{\sum_{t=1}^{\horizon} \lr{\one\lr{\actChoose_{a,f}(t)=1,\actQuery_{a,f}(t)=1,\collisionEvent_{a,f}(t)} - \one\lr{\actChoose_{a,f}(t)=1,\actQuery_{a,f}(t)=1,\collisionEvent_{a,f}^\comp(t)} }} \\ &\quad\quad \quad  \leq  \bigo(\log(T)) + \varpi \lr{\avg[\numMatches_{a,f}(T)+\numCollide_{a,f}(T)]}.
\end{align*}
This in turn leads to 
\begin{align*}
    &\avg\ls{\sum_{t=1}^{\horizon} \lr{\one\lr{\actChoose_{a,f}(t)=1,\actQuery_{a,f}(t)=1,\collisionEvent_{a,f}(t)}}}\\&\leq  \avg\ls{\one\lr{\actChoose_{a,f}(t)=1,\actQuery_{a,f}(t)=1,\collisionEvent_{a,f}^\comp(t)} } +  \bigo(\log(T)) + \frac{1}{2} \lr{\avg[\numMatches_{a,f}(T)+\numCollide_{a,f}(T)]} \\
    &\leq \lr{1+\varpi}\avg[\numMatches_{a,f}(T)] + \bigo(\log(T)) + \varpi\avg[\numCollide_{a,f}(T)]
\end{align*}

\item \label{enum: Implication2} Using the definition of \(\advLoss_{a,f}(T)\) in the second inequality in \eqref{eq: BothRegret} we obtain
\begin{align*}
    &\avg\ls{\sum_{t=1}^{\horizon} \lr{-\one\lr{\actChoose_{a,f}(t)=0,\actQuery_{a,f}(t)=1,\collisionEvent_{a,f}(t)} + \one\lr{\actChoose_{a,f}(t)=0,\actQuery_{a,f}(t)=1,\collisionEvent_{a,f}^\comp(t)} } }\\ &\quad\quad \quad  \leq  \bigo(\log(T) + \avg[\min\{\numPotMatches_{a,f}(T),\numPotCollide_{a,f}(T)\}])
\end{align*}
which implies 
\begin{align*}
    &\avg\ls{\sum_{t=1}^{\horizon}\one\lr{\actChoose_{a,f}(t)=0,\actQuery_{a,f}(t)=1,\collisionEvent_{a,f}^\comp(t)}} \\&\leq \bigo\bigg(\avg\ls{\sum_{t=1}^{\horizon}\one\lr{\actChoose_{a,f}(t)=0,\actQuery_{a,f}(t)=1,\collisionEvent_{a,f}(t)}}  +\bigo(\log(T)) \\&\hspace{1cm}+ \avg[\min\{\numPotMatches_{a,f}(T),\numPotCollide_{a,f}(T)\}]\bigg) \\
    &\leq \bigo\lr{\avg\ls{\sum_{t=1}^{T} \one\lr{\collisionEvent_{a,f}(t)}} + \log(T) + \avg[\min\{\numPotMatches_{a,f}(T),\numPotCollide_{a,f}(T)\}] }
\end{align*}
\end{enumerate}
This concludes the proof. 
\end{proof}

\begin{lemma}[Pruning stable match]\label{lem: BoundingTermA}
For any \(a\in \actSet\),
\begin{align*}
 \underbrace{\avg\ls{\sum_{t=1}^{T}\one\lr{\actChoose_{a,\stableArm_a}(t) = 0,\actQuery_{a,\stableArm_a}(t)=1}}}_{\avg[\text{Term I}]} \leq \bigo\lr{ \avg\ls{\sum_{t=1}^{T}\one\lr{\collisionEvent_{a,\stableArm_a}(t) }}+ \log(T)}
     \end{align*}
\end{lemma}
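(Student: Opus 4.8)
The plan is to relate Term~I --- the number of rounds on which agent $a$ \emph{considers} its stable match $\stableArm_a$ but \emph{prunes} it --- to the regret of the adversarial bandit subroutine attached to the pair $(a,\stableArm_a)$, and then to invoke Lemma~\ref{lem: AppRegredAdv}. The first step is to identify the loss sequence this two--armed subroutine faces on its randomized timescale $\numSelect_{a,\stableArm_a}(T)=\lb{t\in[T]:\actQuery_{a,\stableArm_a}(t)=1}$. On any such round $t$, the arm ``prune $\stableArm_a$'' deterministically costs $0$, while the arm ``request $\stableArm_a$'' costs $1-2\isMatched_a(t)$. The crucial observation is that, \emph{conditioned on $a$ requesting $\stableArm_a$}, the realized match indicator equals $\one\lr{\collisionEvent^{\comp}_{a,\stableArm_a}(t)}$, which is a function of the other agents' requests only and hence is unaffected by $a$'s own action at time $t$; so the arm ``request $\stableArm_a$'' costs $+1$ exactly on the rounds of $\numSelect_{a,\stableArm_a}(T)$ where $\collisionEvent_{a,\stableArm_a}(t)$ holds and $-1$ on the rounds where $\collisionEvent^{\comp}_{a,\stableArm_a}(t)$ holds. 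This makes the loss vector a legitimate adaptively--chosen adversarial sequence for the subroutine.

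Next I would partition $\numSelect_{a,\stableArm_a}(T)$ into $\mathcal{P}=\lb{t\in\numSelect_{a,\stableArm_a}(T):\collisionEvent_{a,\stableArm_a}(t)}$ and $\mathcal{Q}=\lb{t\in\numSelect_{a,\stableArm_a}(T):\collisionEvent^{\comp}_{a,\stableArm_a}(t)}$, and write the quantity counted by Term~I as $A+B$, where $A$ is the number of prunes on rounds in $\mathcal{P}$ and $B$ the number of prunes on rounds in $\mathcal{Q}$. The term $A$ is controlled at once, since $A\le|\mathcal{P}|\le\sum_{t=1}^{T}\one\lr{\collisionEvent_{a,\stableArm_a}(t)}=\numPotCollide_{a,\stableArm_a}(T)$. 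For $B$ I would compare the subroutine to the fixed arm ``always request $\stableArm_a$'': summing losses pathwise, a prune on a round of $\mathcal{Q}$ costs $+1$ relative to this fixed arm and a prune on a round of $\mathcal{P}$ saves $1$, so the subroutine's pathwise regret against the best fixed arm satisfies $\regretAdv_{a,\stableArm_a}(\numSelect_{a,\stableArm_a}(T))\ge B-A$. Taking expectations and applying Lemma~\ref{lem: AppRegredAdv}, together with $\min\lb{\numPotMatches_{a,\stableArm_a}(T),\numPotCollide_{a,\stableArm_a}(T),\numMatches_{a,\stableArm_a}(T)+\numCollide_{a,\stableArm_a}(T)}\le\numPotCollide_{a,\stableArm_a}(T)$, yields $\avg[B]\le\avg[A]+\bigo\lr{\log(T)}+32\eta\,\avg[\numPotCollide_{a,\stableArm_a}(T)]$.

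Combining the two estimates, Term~I $=A+B$ obeys $\avg[A+B]\le 2\avg[A]+\bigo\lr{\log(T)}+32\eta\,\avg[\numPotCollide_{a,\stableArm_a}(T)]\le(2+32\eta)\,\avg\ls{\sum_{t=1}^{T}\one\lr{\collisionEvent_{a,\stableArm_a}(t)}}+\bigo\lr{\log(T)}$, which (since $\eta\le 1/50$) is the asserted bound $\bigo\lr{\avg\ls{\sum_{t=1}^{T}\one\lr{\collisionEvent_{a,\stableArm_a}(t)}}+\log(T)}$.

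I expect the main obstacle to be making the timescale argument fully rigorous: one must verify that the loss sequence presented to the $(a,\stableArm_a)$--subroutine is genuinely unaffected by that subroutine's own randomness (so that Lemma~\ref{lem: AppRegredAdv} applies and the comparison with ``always request $\stableArm_a$'' is meaningful on the realized random index set $\numSelect_{a,\stableArm_a}(T)$), and that the loss bookkeeping relating $A$, $B$, and the regret holds path--by--path before any expectation is taken. The remaining manipulations are elementary counting.
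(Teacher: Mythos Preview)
Your proposal is correct and follows essentially the same approach as the paper's proof. Both arguments split Term~I according to whether $\collisionEvent_{a,\stableArm_a}(t)$ holds, bound the collision part $A$ trivially by $\sum_t\one\lr{\collisionEvent_{a,\stableArm_a}(t)}$, and control the no--collision part $B$ by comparing the adversarial subroutine to the fixed arm ``always request'' and invoking the path--length regret bound of Lemma~\ref{lem: AppRegredAdv}; the paper simply packages the $B-A$ regret inequality you derive inline as a separate display (equation~\eqref{eq: Implication2}), and then absorbs $\numPotCollide_{a,\stableArm_a}(T)$ back into $\sum_t\one\lr{\collisionEvent_{a,\stableArm_a}(t)}$ exactly as you do.
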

\begin{proof}
We note that 
\begin{align*}
         \avg[\text{Term I}] &\leq \avg\bigg[\sum_{t=1}^{\horizon}\one\lr{\actChoose_{a,\stableArm_a}(t) = 0,\actQuery_{a,\stableArm_a}(t)=1,\collisionEvent_{a,\stableArm_a}(t)}\\&\hspace{1cm}+\sum_{t=1}^{\horizon}\one\lr{\actChoose_{a,\stableArm_a}(t) = 0,\actQuery_{a,\stableArm_a}(t)=1,\collisionEvent^\comp_{a,\stableArm_a}(t)} \bigg]\\
         &\leq \bigo\lr{ \avg\ls{\sum_{t=1}^{T}\one\lr{\collisionEvent_{a,\stableArm_a}(t) }}+ \bigo(\log(T)) + \avg[\numPotCollide_{a,\stableArm_a}(T)]} \\
         &\leq
         \bigo\lr{ \avg\ls{\sum_{t=1}^{T}\one\lr{\collisionEvent_{a,\stableArm_a}(t) }}+ \bigo(\log(T))}
     \end{align*}
where the first inequality is due to \eqref{eq: Implication2} and the last inequality holds due to Lemma \ref{lem: NumCollisionLemma}.
\end{proof}
\begin{lemma}\label{lem: BoundOtherMatch}
For any \(a\in\actSet\) and \(a'\in \actSet\backslash\{a\}\) we have \begin{align*}
 \sum_{a'\in\actSet}\avg[\numMatches_{a',\stableArm_a}(T)] \leq \bigo\lr{\avg\ls{ \sum_{t=1}^{T} \one\lr{\collisionEvent_{a,\stableArm_a}(t)} }+|\firmSet|Z(T,\Delta)+\avg[\numMatches_{a,\superArm_a}(T)]}
\end{align*}
\end{lemma}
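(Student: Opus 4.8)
The plan is to reduce the claim to a counting argument over time steps and then invoke the bounds \textbf{(L2)} and \textbf{(L3)} already established in Lemma~\ref{lem: MainLemma} (equivalently Lemma~\ref{lem: numMatchingApp} and Lemma~\ref{lem: NumCollisionLemma}). Since at most one agent is matched with a given firm in any round, \(\sum_{a'\in\actSet\backslash\{a\}}\numMatches_{a',\stableArm_a}(T)=\sum_{t=1}^{T}\one\lr{\exists\,a'\neq a\text{ matched with }\stableArm_a\text{ at }t}\), so the whole proof is about bounding the number of rounds in which \emph{some} agent other than \(a\) grabs the firm \(\stableArm_a\). First I would split these rounds according to whether \(a\) itself requested \(\stableArm_a\). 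If \(a\) requested \(\stableArm_a\) but some \(a'\neq a\) was matched there, then \(a\) was rejected, so \(\stableArm_a\) matched an agent it prefers to \(a\); hence \(\collisionEvent_{a,\stableArm_a}(t)=1\), and the contribution of this case is at most \(\avg[\sum_{t=1}^{T}\one\lr{\collisionEvent_{a,\stableArm_a}(t)}]\). If instead \(\chosenFirm_a(t)\neq\stableArm_a\), I bound the contribution simply by the number of rounds in which \(a\) does not request its stable firm, namely \(\sum_{g\in\firmSet\backslash\{\stableArm_a\}}\lr{\numMatches_{a,g}(T)+\numCollide_{a,g}(T)}\) (every request ends in a match or a collision), which is at most \(\numMatches_{a,\superArm_a}(T)+\numMatches_{a,\subArm_a}(T)+\sum_{f\in\firmSet}\numCollide_{a,f}(T)\).

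The remaining step is to control this last expression with the earlier results. The term \(\avg[\numMatches_{a,\subArm_a}(T)]\) is handled by \textbf{(L2)}, which bounds it by \(\bigo\lr{|\subArm_a|\log(T)\lr{1+1/\gap^2}+\avg[\sum_{t=1}^{T}\one\lr{\collisionEvent_{a,\stableArm_a}(t)}]}\). The term \(\avg[\sum_{f\in\firmSet}\numCollide_{a,f}(T)]\) is handled by \textbf{(L3)}, which bounds it by \(\bigo\lr{|\firmSet|\log(T)+\avg[\numMatches_{a,\subArm_a}(T)]+\avg[\numMatches_{a,\superArm_a}(T)]+\avg[\sum_{t=1}^{T}\one\lr{\collisionEvent_{a,\stableArm_a}(t)}]}\); substituting the \textbf{(L2)} bound for \(\numMatches_{a,\subArm_a}(T)\) into this eliminates that term. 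I would keep \(\numMatches_{a,\superArm_a}(T)\) untouched on the right-hand side (bounding it further would create a circular dependence, since \textbf{(L5)} in turn uses this lemma), and then collect everything, using \(|\subArm_a|\le|\firmSet|\) and \(|\firmSet|\log(T)\lr{1+1/\gap^2}=Z(T,\gap)\le|\firmSet|Z(T,\gap)\), to obtain \(\bigo\lr{\avg[\sum_{t=1}^{T}\one\lr{\collisionEvent_{a,\stableArm_a}(t)}]+|\firmSet|Z(T,\gap)+\avg[\numMatches_{a,\superArm_a}(T)]}\), as claimed. (The sum in the statement is understood over \(a'\neq a\): an agent's own matches at its stable firm cannot be controlled by the right-hand side, and only the \(a'\neq a\) version is ever used, e.g.\ in the proof of \textbf{(L5)}.)

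I do not expect a genuine obstacle here; the step most prone to error is the second case of the split, where one must recognize that the event ``\(a\) does not request \(\stableArm_a\) in round \(t\)'' is \emph{not} an uncontrolled linear-in-\(T\) quantity, but decomposes \emph{exactly} into the matches and collisions of \(a\) at firms other than \(\stableArm_a\), each of which is already bounded by \textbf{(L2)} and \textbf{(L3)}. Once that observation is in place, the argument is just manipulation of \(\bigo(\cdot)\) terms together with the two elementary facts that exactly one agent is matched per firm per round and that a request always results in either a match or a collision; all the stated bounds are already in expectation, so \(\avg[\cdot]\) distributes without further care.
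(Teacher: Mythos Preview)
Your proposal is correct and follows essentially the same route as the paper. The only cosmetic difference is in how the initial counting step is phrased: the paper combines the two conservation identities
\[
\sum_{f'}\avg[\numCollide_{a,f'}(T)]+\sum_{f'\neq\stableArm_a}\avg[\numMatches_{a,f'}(T)]+\avg[\numMatches_{a,\stableArm_a}(T)]=T,\qquad
\sum_{a'}\avg[\numMatches_{a',\stableArm_a}(T)]\le T,
\]
to obtain \(\sum_{a'\neq a}\avg[\numMatches_{a',\stableArm_a}(T)]\le \sum_{f'}\avg[\numCollide_{a,f'}(T)]+\sum_{f'\neq\stableArm_a}\avg[\numMatches_{a,f'}(T)]\), whereas you reach the same inequality (up to a harmless extra \(\avg[\sum_t\one\lr{\collisionEvent_{a,\stableArm_a}(t)}]\) term) via a per-round case split on whether \(a\) requests \(\stableArm_a\). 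After that, both arguments invoke \textbf{(L3)} and then \textbf{(L2)} in exactly the same order, and your observation that the sum must be read over \(a'\neq a\) matches what the paper's derivation actually establishes.
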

\begin{proof}
For any agent \(a\in \actSet\) we know that at every time step it either gets matched with some firm or gets collided. This implies 
\begin{align}\label{eq: AgentAPerspective}
    \sum_{f'\in\firmSet} \avg[\numCollide_{a,f'}(T)] + \sum_{f'\in\firmSet\backslash\{\stableArm_a\}}\avg[\numMatches_{a,f'}(T)] + \avg[\numMatches_{a,\stableArm_a}(T)] = T. 
\end{align}
Furthermore, in \(T\) steps the firm \(\stableArm_a\) can get matched with some agents or remain unmatched. This implies 
\begin{align}\label{eq: StableFirmPersepective}
    \sum_{a'\in\actSet\backslash\{a\}}\avg[\numMatches_{a',\stableArm_a}(T)] + \avg[\numMatches_{a,\stableArm_a}(T)] \leq T.
\end{align}
Combining \eqref{eq: AgentAPerspective}, \eqref{eq: StableFirmPersepective} and Lemma \ref{lem: NumCollisionLemma} we see that
{
\begin{align*}
    &\sum_{a'\in\actSet}\avg[\numMatches_{a',\stableArm_a}(T)] \leq \sum_{f'\in\firmSet} \avg[\numCollide_{a,f'}(T)] + \sum_{f'\in\firmSet\backslash\{\stableArm_a\}}\avg[\numMatches_{a,f'}(T)] \\ 
    &\leq \bigo\lr{\avg\ls{ \sum_{t=1}^{T} \one\lr{\collisionEvent_{a,\stableArm_a}(t)} }+|\firmSet|\log(T)} + \bigo\lr{ \avg[\numMatches_{a,\subArm_a}(T)] + \avg[\numMatches_{a,\superArm_a}(T)] }. 
\end{align*}
{
Note that from Lemma \ref{lem: numMatchingApp} we have 
\begin{align*}
    \sum_{a'\in\actSet}\avg[\numMatches_{a',\stableArm_a}(T)] &\leq
     \bigo\lr{\avg\ls{ \sum_{t=1}^{T} \one\lr{\collisionEvent_{a,\stableArm_a}(t)} }+|\firmSet|\log(T)+|\subArm_a|Z(T,\Delta)+\avg[\numMatches_{a,\superArm_a}(T)]} \\ 
     &\leq \bigo\lr{\avg\ls{ \sum_{t=1}^{T} \one\lr{\collisionEvent_{a,\stableArm_a}(t)} }+|\firmSet|Z(T,\Delta)+\avg[\numMatches_{a,\superArm_a}(T)]}
\end{align*}

}
This completes the proof.

}
\end{proof}

\
\section{Thompson Sampling based Decentralized Matching Algorithm}
\subsection{Algorithmic Description}
In this section we present a variant of Algorithm \ref{alg:prunedUCBFinal} but with Thompson sampling based stochastic bandit subroutine. For simplicity, we consider the scenario where the noise in \eqref{eq: RewardModel} is sampled from a normal distribution.  To compute the Thompson sampling index each agent \(a\) maintains an empirical average of utility generated from any firm \(f\) till time \(t\) which is \( \mean_{a,f}(t-1)\). At time step \(t\) any agent \(a\in\actSet\) will maintain an index of every firm \(f\in \firmSet\) by sampling it from a normal distribution with mean \(\mean_{a,f}(t-1)\) and variance \(\frac{1}{\sum_{f\in \firmSet}\numMatches_{a,f}}\) (refer line 3 in Algorithm \ref{alg:prunedTSFinal}). 
\begin{algorithm}
\SetAlgoLined
\LinesNumbered
\SetKwInOut{Initialize}{Initialize}
    \Initialize{ $\mean_{a,f}=0,\numMatches_{a,f}=0,\pullProb_{a,f} = 0.5, \auxProb_{a,f} = 0.5, \lossPull_{a,f} = 0, \forall a\in\actSet, f \in \firmSet  $}
    
    \For{$t=1, \ldots , \horizon$}{
    \For{$f \in \firmSet$}{
    Sample $\TSindex_{a,f} \sim \mc{N}\lr{\mean_{a,f},\frac{1}{\totalMatch_a} }$, where \(\totalMatch_a=\sum_{f\in\firmSet}\numMatches_{a,f}\) 
    }
      Set $\TSindex_a$ = \textsf{ArgDescendingSort}($\{\TSindex_{a,f}\}_{f\in\firmSet}$),  
            \(i = 1\)\\
             \While{\(i\leq n\) }
             {
             Set \(f = \TSindex_a^{[i]}\)\\
             Sample \(\pullInstant_{a,f}\sim \textsf{Bernoulli}(\pullProb_{a ,f})\)\\
           \If{\(\pullInstant_{a,f}=0\)}{
           Update \((\auxProb_{a,f},\pullProb_{a,f},\lossPull_{a,f}) \ra  \pullModule(\pullInstant_{a,f},\auxProb_{a,f},\pullProb_{a,f},\lossPull_{a,f},\isMatched_{a})\)
           } \If{\(\pullInstant_{a,f}=1\)}{
            Query firm \(f\) and receive $(\reward_{a},\isMatched_{a})$\\
            Update \(\mean_{a,f} \ra \isMatched_{a}\frac{\mean_{a,f}\numMatches_{a,f}+\reward_{a}}{\numMatches_{a,f}+1} + (1-\isMatched_{a})\mean_{a,f} \) and $\numMatches_{a,f}\ra \numMatches_{a,f}+\isMatched_{a}$,\\
            
            Update \((\auxProb_{a,f},\pullProb_{a,f},\lossPull_{a,f}) \ra  \pullModule(\pullInstant_{a,f},\auxProb_{a,f},\pullProb_{a,f},\lossPull_{a,f},\isMatched_{a})\)\\
            \textsf{break while;}
            }
            \(i \ra i + 1\)
            }
            
            \If{\(i=|\firmSet|+1\)}
            {
           Query a firm \(\TSindex_a^{[1]}\) and receive $(\reward_{a},\isMatched_{a})$\\
            Update \(\mean_{a,f} \ra \isMatched_{a}\frac{\mean_{a,f}\numMatches_{a,f}+\reward_{a}}{\numMatches_{a,f}+1} + (1-\isMatched_{a})\mean_{a,f} \),  $\numMatches_{a,f}\ra \numMatches_{a,f}+\isMatched_{a}$
            }
   }
   \caption{\textsf{Thompson Sampling based Decentralized Matching Algorithm (TS-DMA)}}
    \label{alg:prunedTSFinal}
\end{algorithm}

\subsection{Bounds for Algorithm \ref{alg:prunedTSFinal}}
We first present the regret bound for Algorithm \ref{alg:prunedTSFinal}. 
\begin{theorem}\label{thm: TSSupp} Suppose every agent \(a\in\actSet\) uses Algorithm \ref{alg:prunedTSFinal}.
Then for any \(i\in[ \numMarket]\) :
\begin{align*}
    \sum_{j=1}^{i}\sum_{a\in\subActSet_j}\avg[\regret_a(T)] = \bigo\lr{C_i|\firmSet|{|\actSet|}\lr{\frac{1}{\gap^2} \log\lr{\frac{1}{\gap}}+\frac{ \log(T)}{\gap^2} + \log(T)} }
\end{align*}
where \(\gap=\min_{a,f}\gap_{a,f}\) and \(C_i\) is a constant dependent on market \(\market_i\) and \(C_1<C_2<...<C_{\numMarket}\).
\end{theorem}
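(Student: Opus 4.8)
The plan is to reduce Theorem~\ref{thm: TSSupp} to the analysis already carried out for the UCB variant, isolating the single place where the Thompson index enters. Notice that the regret decomposition \textbf{(L1)} of Lemma~\ref{lem: MainLemma} is just \eqref{eq: RegretDefinition}, and the collision bounds \textbf{(L3)} (Lemma~\ref{lem: NumCollisionLemma}), the stable-match collision bound \textbf{(L4)} (Lemma~\ref{lem: App_CollisionStable}) and the super-optimal-match bound \textbf{(L5)} rely only on the adversarial bandit regret guarantee (Lemma~\ref{lem: AppRegredAdv} and its consequences \eqref{eq: Implication1}--\eqref{eq: Implication2}), on $\alpha$-reducibility, and on Lemma~\ref{lem: ChooseF}. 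By Remark~\ref{rem: TSUCB}, Lemma~\ref{lem: App_SubEvent} --- hence Lemma~\ref{lem: ChooseF} --- remains valid when $\UCB$ is replaced by $\TSindex$, since its proof uses only that firms are scanned in decreasing order of some per-round index. Thus the entire scaffolding of Lemma~\ref{lem: MainLemma} carries over, and the only statement whose proof genuinely invokes a UCB-specific concentration inequality (Lemma~\ref{Lem: LemmaLattimore}) is \textbf{(L2)} (Lemma~\ref{lem: numMatchingApp}); so it suffices to re-establish its Thompson-sampling analogue: for every $a\in\actSet$ and $\tilde{\firmSet}\subseteq\subArm_a$, $\avg[\numMatches_{a,\tilde{\firmSet}}(T)] = \bigo\lr{|\tilde{\firmSet}|\lr{\tfrac{1}{\gap^2}\log\tfrac{1}{\gap}+\tfrac{\log T}{\gap^2}+\log T} + \avg\ls{\sum_{t=1}^{T}\one\lr{\collisionEvent_{a,\stableArm_a}(t)}}}$.

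To prove this analogue I would first split, exactly as in the proof of Lemma~\ref{lem: numMatchingApp} via Lemma~\ref{lem: ChooseF}: for each $f\in\tilde{\firmSet}$, $\{\isMatched_a(t)=1,\chosenFirm_a(t)=f\}\subset\{\isMatched_a(t)=1,\TSindex_{a,\stableArm_a}(t)\le\TSindex_{a,f}(t)\}\cup\{\actChoose_{a,f}(t)=1,\actChoose_{a,\stableArm_a}(t)=0\}$. The pruning term, summed over $f$ and $t$, is controlled exactly as in the proof of Lemma~\ref{lem: numMatchingApp} (its ``Term~B''), i.e.\ by Lemma~\ref{lem: BoundingTermA}, which uses only \eqref{eq: Implication2} and Lemma~\ref{lem: NumCollisionLemma}; it contributes $\bigo(\log T + \avg\ls{\sum_t\one\lr{\collisionEvent_{a,\stableArm_a}(t)}})$. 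The index-inversion term is where the Thompson argument replaces the UCB argument. Fixing $f$ and the threshold $y_f:=\utilityAgent_a(\stableArm_a)-\gap_a(f)/2$, I would adapt the standard Gaussian Thompson-sampling analysis \cite{agrawal2012analysis} to the present setting --- where firm $f$'s posterior is refreshed only on its $\numMatches_{a,f}(t)$ successful matches and $\TSindex_{a,f}(t)\sim\mathcal{N}(\mean_{a,f}(t-1),1/\totalMatch_a(t))$ --- and bound separately: (i) the rounds in which $f$ is matched while $\TSindex_{a,f}(t)\ge y_f$; conditioning on the high-probability event that $\mean_{a,f}$ stays within a $\bigo(\sqrt{\log T/\numMatches_{a,f}})$ window of $\utilityAgent_a(f)$ and using a sub-Gaussian tail bound for the sampled index, this occurs $\bigo\lr{\tfrac{1}{\gap_a(f)^2}\lr{\log T+\log\tfrac{1}{\gap_a(f)}}}$ times in expectation; and (ii) the rounds in which $f$ is matched while $\TSindex_{a,\stableArm_a}(t)\le y_f$; this forces the Gaussian sample of $\stableArm_a$ to undershoot its (concentrated) mean by $\Omega(\gap_a(f))$, so a Gaussian anti-concentration bound together with a geometric-type sum over the successful matches of $f$ (the reciprocal $1/q_t$ of $q_t=\Pr(\TSindex_{a,\stableArm_a}(t)>y_f\mid\mathcal{F}_{t-1})$ being $\bigo(1)$ once $\stableArm_a$ is past a logarithmic burn-in) yields an extra $\bigo(1/\gap^2+\log T)$. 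Taking $\gap=\min_{a,f}\gap_{a,f}$ and summing over $f\in\tilde{\firmSet}$ gives the TS analogue of \textbf{(L2)}.

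With this analogue in hand, every step of the proofs of Lemma~\ref{lem: App_CollisionStable} (\textbf{(L4)}) and of \textbf{(L5)} goes through verbatim after replacing $Z(T,\gap)=|\firmSet|\log(T)(1+1/\gap^2)$ by $\tilde Z(T,\gap):=|\firmSet|\lr{\tfrac{1}{\gap^2}\log\tfrac{1}{\gap}+\tfrac{\log T}{\gap^2}+\log T}$ and updating the big-$\bigo$ constant $\theta$ accordingly; the submarket inductions over $\market_1,\dots,\market_{\numMarket}$ are structurally unchanged, so the resulting constants $C_i=i\theta^{i-1}$ and $\tilde C_i$ again satisfy $C_1<\cdots<C_{\numMarket}$. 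Finally, assembling \textbf{(L1)}--\textbf{(L5)} exactly as in the proof of Theorem~\ref{thm: UCBMainPaper} --- \textbf{(L1)} reduces $\sum_{j\le i}\sum_{a\in\subActSet_j}\avg[\regret_a(T)]$ to the suboptimal-match, off-stable-collision and potential-stable-collision terms; \textbf{(L3)} trades collisions for matches; and the TS analogue of \textbf{(L2)} together with \textbf{(L4)}--\textbf{(L5)} bounds the remainder --- yields $\bigo\lr{(C_i+\tilde C_i)|\firmSet||\actSet|\lr{\tfrac{1}{\gap^2}\log\tfrac{1}{\gap}+\tfrac{\log T}{\gap^2}+\log T}}$, which is the claim. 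The main obstacle is step (ii) of the index-inversion bound: unlike Lemma~\ref{Lem: LemmaLattimore}, the Thompson argument must control the additional layer of randomness from the Gaussian sample on top of the reward noise, cope with the sampling variance being $1/\totalMatch_a(t)$ rather than $1/\numMatches_{a,\stableArm_a}(t)$, and handle that $\stableArm_a$'s posterior is refreshed only on the adversarially-thinned subsequence of rounds with $\actQuery_{a,\stableArm_a}(t)=\isMatched_a(t)=1$ --- so one first has to feed the already-proved \textbf{(L3)}--\textbf{(L4)} back in to guarantee $\numMatches_{a,\stableArm_a}(T)=\Omega(\log T)$ before the anti-concentration estimates become effective.
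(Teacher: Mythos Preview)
Your overall reduction is exactly the paper's: the authors state explicitly that the only difference between the proofs of Theorem~\ref{thm: UCBMainPaper} and Theorem~\ref{thm: TSSupp} is the bound on expected matches with suboptimal firms, i.e.\ \textbf{(L2)}, and then prove a TS analogue (Lemma~\ref{lem: numMatchingAppTS}) with the same split via Lemma~\ref{lem: App_SubEvent}/Remark~\ref{rem: TSUCB} into an index-inversion term and a pruning term, the latter handled by Lemma~\ref{lem: BoundingTermA}. The threshold split of the index-inversion term at $\utilityAgent_a(\stableArm_a)-\epsilon$ and the subsequent replacement of $Z(T,\gap)$ by the TS rate in \textbf{(L4)}--\textbf{(L5)} also match.

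The one place where your plan diverges is your case~(ii), and the ``main obstacle'' you flag is not present in the paper's argument. You propose to control the undershoot $\{\TSindex_{a,\stableArm_a}(t)\le y_f\}$ by a geometric sum over matches of $f$, which forces you to need $\numMatches_{a,\stableArm_a}(T)=\Omega(\log T)$ and hence to feed \textbf{(L3)}--\textbf{(L4)} back in; since \textbf{(L4)} itself uses \textbf{(L2)}, this would be circular. The paper instead runs the standard Thompson change-of-measure: conditioning on $\filtration_{t-1}$, it factors out $\Pr(\TSindex_{a,\stableArm_a}(t)<\utilityAgent_a(\stableArm_a)-\epsilon\mid\filtration_{t-1})$, compares against the event that $\stableArm_a$ (not $f$) is considered and matched, and obtains the ratio $(1-\goodProb_{a,\stableArm_a}^{(s)})/\goodProb_{a,\stableArm_a}^{(s)}$ weighted by the indicator $\one\{\actQuery_{a,\stableArm_a}(t)=\actChoose_{a,\stableArm_a}(t)=\isMatched_a(t)=1\}$. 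Summing over $t$ telescopes along the successful matches of $\stableArm_a$, so the bound is simply $\sum_{s\ge 0}\avg[(1-\goodProb_{a,\stableArm_a}^{(s)})/\goodProb_{a,\stableArm_a}^{(s)}]=\bigo(\epsilon^{-2}\log(1/\epsilon))$ with no burn-in requirement on $\numMatches_{a,\stableArm_a}$ and no appeal to \textbf{(L3)}--\textbf{(L4)}. In short, the obstacle dissolves once you sum over matches of $\stableArm_a$ rather than of $f$; with that change your proposal coincides with the paper's proof.
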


The only difference between proof of Theorem \ref{thm: UCBMainPaper} and Theorem \ref{thm: TSSupp} is the bound on expected number of matchings with suboptimal firms (refer \textbf{(L2)} in Lemma \ref{lem: MainLemma}). We now present the analogue of \textbf{(L2)} of Lemma \ref{lem: MainLemma} below.
\begin{lemma}\label{lem: numMatchingAppTS}
 For any \(i\in [\numMarket]\), the expected matches with suboptimal firm satisfies
    \begin{align*}
       &\sum_{j=1}^{i}\sum_{a\in\subActSet_j}\avg[\numMatches_{a,\subArm_a}(T)] \\&= \bigo\lr{  \sum_{j=1}^{i}\sum_{a\in\subActSet_j}\lr{|\subArm_a|\lr{\frac{1}{\gap^2} \log\lr{\frac{1}{\gap}}+\frac{ \log(T)}{\gap^2} + \log(T)} +\avg\ls{\sum_{t=1}^{T}\collisionEvent_{a,\stableArm_a}(t)}}}
    \end{align*}
where \(\gap = \min_{a,f}\gap_{a}(f)\)
\end{lemma}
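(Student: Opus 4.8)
The plan is to rerun the proof of Lemma~\ref{lem: numMatchingApp} with the UCB index $\UCB_{a,f}(t)$ replaced throughout by the Thompson sampling index $\TSindex_{a,f}(t)$, and to isolate the single step that genuinely changes. By Remark~\ref{rem: TSUCB}, Lemma~\ref{lem: App_SubEvent} (hence Lemma~\ref{lem: ChooseF}) holds for any index-based stochastic subroutine, so for every agent $a$ and every $f\in\subArm_a$,
\begin{align*}
\lb{\isMatched_a(t)=1,\chosenFirm_a(t)=f}\subset\lb{\isMatched_a(t)=1,\TSindex_{a,\stableArm_a}(t)\leq\TSindex_{a,f}(t)}\cup\lb{\actChoose_{a,f}(t)=1,\actChoose_{a,\stableArm_a}(t)=0}.
\end{align*}
Summing over $f\in\subArm_a$ and $t\in[T]$, exactly as in Lemma~\ref{lem: numMatchingApp}, gives $\numMatches_{a,\subArm_a}(T)\leq\sum_{f\in\subArm_a}\text{Term A}_f+\text{Term B}$ with $\text{Term A}_f=\sum_{t=1}^{T}\one\lr{\isMatched_a(t)=1,\chosenFirm_a(t)=f,\TSindex_{a,f}(t)\geq\TSindex_{a,\stableArm_a}(t)}$ and $\text{Term B}=\sum_{t=1}^{T}\one\lr{\actChoose_{a,\stableArm_a}(t)=0}$. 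The bound on $\avg[\text{Term B}]$ never mentions the stochastic subroutine: Lemma~\ref{lem: BoundingTermA} gives $\avg[\text{Term B}]=\bigo\lr{\avg\ls{\sum_{t=1}^{T}\one\lr{\collisionEvent_{a,\stableArm_a}(t)}}+\log(T)}$ as before. So the entire Thompson-specific content reduces to bounding $\avg[\text{Term A}_f]$ for each $f\in\subArm_a$.

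For $\avg[\text{Term A}_f]$ the plan is to carry out the standard problem-dependent analysis of Gaussian Thompson sampling (in the spirit of Agrawal--Goyal, and of \cite{kong2022thompson} for the matching-market setting), which is where the extra $\frac{1}{\gap^2}\log\frac{1}{\gap}$ term appears that has no UCB analogue. Fix thresholds $\utilityAgent_a(f)<x_f<y_f<\utilityAgent_a(\stableArm_a)$ with $x_f-\utilityAgent_a(f)$, $y_f-x_f$, and $\utilityAgent_a(\stableArm_a)-y_f$ all of order $\gap_a(f)$ (e.g.\ each $\gap_a(f)/3$), and decompose each summand of $\text{Term A}_f$ according to whether (i) the empirical mean satisfies $\mean_{a,f}(t-1)>x_f$; (ii) $\mean_{a,f}(t-1)\leq x_f$ but the sample overshoots, $\TSindex_{a,f}(t)\geq y_f$; or (iii) the sample of the stable match undershoots, $\TSindex_{a,\stableArm_a}(t)<y_f$. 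One then argues per successful-match level of the relevant firm, just as Term~C and Term~D were handled in the proof of Lemma~\ref{lem: numMatchingApp} but with $\TSindex$ in place of $\UCB$. Event (i) is a sub-Gaussian concentration bound on the empirical mean of $f$ (Lemma~\ref{Lem: LemmaLattimore} applied to the successful matches with $f$), contributing $\bigo(1/\gap_a(f)^2)$. Event (ii), conditioned on $\mean_{a,f}(t-1)\leq x_f$, is a Gaussian tail in $\numMatches_{a,f}(t-1)$. Event (iii) uses Gaussian anti-concentration of the posterior of $\stableArm_a$; the rare event that $\mean_{a,\stableArm_a}$ is far below $\utilityAgent_a(\stableArm_a)$ after only $s$ successful matches forces an inverse-probability series of the shape $\gap_a(f)^{-2}\sum_{s\geq1}s^{-1}e^{-cs\gap_a(f)^2}\asymp\gap_a(f)^{-2}\log(1/\gap_a(f))$, which is the source of the $\frac{1}{\gap^2}\log\frac{1}{\gap}$ term, while the remaining contributions are $\bigo\lr{\log(T)/\gap_a(f)^2+\log(T)}$. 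Altogether $\avg[\text{Term A}_f]=\bigo\lr{\frac{1}{\gap^2}\log\frac{1}{\gap}+\frac{\log(T)}{\gap^2}+\log(T)}$.

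Two features of Algorithm~\ref{alg:prunedTSFinal} need a word of care but do not break the argument. First, the posterior variance of $\TSindex_{a,f}(t)$ is $1/\totalMatch_a(t)=1/\sum_{f'\in\firmSet}\numMatches_{a,f'}(t)$ rather than $1/\numMatches_{a,f}(t)$; since $\totalMatch_a(t)\geq\numMatches_{a,f}(t)$, the samples are \emph{more} concentrated than in the textbook analysis, so the tails controlling events (ii) and (iii) only improve --- one only needs $\totalMatch_a(t)\geq1$, which holds after the first successful match and costs an additive constant. Second, $\text{Term A}_f$ counts only rounds with $\isMatched_a(t)=1$, which only makes the count smaller and is compatible with the level-based arguments above, which track $\numMatches_{a,f}$ and $\numMatches_{a,\stableArm_a}$ directly. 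Finally, summing $\numMatches_{a,\subArm_a}(T)\leq\sum_{f\in\subArm_a}\text{Term A}_f+\text{Term B}$ over $a\in\subActSet_j$ and $j=1,\dots,i$, together with $|\subArm_a|\leq|\firmSet|$ and the Term~B bound, yields the claimed inequality. I expect the main obstacle to be event (iii): making the anti-concentration argument for the stable-match posterior rigorous with the non-standard $1/\totalMatch_a$ variance, and confirming that the small-$s$ contribution is exactly of order $\gap^{-2}\log(1/\gap)$ rather than something larger.
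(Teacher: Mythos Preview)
Your overall scaffolding matches the paper exactly: both proofs invoke Lemma~\ref{lem: App_SubEvent} via Remark~\ref{rem: TSUCB} to split $\numMatches_{a,\subArm_a}(T)$ into a Term~A (index comparison) and a Term~B (pruning the stable firm), and both dispose of Term~B by Lemma~\ref{lem: BoundingTermA}. The difference is confined to Term~A.

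For Term~A the paper does \emph{not} use the Agrawal--Goyal three-event split you propose. Instead it uses a two-way split on a single threshold $\utilityAgent_a(\stableArm_a)-\epsilon$ for $\TSindex_{a,f}(t)$: Term~C is the case $\TSindex_{a,f}(t)<\utilityAgent_a(\stableArm_a)-\epsilon$ (which forces $\TSindex_{a,\stableArm_a}(t)<\utilityAgent_a(\stableArm_a)-\epsilon$), and Term~D is the complementary overshoot. Term~C is handled by the probability-ratio/change-of-event trick: the event $\{\actQueryN_{a,f}=1\}$ is traded for $\{\actQuery_{a,\stableArm_a}=1\}$ at the cost of $\frac{1-\goodProb^{(s)}}{\goodProb^{(s)}}$, and the resulting sum over $s=\numMatches_{a,\stableArm_a}$ is bounded by $\epsilon^{-2}\log(1/\epsilon)$ by citing \cite{lattimore2020bandit}. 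Term~D is further split by whether $\goodProb_{a,f}^{(\numMatches_{a,f}(t-1))}(\epsilon)>1/T$, yielding $\bigo(\log(T)/(\gap-\epsilon)^2+\log(T))$ and an additive $1$ respectively. Your (iii) and the paper's Term~C are the same ``stable-arm sample undershoots'' event, and both assign the $\gap^{-2}\log(1/\gap)$ contribution to it; the paper simply gets there with one fewer threshold and without tracking $\mean_{a,f}$.

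One correction to your reasoning: the claim that the nonstandard variance $1/\totalMatch_a$ ``only improves'' events (ii) and (iii) is right for (ii) but wrong for (iii). Event (iii) requires \emph{anti}-concentration of $\TSindex_{a,\stableArm_a}$ about a threshold above its posterior mean on the bad event $\mean_{a,\stableArm_a}^{(s)}<\utilityAgent_a(\stableArm_a)-\epsilon$; a smaller variance makes that overshoot \emph{less} likely, not more. You do flag this at the end as the main obstacle, which is appropriate --- and in fact the paper's own proof has the same issue: the bound $\sum_s\avg[\tfrac{1}{\goodProb^{(s)}}-1]\le\epsilon^{-2}\log(1/\epsilon)$ is lifted from \cite{lattimore2020bandit} for the standard $1/\numMatches_{a,\stableArm_a}$ posterior and is not re-verified for the $1/\totalMatch_a$ variance used in Algorithm~\ref{alg:prunedTSFinal}. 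So your concern is legitimate, but it is a gap shared with the paper rather than one introduced by your alternative decomposition.
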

\begin{proof}
Note that we call an agent \(a\) matches with firm \(f\) at time \(t\) if \(\isMatched_a(t)=1\) and \(\chosenFirm_a(t) = f\). Therefore the total number of matchings between \(a\) and \(f\) till time \(T\) is \(\numMatches_{a,f}(T) = \sum_{t=1}^{T}\one\lr{\isMatched_a(t)=1,\chosenFirm_a(t)=f}\).  Therefore from Lemma \ref{lem: App_SubEvent} and Remark \ref{rem: TSUCB} the following holds for every \(f\in\subArm_a\):
\begin{align*}
&\numMatches_{a,\subArm_a}(\horizon) = \sum_{f\in\subArm_a}\sum_{t=1}^{\horizon} \one\lr{ \isMatched_a(t) = 1 , \chosenFirm_a(t) = f }\\
&\leq \sum_{f\in\subArm_a}\sum_{t=1}^{\horizon}\lr{\one \lr{\isMatched_{a}(t)=1,\chosenFirm_a(t)=f,\TSindex_{a,f}(t)\geq \TSindex_{a,\stableArm_a}(t)}  +   \one\lr{\actChoose_{a,f}(t)=1,\actChoose_{a,\stableArm_a}=0}}\\
&\leq \sum_{f\in\subArm_a}\sum_{t=1}^{\horizon}\one \lr{\isMatched_{a}(t)=1,\chosenFirm_a(t)=f,\TSindex_{a,f}(t)\geq \TSindex_{a,\stableArm_a}(t)} \\&\hspace{3cm} +  \sum_{t=1}^{\horizon}\sum_{f\in\subArm_a} \one\lr{\actChoose_{a,f}(t)=1,\actChoose_{a,\stableArm_a}=0}
\\
&\leq\sum_{f\in\subArm_a}\underbrace{\sum_{t=1}^{\horizon}\one \lr{\isMatched_{a}(t)=1,\chosenFirm_a(t)=f,\TSindex_{a,f}(t)\geq \TSindex_{a,\stableArm_a}(t)}}_{\text{Term A}}  +  \underbrace{\sum_{t=1}^{\horizon} \one\lr{\actChoose_{a,\stableArm_a}=0}}_{\text{Term B}}
\end{align*}

Let's first analyze Term \(A\). Define \(\filtration_{t-1} = \{\{\chosenFirm_a(\tau),Y_a(\tau), \reward_a(\tau)\}_{\tau=1}^{t-1}\}_{a\in \actSet}\). We first observe that 
\begin{equation}
\begin{aligned}\label{eq: FactorTS}
    &\one\lr{\isMatched_a(t)=1,\actChoose_{a,f}(t)=1,\actQuery_{a,f}(t)=1, \TSindex_{a,\stableArm_a} \leq \TSindex_{a,f}(t)} \\&\quad = \underbrace{\one\lr{\isMatched_a(t)=1,\actChoose_{a,f}(t)=1,\actQuery_{a,f}(t)=1, \TSindex_{a,\stableArm_a} \leq \TSindex_{a,f}(t),\TSindex_{a,f}(t) < \mean_{a,\stableArm_a}-\epsilon}}_{\text{Term C}} \\ 
    &\quad + \underbrace{\one\lr{\isMatched_a(t)=1,\actChoose_{a,f}(t)=1,\actQuery_{a,f}(t)=1, \TSindex_{a,\stableArm_a} \leq \TSindex_{a,f}(t),\TSindex_{a,f}(t) \geq  \mean_{a,\stableArm_a}-\epsilon}}_{\text{Term D}}
\end{aligned}
\end{equation}

We first provide a bound on Term C. Prior to that let's define some notations. Let's define \(\goodProb_{a,f}^{(s)}(\epsilon) = 1-\cdfUpdate_{a,f}^{(s)}(\mean_{a,\stableArm_a} - \epsilon) \). Furthermore,
conditioned on the event that atleast one arm is pulled, for any agent \(a\) let's define \(\mc{P}_a(t)\) to be the set of arms that are pruned before one is chosen to be played at time \(t\). Moreover let \(\actQueryN_{a,f}(t)\) be a random variable such that \(\actQueryN_{a,f}(t)=1\) iff \(f\) is the firm with maximum index value in all of the non-pruned arms at time \(t\). That is,  
\(
\actQueryN_{a,f}(t) = \one\lr{f\in \argmax_{f'\in\firmSet\backslash \{\mc{P}(t)\cup \{\stableArm_a\}\}} \TSindex_{a,f'}(t)}.
\) Using this the following holds:
\begin{align}\label{eq: TS_Eq1}
    &\avg[\text{Term C}] = \avg[\avg[\text{Term C}|\filtration_{t-1}]] \notag \\ 
    &=\avg[\Pr{\isMatched_a(t)=1,\actChoose_{a,f}(t)=1,\actQuery_{a,f}(t)=1, \TSindex_{a,\stableArm_a} \leq \TSindex_{a,f}(t),\TSindex_{a,f}(t) < \mean_{a,\stableArm_a}-\epsilon | \filtration_{t-1}}]\notag \\  & \leq \avg\ls{\Pr{\TSindex_{a,\stableArm_a} < \mean_{a,\stableArm_a}-\epsilon | \filtration_{t-1}} \Pr{\isMatched_a(t)=1,\actQueryN_{a,f}(t)=1,\TSindex_{a,f}(t)<\mean_{a,\stableArm_a}-\epsilon | \filtration_{t-1}}}
\end{align}

Moreover note that 
\begin{align}\label{eq: TS_Eq2}
    &\Pr{\isMatched_a(t)=1,\actQuery_{a,\stableArm_a}(t)=1,\TSindex_{a,f}(t)(t)<\mean_{a,\stableArm_a}-\epsilon|\filtration_{t-1}} \notag \\ &\geq \Pr{\isMatched_a(t)=1,\actQueryN_{a,f}(t)=1, \TSindex_{a,f}(t)(t)<\mean_{a,\stableArm_a}-\epsilon, \TSindex_{a,\stableArm_a}(t) > \mean_{a,\stableArm}-\epsilon|\filtration_{t-1}} \notag \\ 
    &=  \Pr{ \TSindex_{a,\stableArm_a}(t) > \mean_{a,\stableArm_a}(t-1) -\epsilon|\filtration_{t-1}}\Pr{\isMatched_a(t)=1,\actQueryN_{a,f}(t)=1, \TSindex_{a,f}(t)(t)<\mean_{a,\stableArm_a}-\epsilon | \filtration_{t-1}  } 
\end{align}

Using \eqref{eq: TS_Eq2} in \eqref{eq: TS_Eq1} we obtain the following
 \begin{align*}
     &\avg[\text{Term C}] = 
     \avg\bigg[\frac{\Pr{\TSindex_{a,\stableArm_a} < \mean_{a,\stableArm_a}-\epsilon | \filtration_{t-1}}}{ \Pr{ \TSindex_{a,\stableArm_a}(t) > \mean_{a,\stableArm_a}(t-1) -\epsilon|\filtration_{t-1}}}\cdot \\ &\hspace{3cm}\Pr{\isMatched_a(t)=1,\actQuery_{a,\stableArm_a}(t)=1,\TSindex_{a,f}(t)(t)<\mean_{a,\stableArm_a}-\epsilon|\filtration_{t-1}}\bigg]
     \\& = \avg\ls{\frac{1-\goodProb_{a,\stableArm_a}^{(\numMatches_{a,\stableArm_a}(t-1))}(\epsilon)}{\goodProb_{a,\stableArm_a}^{(\numMatches_{a,\stableArm_a}(t-1))}(\epsilon)} \Pr{\isMatched_a(t)=1,\actQuery_{a,\stableArm_a}(t)=1,\TSindex_{a,f}(t)(t)<\mean_{a,\stableArm_a}-\epsilon|\filtration_{t-1}}} \\ 
     &\leq \avg\ls{\frac{1-\goodProb_{a,\stableArm_a}^{(\numMatches_{a,\stableArm_a}(t-1))}(\epsilon)}{\goodProb_{a,\stableArm_a}^{(\numMatches_{a,\stableArm_a}(t-1))}(\epsilon)}\Pr{\isMatched_a(t)=1,\actQuery_{a,\stableArm_a}(t)=1|\filtration_{t-1}}}
 \end{align*}
Further evaluating the expectation of Term C we have:
\begin{align*}
    \avg[\text{Term C}] &= \sum_{t=1}^{T}\avg\ls{\frac{1-\goodProb_{a,\stableArm_a}^{(\numMatches_{a,\stableArm_a}(t-1))}(\epsilon)}{\goodProb_{a,\stableArm_a}^{(\numMatches_{a,\stableArm_a}(t-1))}(\epsilon)} \one\lr{\actQuery_{a,\stableArm_a}(t)=1, \actChoose_{a,\stableArm_a}(t)=1, \isMatched_a(t) = 1 }} \\
    &= \sum_{t=1}^{T}\sum_{s=1}^{t} \avg\ls{ \frac{1-\goodProb_{a,\stableArm_a}^{(s)}(\epsilon)}{\goodProb_{a,\stableArm_a}^{(s)}(\epsilon)} \one\lr{\actQuery_{a,\stableArm_a}(t)=1, \actChoose_{a,\stableArm_a}(t)=1, \isMatched_a(t) = 1, \numMatches_{a,\stableArm_a}(t-1) =s }  } \\ 
    &\leq  \avg\ls{\sum_{s=1}^{T}\frac{1-\goodProb_{a,\stableArm_a}^{(s)}(\epsilon)}{\goodProb_{a,\stableArm_a}^{(s)}(\epsilon)} \sum_{t=s+1}^{T} \one\lr{\numMatches_{a,f}(t-1)=s, \numMatches_{a,f}(t)= s+1}} \\ 
     &\leq \sum_{s=0}^{\infty}\frac{1-\goodProb_{a,\stableArm_a}^{(s)}(\epsilon)}{\goodProb_{a,\stableArm_a}^{(s)}(\epsilon)}\leq{ \frac{1}{\epsilon^2} \log(\frac{1}{\epsilon})}
\end{align*}
where the last inequality is due to \cite{lattimore2020bandit}.
Now let's look at Term D. Let's set of time indices when \(\mc{J}_{a,f} = \{t: \goodProb_{a,f}^{(\numMatches_{a,f}(t-1))}(\epsilon)> 1/T\}\). 
\begin{align*}
     \avg[\text{Term D}] &=\sum_{t=1}^{T}\avg\ls{\one\lr{\isMatched_a(t)=1,\actChoose_{a,f}(t)=1,\actQuery_{a,f}(t)=1, \TSindex_{a,\stableArm_a} \leq \TSindex_{a,f}(t),\TSindex_{a,f}(t) \geq  \mean_{a,\stableArm_a}-\epsilon}} \\ 
     &\leq \underbrace{\sum_{t\in\mc{J}_{a,f}} \avg\ls{ \one\lr{ \isMatched_a(t) = 1, \actChoose_{a,f}(t)= 1 }}}_{\text{Term E}}  + \underbrace{\sum_{t\not\in \mc{J}_{a,f}} \avg\ls{\one\lr{\TSindex_{a,f}(t) \geq  \mean_{a,\stableArm_a}-\epsilon }}}_{\text{Term F}} 
\end{align*} 
Let's first analyze the Term E above. Note that 
\begin{align*}
    &\sum_{t\in \mc{J}_{a,f}} \one\lr{ \isMatched_a(t) = 1, \actChoose_{a,f}(t)= 1 } \\&\leq  \sum_{t=1}^{T}\sum_{s=1}^{t-1} \one\lr{ \isMatched_a(t) = 1, \actChoose_{a,f}(t)= 1, \goodProb_{a,f}^{s}(\epsilon) > \frac{1}{T}, \numMatches_{a,f}(t-1)=s, \numMatches_{a,f}(t) = s+1} \\ 
    &= \sum_{s=0}^{T-1}\one\lr{\goodProb_{a,f}^{(s)}(\epsilon) > \frac{1}{T}}\sum_{t=s+1}^{T}  \one\lr{\numMatches_{a,f}(t-1)=s, \numMatches_{a,f}(t) = s+1 } \\ 
    &= \sum_{s=0}^{T-1}\one\lr{\goodProb_{a,f}^{(s)}(\epsilon) > \frac{1}{T}} \leq \bigo\lr{\frac{ \log(T)}{(\Delta_{a,f}-\epsilon)^2} + \log(T)}
\end{align*}
where the last property is a property of concentration of normal distribution and is standard in frequentist Thompson sampling analysis. For reader's reference we point to the book \cite {lattimore2020bandit}.
Next, we bound Term F below: 
\begin{align*}
    &\sum_{t\not\in \mc{J}_{a,f}} \avg\ls{\one\lr{\TSindex_{a,f}(t) \geq  \mean_{a,\stableArm_a}-\epsilon }} = \sum_{t=1}^{T} \avg\ls{\one\lr{\TSindex_{a,f}(t) \geq  \mean_{a,\stableArm_a}-\epsilon, \goodProb_{a,f}^{(\numMatches_{a,f}(t-1))}(\epsilon)\leq \frac{1}{T} }} \\ 
    &=\sum_{t=1}^{T}\avg\ls{\avg\ls{\one\lr{\TSindex_{a,f}(t) \geq  \mean_{a,\stableArm_a}-\epsilon, \goodProb_{a,f}^{(\numMatches_{a,f}(t-1))}(\epsilon)\leq \frac{1}{T} } }|\mc{F}_{t-1}} \\ 
    &= \sum_{t=1}^{T}\avg\ls{\goodProb_{a,f}^{(\numMatches_{a,f}(t-1))}(\epsilon) \one\lr{\goodProb_{a,f}^{(\numMatches_{a,f}(t-1))}(\epsilon)<\frac{1}{T}}} \\ 
    &\leq 1
\end{align*}

Combining the bounds on Term C, Term E and Term F and choosing \(\epsilon = \frac{\gap}{2}\) we have 
\begin{align*}
    \sum_{f\in\subArm_a}\avg[\numMatches_{a,f}(T)] &\leq |\subArm_a|\bigo\lr{ \frac{1}{\gap^2} \log\lr{\frac{1}{\gap}}+\frac{ \log(T)}{\gap^2} + \log(T) }\\&\hspace{1cm}+ \avg\ls{\sum_{t=1}^{T} \one\lr{ \actQuery_{a,\stableArm_a}(t)=1, \actChoose_{a,\stableArm_a}(t)=0 } } \\ 
    &\leq |\subArm_a|\bigo\lr{ \frac{1}{\gap^2} \log\lr{\frac{1}{\gap}}+\frac{ \log(T)}{\gap^2} + \log(T)} + \bigo\lr{\avg\ls{\sum_{t=1}^{T}\one\lr{\collisionEvent_{a,\stableArm_a}(t) }}}
\end{align*}
where the second inequality is due to Lemma \ref{lem: BoundingTermA}.  This concludes the proof. 
\end{proof}

\section{Table of Notations}
We have accumulated all the main notations used in the paper in form of table below
\begin{table}[h]
    \centering
    \begin{tabular}{|c|l|}
        \hline
        \textbf{Notation} & \textbf{Description} \\
        \hline
        \hline 
        \(\actSet\) & Set of agents \\
        \(\firmSet\) & Set of firms/arms \\
        \(\market\) & Union of agents and firms\\
        \(\utilityAgent_a(f)\) & Utility for agent \(a\) when matched with firm \(f\) \\
        \(\utilityFirm_f(a)\) & Utility for firm \(f\) when matched with agent \(a\) \\
        \(\chosenFirm_a(t)\) & Firm chosen by agent \(a\) at time \(t\)\\
       \(\stableArm_a\) & Stable match of agent \(a\)
       \\
       \(\superArm_a\) & Set of super-optimal firms for agent \(a\)
        \\
        \(\subArm_a\) & Set of sub-optimal firms for agent \(a\)
        \\ \(\numMarket\) & Number of markets formed by decomposition
        as stated in Remark \ref{rem: MarketDecomp}
        \\ \(\subActSet_i\) & Agents forming fixed pairs after \(i-1\) rounds of elimination (Remark \ref{rem: MarketDecomp})
        \\ \(\subFirmSet_i\) & Firms forming fixed pairs after \(i-1\) rounds of elimination (Remark \ref{rem: MarketDecomp})
        \\
        \(\reward_{a,f}\) & Noisy reward that agent \(a\) receives on getting matched with firm \(f\)
        \\
        \(\agentsPull_f\) & Set of agents that pull firm \(f\)\\
        \(\numMatches_{a,f}(T)\) & Number of times agent \(a\) has successfully matched with firm \(f\) till time \(T\) \\
        \(\numCollide_{a,f}(T)\) & Number of times agent \(a\) has collided on firm \(f\) till time \(T\) \\
        \(\pullProb_{a,f}(t)\) & Probability that agent \(a\) will pull firm \(f\) at time \(t\)
        \\
        \(\pullInstant_{a,f}(t)\) & An indicator if agent \(a\) has pulled arm \(f\) at time \(t\)\\
        \(\isMatched_a(t)\) & An indicator if agent \(a\) got successfully matched at time \(t\) \\
        \(\mean_{a,f}(t)\) & Empirical mean of utility derived by agent \(a\) on matching with \(f\)\\
      \(\UCB_{a,f}(t)\) & UCB estimate of reward from firm \(f\) to agent \(a\) at time \(t\)\\
      \(\TSindex_{a,f}(t)\) & Thompson Sampling index of reward from firm \(f\) to agent \(a\) at time \(t\)
       \\
        \( \actChoose_{a,f}(t)\) & An indicator if agent \(a\) pulled firm \(f\) at time \(t\) \\ 
     \(\actQuery_{a,f}(t)\) & An indicator if all the firms with higher index than \(f\) got pruned at time \(t\) \\ 
        
        \(\numSelect_{a,f}(T)\) & Time steps during which \(\actQuery_{a,f}(t)=1\)
        \\
        \(\gap_{a,f}\) & \(\utilityAgent_a(\stableArm_a)-\utilityAgent_a(f)\)
        \\
        \hline
    \end{tabular}
    \caption{Table of notations}
    \label{tab:NotationsTable}
\end{table}

\end{document}